\newtheorem{remark}{Remark}
\newcommand{\PreserveBackslash}[1]{\let\temp=\\#1\let\\=\temp}
\newcolumntype{C}[1]{>{\PreserveBackslash\centering}p{#1}}
\newcolumntype{R}[1]{>{\PreserveBackslash\raggedleft}p{#1}}
\newcolumntype{L}[1]{>{\PreserveBackslash\raggedright}p{#1}}
\def\X{{\mathbf{X}}}
\def\U{{\mathbf{U}}}
\def\V{{\mathbf{V}}}
\def\W{{\mathbf{W}}}
\def\B{{\mathbf{B}}}
\def\A{{\mathbf{A}}}
\def\D{{\mathbf{D}}}
\def\E{{\mathbf{E}}}
\def\Z{{\mathbf{Z}}}
\def\L{{\mathbf{L}}}
\def\S{{\mathbf{S}}}
\def\M{{\mathbf{M}}}
\def\Y{{\mathbf{Y}}}
\renewcommand{\arraystretch}{1.12}
\newcolumntype{C}[1]{>{\PreserveBackslash\centering}p{#1}}
\newtheorem{proof}{Proof}
\newtheorem{theorem}{Theorem}
\newtheorem{corollary}{Corollary}
\newtheorem{lemma}{Lemma}
\newtheorem{assumption}{Assumption}
\newtheorem{definition}{Definition}
\begin{document}
	
	\title{Beyond Low-rankness: Guaranteed Matrix Recovery via Modified Nuclear Norm}

	\author{ 
	Jiangjun Peng,~\IEEEmembership{Member,~IEEE},
	Yisi Luo,
    Xiangyong Cao, ~\IEEEmembership{Member,~IEEE},
    Shuang Xu,
	and Deyu Meng, ~\IEEEmembership{Member,~IEEE},
\thanks{Jiangjun Peng and Xu Shuang are with the School of Mathematics and Statistics, Northwestern Polytechnical University, Xi’an 710021, China. Email: pengjj@nwpu.edu.cn, xs@nwpu.edu.cn}
\thanks{Yisi Luo and Deyu Meng are with the School of Mathematics and Statistics and Ministry of  Education Key Lab of  Intelligent Networks and Network Security, Xi’an Jiaotong University, Xi’an 710049, China. Email: yisiluo1221@foxmail.com, dymeng@mail.xjtu.edu.cn.}
\thanks{Xiangyong Cao is with the School of Computer Science and Technology, Xi’an Jiaotong University, Xi’an 710049, China. Email: caoxiangyong@mail.xjtu.edu.cn.}}

	
	\maketitle
	%
	\begin{abstract}
The nuclear norm (NN) has been widely explored in matrix recovery problems, such as Robust PCA and matrix completion, leveraging the inherent global low-rank structure of the data. In this study, we introduce a new modified nuclear norm (MNN) framework, where the MNN family norms are defined by adopting suitable transformations and performing the NN on the transformed matrix. The MNN framework offers two main advantages: (1) it jointly captures both local information and global low-rankness without requiring trade-off parameter tuning; (2) under mild assumptions on the transformation, we provide exact theoretical recovery guarantees for both Robust PCA and MC tasks—an achievement not shared by existing methods that combine local and global information. Thanks to its general and flexible design, MNN can accommodate various proven transformations, enabling a unified and effective approach to structured low-rank recovery. Extensive experiments demonstrate the effectiveness of our method. Code and supplementary material are available at \url{https://github.com/andrew-pengjj/modified_nuclear_norm}.
	\end{abstract}
	\begin{IEEEkeywords} Global low-rankness, local smoothness, modified nuclear norm, exact theoretical recovery guarantees
	\end{IEEEkeywords}
	
	\IEEEpeerreviewmaketitle
	
\section{Introduction}
\label{intro}
The nuclear norm (NN), serving as a convex relaxation of the matrix rank, is widely applied to various matrix optimization problems due to its ability to effectively preserve the global low-rank property of data while possessing many desirable theoretical properties \cite{candes2012exact,candes2011robust,de2008tensor,recht2010guaranteed,gu2014weighted}. The data property is typically encoded by a regularizer term $ \mathcal{R}(\cdot) $ and incorporated into the following energy function (\ref{ener}):
\begin{equation}
\label{ener}
\min_{\X} E(\X,\Y) := L(\X,\Y) + \lambda  \mathcal{R}(\X),
\end{equation}
where $\X$ and $\Y$ represent the data to be recovered and the observed data, respectively. The function $ L(\X,\Y) $ is the loss function, which can be in the form of $ \ell_1 $ and $\ell_2$ norm. 
	\begin{figure}[!t]
		\centering
		\includegraphics[scale=0.45]{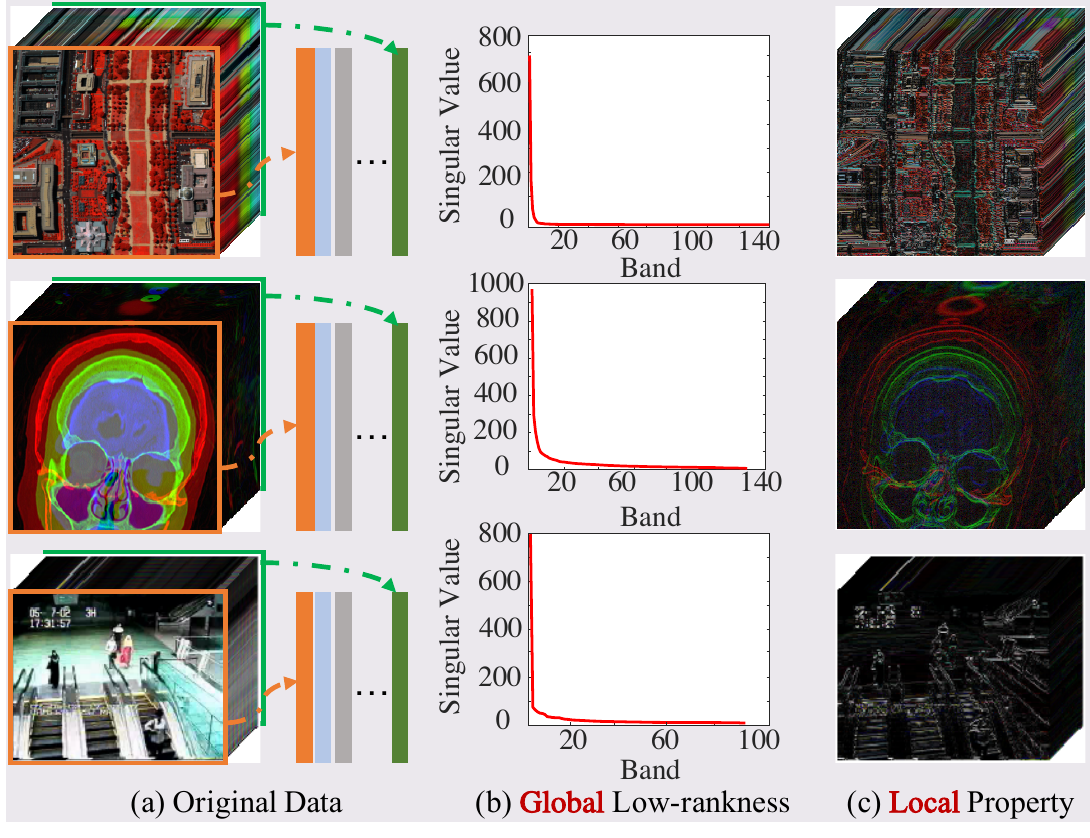}
		\caption{Demonstrations of the global low-rank property and local prior of data. (a) From top to bottom, showcasing the hyperspectral data, CT images and surveillance video data, as well as a matrix expanded along the third dimension; (b) Singular value curve of the mode-3 unfolding matrices of the data in (a); (c) Difference maps are obtained by applying difference operators. Difference maps have sparsity, which means that the original data has local smoothness.}
		\label{LRother}
	\end{figure}

However, in many cases, alongside the global low-rankness of data, local prior information also plays a crucial role, especially in data related to computer vision tasks such as hyperspectral image \cite{wang2017hyperspectral}, multispectral image \cite{xie2017kronecker}, CT images \cite{wang2023guaranteed}, and surveillance videos \cite{peng2022exact}, as shown in Fig. \ref{LRother}. Efficiently integrating global and local information to obtain a model with theoretical guarantees is an important issue for restoration tasks. Previous methods of integrating global and local information typically involve constructing multiple regularizer terms and combining them additively \cite{wang2017hyperspectral,he2015total,xue2021spatial,he2017total,shang2023hyperspectral,chen2023hyperspectral}. Taking global low-rankness and local smoothness properties as an example, the traditional fused regularization method is given by:
\begin{equation}
\label{nn_tv}
\mathcal{R}(\X):=\Vert \X \Vert_* + \beta \Vert \X \Vert_{\mbox{\scriptsize{TV}}},
\end{equation}
where $ \Vert \X \Vert_{\mbox{\scriptsize{TV}}} $ is the total variation (TV) regularizer to encode the sparsity of difference map of the original data \cite{rudin1992nonlinear}, and $\beta$ is the trade-off parameter need to be fine-tuned for each data. The TV regularizer often fails to capture the local smoothness of the data adequately. Therefore, in many cases, additional regularizers need to be incorporated in the manner (\ref{nn_tv}). This inevitably introduces the problem of selecting more balanced parameters. Besides, except for NN, many regularizers don't hold good theoretical properties. 

To tackle the above two drawbacks, we introduce a new modified nuclear norm (MNN) that can capture both the local information and global low-rankness of data. Specifically, the MNN is defined by adopting a suitable transformation and then performing the NN on the transformed matrix, i.e., 
\begin{equation}
\label{mnn_def}
\Vert \X \Vert_{\scriptsize{\mbox{MNN}}}:=\Vert \mathcal{D}(\X) \Vert_{*}.
\end{equation}
Compared to NN, MNN (\ref{mnn_def}) can concurrently capture local information and global low-rankness of data. Specifically, the transformation $\mathcal{D}(\cdot)$ exploits useful local correlation information of the matrix and performing the NN on the transformed matrix could finely utilize such local correlation based on the compatibility of norms. 

Specifically, the proposed MNN framework offers two key advantages over existing methods that integrate global and local information. First, MNN eliminates the need for parameter tuning required in additive manner (\ref{nn_tv}). Second, under mild transformation assumptions, MNN provides exact recovery guarantees for two typical matrix applications: Robust PCA and Matrix Completion (MC). Building on theoretical tools related to nuclear norms \cite{candes2012exact,candes2011robust,chen2015incoherence,candes2010matrix,negahban2012restricted,shahid2015robust}, the MNN framework leverages these tools to offer theoretical guarantee. It incorporates various classical transformations, such as the difference operator \cite{peng2022exact,wang2023guaranteed,liu2023tensor}, the Sobel operator \cite{kanopoulos1988design}, and the Laplacian operators \cite{wang2007laplacian}, enabling the integration of diverse local information. In summary, the contributions of this paper are:

\textbf{Modeling}: We propose the MNN framework, which can simultaneously exploit global and local information of data by performing the NN on a transformed matrix that encodes local correlations through a suitable transformation operator without trade-off parameters. 

\textbf{Theory}: Under mild conditions, we prove the exact recoverability theory of MNN on two types of problems: Robust PCA and MC. Although the theoretical bounds are not improved compared to NN, from the perspective of embedding local smoothness, MNN provides a unified recoverability theory framework for integrating low-rank and multi-layer local information priors. Extensive experiments validate the efficacy of the MNN framework. 

\textbf{Operator}: Classical operators are introduced into the MNN framework to characterize local smoothness. This paper demonstrates that for low-rank image data, the difference, Sobel, and Laplacian operators can be directly embedded into MNN to improve model's performance. In particular, Laplacian operators can provide richer local information than the widely used first-order differences operator.

\section{Modified Nuclear Norm}
\label{mnn}
In this part, we will give the forms and theoretical results about the MNN-induced Robust PCA and MC models.  
\subsection{Motivations}
\label{analysis_mnn}
The previous global and local information fusion model \cite{wang2017hyperspectral,he2015total,xue2021spatial,he2017total,peng2022fast,peng2020enhanced} was characterized by constructing multiple regularizers and summing them up, as follows:
\begin{equation}
\label{tv_nn}
\sum_{i=1}^n \tau_i \mathcal{R}_i(\X),
\end{equation}
where $ \mathcal{R}_i(\X) (i=1,\cdots,n) $ are regularizers, such as the nuclear norm (NN) and total variation (TV) regularizer in the form (\ref{nn_tv}), and $\tau_i \) are trade-off parameters, manner (\ref{tv_nn}) is intuitive and effective. However, it suffers from two key issues: the challenge of parameter selection among multiple regularizers and the lack of theoretical recoverability. To address these limitations, inspired by the norm compatibility and exact recovery guarantees of the nuclear norm, we propose the MNN defined in Eq. (\ref{mnn_def}). As demonstrated in Remark \ref{remark_mnn}, Eq. (\ref{mnn_def}) effectively models both global and local information, eliminating the need for hyperparameter tuning.

\begin{remark}
\label{remark_mnn} MNN (\ref{mnn_def}) can simultaneously encode global and local information based on two aspects.

\textbf{1) MNN could capture global low-rankness}. We assume that $\mathcal{D}(\cdot)$ can be reformulated as a full-rank matrix-induced linear operator, i.e., there exists a full-rank matrix $\A$ such that
\begin{equation}
\mbox{rank}(\A\X) = \mbox{min}(\mbox{rank}(\A),\mbox{rank}(\X)) = \mbox{rank}(\X).
\end{equation}
Based on the above derivation, it can be inferred that the low-rankness of $\mathcal{D}(\X)$ stems from the low-rankness of $\X$, 
i.e., minimizing the rank of $\mathcal{D}(\X)$ is equivalent to minimizing the rank of $\X$. Hence, MNN can characterize the global low-rankness of the data.

\textbf{2) MNN could capture local information}. According to the norm compatibility theorem, we have
\begin{equation}
\label{nn_norm}
\Vert \mathcal{D}(\X) \Vert_F \leq \Vert \mathcal{D}(\X) \Vert_*\leq \Vert \mathcal{D}(\X) \Vert_1,
\end{equation}
holds for all transformation operators. Thus, it can be inferred that minimizing $\Vert \mathcal{D}(\X) \Vert_*$ can simultaneously minimize both $\Vert \mathcal{D}(\X) \Vert_F$ and $\Vert \mathcal{D}(\X) \Vert_1$. If an appropriate transformation operator $\mathcal{D}(\cdot)$ is chosen so that $\mathcal{D}(\X)$ can capture 
useful local information of data (e.g., edge structures), then MNN can effectively utilize such local information based on Eq. (\ref{nn_norm}). For example, when $\mathcal{D}(\cdot)$ is chosen as the first-order difference operator $\nabla(\cdot)$, $\Vert \nabla(\X)\Vert_F$ and $\Vert \nabla(\X)\Vert_1$ correspond to isotropic \cite{rudin1992nonlinear} and anisotropic \cite{he2017total} TV regularizer, respectively. Therefore, our MNN can then characterize the local smoothness of data \footnote{If $\mathcal{D}(\cdot)$ is set as identity mapping, $\Vert \mathcal{D}(\X) \Vert_*$ degenerates to  $\Vert \X\Vert_*$. Since $\Vert \X \Vert_F$ and $\Vert \X \Vert_1$ cannot encode local structural information of the data, $\Vert \X \Vert_*$ can only characterize the global low rankness.}.
\end{remark}

Indeed, several works such as Correlated Total Variation (CTV \cite{wang2023guaranteed,peng2022exact,liu2023tensor}) can be seen as a special case of our MNN by setting the transformation operator as the first-order difference. 
\begin{remark}
Unlike the first-order difference in CTV, which only captures adjacent similarities, the proposed MNN framework accommodates operators with larger receptive fields, enabling more effective modeling of local smoothness across broader regions. What's more, MNN also removes the need for tuning the parameters between multiple regularization terms. While combining several regularizers improves interpretability, it often requires careful hyperparameter tuning—poor choices can yield suboptimal results.
\end{remark}

\subsection{Theoretical Guarantees}
Next, we establish the exact recovery guarantees of MNN-induced RPCA and MC models.

\subsubsection{Models}
Robust PCA \cite{candes2011robust} is a problem aiming at accurately separating a low-rank matrix $\X_0\in \mathbb{R}^{n_1\times n_2}$ and a sparse matrix $\S_0$ from the observed data matrix $\M=\X_0+\S_0$. By using MNN and $\ell_1$ norms to respectively encode the global low-rankness and local priors of low-rank matrices, as well as the sparsity of sparse matrices, we can derive the following MNN-RPCA model:
\begin{equation}
\label{mnn_rpca}
\min_{\X,\S} \Vert \mathcal{D}(\X) \Vert_* +\lambda \Vert \S \Vert_1, s.t. \ \M = \X+\S.
\end{equation}
MC \cite{candes2012exact} is a problem aiming to accurately infer the clean data $\X_0$ from a limited set of observed data $\M$ (the support set denoted as $\Omega$). Using MNN, we can obtain the following MNN-MC model:
\begin{equation}
\label{mnn_mc}
\min_{\X} \Vert \mathcal{D}(\X) \Vert_*, s.t. \ \mathcal{P}_{\Omega}(\M) = \mathcal{P}_{\Omega}(\X),
\end{equation}
where $ \mathcal{P}_{\Omega}(\cdot) $ is a mapping operator. If $(i,j)\in \Omega$, then $\mathcal{P}_{\Omega}(\X_{ij}) = \X_{ij}$; otherwise, it is set to 0. 

In our MNN-based model, the single regularization term simultaneously captures both global and local information, which avoids the need for carefully tuning the trade-off parameter $\tau_i$ required by additive model-based methods (\ref{tv_nn}).

\subsubsection{Assumptions}
Before giving the theorems, we need three mild assumptions.

\begin{assumption}[Incoherence Condition]
\label{assumption1}
For the low-rank matrix $\X_0 \in \mathbb{R}^{n_1\times n_2}$ with rank $r$, it follows the incoherence condition with parameter $ \mu $, i.e.,
\begin{equation}
\begin{split}
\max_{k} & \| \U^* e_k\| \leq \frac{\mu r}{n_1}, \max_{k} \| \V^* e_k\| \leq \frac{\mu r}{n_2},  \\
& \| \U\V^* \|_{\infty} \leq \sqrt{\frac{\mu r}{n_1n_2}},
\end{split}
\end{equation}
where $\U \in \mathbb{R}^{n_1\times r}$ and $\V\in \mathbb{R}^{n_2\times r}$ are obtained from the singular vector decomposition of $\X_0$ and $ e_k $ is the unit orthogonal vector.
\end{assumption}

Incoherence condition is a widely used assumption for low-rank recovery problems \cite{candes2011robust,chen2015incoherence,candes2007sparsity} to control the dispersion of the elements of the low-rank matrix. If the data is a rank-r identity matrix, since the observed values are likely to be zero, it is difficult to infer the non-zero elements, so we need this condition to keep the data away from the identity matrix.

The random distribution assumption for $\S_0$ and the normalization assumption of $\mathcal{D}(\cdot)$ are as follows:
\begin{assumption}[Random Distribution]
\label{assumption2}
For the sparse term $\S_0$, its support $ \Omega $ is chosen uniformly among all sets of cardinality $ m $, and the signs of supports are random, i.e.
\begin{equation}
\begin{split}
\mathbb{P}[(\S_0)_{i,j}>0 | (i,j)  \in \Omega] & = \mathbb{P}[(\S_0)_{i,j}\leq 0 | (i,j) \in \Omega] \\
& =0.5.
\end{split}
\end{equation}
\end{assumption}
\begin{assumption}[Normalization]
\label{assumption3}
The Frobenius norm of the linear transformation $\mathcal{D}(\cdot)$ in Eq. (\ref{nn_norm}) is one.
\end{assumption}
Assumption \ref{assumption3} ensures that the elements of the transformed data $\mathcal{D}(\X)$ remain bounded, allowing key inequalities in the dual verification to hold. If the Frobenius norm of $\mathcal{D}(\cdot)$ is not one, normalization can be used to satisfy Assumption \ref{assumption3}.

\subsubsection{Main Results}
Based on Assumptions \ref{assumption1}-\ref{assumption3}, we can derive that:
\begin{theorem}[MNN-RPCA Theorem]
\label{theorem_main3} 
Suppose that $\mathcal{D}(\X_0) \in \mathbb{R}^{n_1\times n_2}$, $\S_0$ and $\mathcal{D}(\cdot)$ obey Assumptions \ref{assumption1}-\ref{assumption3}, respectively. Without loss of generality, suppose $ n_1 \geq n_2 $. Then, there is a numerical constant $ c>0 $ such that with probability at least $ 1-cn_1^{-10} $(over the choice of support of $ \S_0 $), the MNN-RPCA model (\ref{mnn_rpca}) with $ \lambda = 1/(\sqrt{n_1})$ is exact, i.e., the solution $ (\hat{\X},\hat{\S})=(\X_0,\S_0)$, provided that
\begin{equation}
\label{pho_r_mnn}
\mbox{rank}(\X_0) \leq \rho_r n_{2} \mu^{-1}(\log n_1)^{-2}, ~ \mbox{and} ~ m \leq \rho_s n_1n_2,
\end{equation}
where $ \rho_r $ and $ \rho_s $ are some positive numerical constants, and $m$ is the number of the support set of $\S_0$.
\end{theorem}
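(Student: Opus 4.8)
The plan is to transplant the dual-certificate argument for Robust PCA in \cite{candes2011robust} into the transformed domain. Write $\mathcal{D}$ as the operator induced by the invertible matrix $\A$ of Remark~\ref{remark_mnn}, so that $\mathcal{D}^{*}$ is invertible as well, $\mathrm{rank}(\mathcal{D}(\X_0))=\mathrm{rank}(\X_0)=r$, and the incoherence (Assumption~\ref{assumption1}) is imposed directly on $\mathcal{D}(\X_0)$; let $\mathcal{D}(\X_0)=\U\Sigma\V^{*}$ be its compact SVD with tangent space $\mathbf{T}$, and put $g(\X):=\Vert\mathcal{D}(\X)\Vert_{*}$, $h(\S):=\lambda\Vert\S\Vert_{1}$. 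Since
\[
\partial g(\X_0)=\{\mathcal{D}^{*}(\U\V^{*}+\W):\W\in\mathbf{T}^{\perp},\ \Vert\W\Vert\le 1\},
\]
\[
\partial h(\S_0)=\{\lambda(\mathrm{sgn}\,\S_0+\F):\mathcal{P}_{\Omega}\F=0,\ \Vert\F\Vert_{\infty}\le 1\},
\]
and the single constraint $\M=\X+\S$ forces a common Lagrange multiplier, exact recovery reduces to producing one matrix $\Lambda$ lying in both subdifferentials, i.e.\ $\Lambda=\mathcal{D}^{*}(\U\V^{*}+\W)=\lambda(\mathrm{sgn}\,\S_0+\F)$ with $\Vert\W\Vert<1$, $\Vert\F\Vert_{\infty}<1$, together with an injectivity condition.

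\noindent\textbf{Deterministic reduction.} First I would record the sufficiency step. For any feasible $(\X_0+\mathbf{H},\S_0-\mathbf{H})$, two subgradient inequalities and the substitution $\mathcal{D}^{*}(\U\V^{*})-\lambda\,\mathrm{sgn}\,\S_0=\lambda\F-\mathcal{D}^{*}\W$ bound the objective gap $\Delta$ at that point by
\[
\Delta\ \ge\ (1-\Vert\W\Vert)\,\Vert\mathcal{P}_{\mathbf{T}^{\perp}}\mathcal{D}(\mathbf{H})\Vert_{*}+\lambda(1-\Vert\F\Vert_{\infty})\,\Vert\mathcal{P}_{\Omega^{\perp}}\mathbf{H}\Vert_{1}\ \ge\ 0 ,
\]
so $\Delta=0$ forces $\mathcal{D}(\mathbf{H})\in\mathbf{T}$ and $\mathrm{supp}(\mathbf{H})\subseteq\Omega$; the injectivity claim $\{\mathbf{H}:\mathcal{D}(\mathbf{H})\in\mathbf{T},\ \mathrm{supp}(\mathbf{H})\subseteq\Omega\}=\{0\}$ then gives uniqueness. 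I would prove this claim the way standard RPCA proves $\Vert\mathcal{P}_{\mathbf{T}}\mathcal{P}_{\Omega}\Vert<1$: controlling $\Vert\mathcal{P}_{\mathbf{T}}\,\mathcal{D}\,\mathcal{P}_{\Omega}\Vert$ below the least singular value of $\mathcal{D}$ restricted to $\Omega$-supported matrices (positive and $O(1)$ since $\mathcal{D}$ is invertible and normalized, Assumption~\ref{assumption3}), using the random support (Assumption~\ref{assumption2}) and the incoherence of $\mathcal{D}(\X_0)$. As in \cite{candes2011robust}, one actually only needs $\Vert\W\Vert\le 1/2$ and $\Vert\F\Vert_{\infty}\le 1/2$ together with the usual certificate refinement, which carries over unchanged.

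\noindent\textbf{Certificate construction and verification.} With $\lambda=1/\sqrt{n_1}$, I would build $\Lambda=\mathcal{D}^{*}\mathbf{Y}$, $\mathbf{Y}=\mathbf{Y}_{L}+\mathbf{Y}_{\Gamma}$, splitting the random support of Assumption~\ref{assumption2} into batches: $\mathbf{Y}_{L}$ is the least-squares part enforcing $\mathcal{P}_{\Omega}(\mathcal{D}^{*}\mathbf{Y}_{L})=\lambda\,\mathrm{sgn}\,\S_0$ while staying small on $\mathbf{T}$, and $\mathbf{Y}_{\Gamma}$ is the golfing-scheme iterate driving $\mathcal{P}_{\mathbf{T}}\mathbf{Y}\to\U\V^{*}$ with a geometrically shrinking residual. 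Verifying $\Vert\mathcal{P}_{\mathbf{T}^{\perp}}\mathbf{Y}\Vert<1$ and $\Vert\mathcal{P}_{\Omega^{\perp}}(\mathcal{D}^{*}\mathbf{Y})\Vert_{\infty}<\lambda$ then reproduces the concentration estimates of \cite{candes2011robust} for $\mathcal{P}_{\mathbf{T}}$ against random sampling, each now sandwiching a $\mathcal{D}$ or $\mathcal{D}^{*}$; Assumption~\ref{assumption3} forces the relevant operator norm and the $2\!\to\!\infty$ and $\infty\!\to\!\infty$ norms of $\mathcal{D}$ and $\mathcal{D}^{*}$ to be $O(1)$, so the success probability $1-cn_1^{-10}$ and the thresholds (\ref{pho_r_mnn}) come out identical to those of NN-RPCA, with $c,\rho_r,\rho_s$ now allowed to depend on the conditioning of $\mathcal{D}$.

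\noindent\textbf{Main obstacle.} The hard part will be exactly this coupling through $\mathcal{D}^{*}$: in ordinary RPCA the nuclear-norm and $\ell_1$ subgradients share coordinates, so the identity $\U\V^{*}+\W=\lambda(\mathrm{sgn}\,\S_0+\F)$ is entrywise and transparent, whereas here $\mathcal{D}^{*}$ mixes entries and one must show (i) that the golfing iterates, after applying $\mathcal{D}^{*}$, still have $\ell_{\infty}$ norm below $\lambda$ on $\Omega^{c}$, and (ii) that the least-squares term $\mathbf{Y}_{L}=(\mathcal{D}^{*})^{-1}(\cdot)$, whose argument is supported on $\Omega$, keeps a controllably small $\mathcal{P}_{\mathbf{T}^{\perp}}$-component. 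Both reduce to quantifying how the normalized operator $\mathcal{D}$ spreads a sparse support relative to the incoherent tangent space $\mathbf{T}$, with the correct $\mu,r,n_1,\log n_1$ dependence rather than a crude $O(1)$; once these estimates are in place the argument closes along the lines of \cite{candes2011robust}.
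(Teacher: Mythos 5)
Your plan follows essentially the same route as the paper's proof: reduce to a dual-certificate condition via the subgradient inequality for $\Vert\mathcal{D}(\cdot)\Vert_*+\lambda\Vert\cdot\Vert_1$, then construct the certificate as a golfing-scheme component (driving $\mathcal{P}_{T}\mathbf{Y}\to\U\V^*$) plus a least-squares component handling $\mathrm{sgn}(\S_0)$ on $\Omega$, and finally use the normalization of $\mathcal{D}$ (Assumption 3) to absorb the extra factors of $\A^T$ in the spectral, Frobenius, and $\ell_\infty$ bounds, exactly as in Appendix B. The obstacle you flag — controlling the $\ell_\infty$-type norms of $\mathcal{D}^*$ applied to the golfing iterates and the least-squares term — is precisely the point where the paper invokes Assumption 3, so your outline and the paper's argument coincide in both structure and the key technical steps.
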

\begin{theorem}[MNN-MC Theorem]
\label{theorem_main4}
Suppose that $\mathcal{D}(\X_0) \in \mathbb{R}^{n_1\times n_2}$ and $\mathcal{D}(\cdot)$ obey Assumptions \ref{assumption1} and \ref{assumption3}, $\Omega\sim\mbox{Ber}(p)$ and $ m$ is the number of $\Omega $, where $\mbox{Ber}(p)$ represents the Bernoulli distribution with $p$. Without loss of generality, suppose $ n_1 \geq n_2 $. Then, there exist universal constants $c_0, c_1>0$ such that $\X_0$ is the unique solution to MNN-MC model (\ref{mnn_mc}) with probability at least $1-c_1n_1^{-3}\log n_1$, provided that
\begin{equation}\label{eq.17_mnn}
m\geq c_0\mu r n_1^{5/4}\log(n_1).
\end{equation}
\end{theorem}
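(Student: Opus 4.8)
My plan is to recast MNN-MC as a matrix-completion problem for $\Y_0:=\mathcal{D}(\X_0)$ with a transformed observation operator, and then adapt the dual-certificate analysis of Cand\`es--Recht for ordinary matrix completion \cite{candes2012exact,candes2010matrix}. By Remark~\ref{remark_mnn}, $\mathcal{D}(\cdot)$ is induced by a full-column-rank matrix and is hence injective, so $\mathrm{rank}(\Y_0)=\mathrm{rank}(\X_0)=r$, the row spaces of $\Y_0$ and $\X_0$ coincide, and the rank-$r$ tangent space $T$ at $\Y_0$ equals $\mathcal{D}(T_0)$ where $T_0$ is the tangent space at $\X_0$; Assumption~\ref{assumption1} (applied to $\Y_0$) controls the singular subspaces $\U,\V$ of $\Y_0$. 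Writing a feasible competitor as $\X_0+\mathbf{H}$ with $\mathcal{P}_\Omega(\mathbf{H})=0$ and applying the subgradient inequality for $\X\mapsto\|\mathcal{D}(\X)\|_*$ at $\X_0$ gives $\|\mathcal{D}(\X_0+\mathbf{H})\|_*\ge\|\mathcal{D}(\X_0)\|_*+(1-\|\mathbf{W}\|)\,\|\mathcal{P}_{T^\perp}\mathcal{D}(\mathbf{H})\|_*$ whenever there is a pair $(\Lambda,\mathbf{W})$ with $\Lambda$ supported on $\Omega$, $\mathbf{W}\in T^\perp$, $\|\mathbf{W}\|<1$, and $\mathcal{D}^*(\U\V^*+\mathbf{W})=\Lambda$. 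Hence $\X_0$ is the unique solution of model~(\ref{mnn_mc}) provided (i) $\mathcal{P}_\Omega$ is injective on $T_0$, and (ii) such a dual certificate exists --- exactly the Cand\`es--Recht criterion with $\mathcal{P}_\Omega$ replaced by $\mathcal{P}_\Omega\circ\mathcal{D}^{-1}$.

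For (i), as in standard MC it suffices that $\|\mathcal{P}_{T_0}-p^{-1}\mathcal{P}_{T_0}\mathcal{P}_\Omega\mathcal{P}_{T_0}\|$ is bounded below $1$ with high probability, where $\Omega\sim\mathrm{Ber}(p)$ and $p\approx m/(n_1n_2)$; expanding $\mathcal{P}_{T_0}\mathcal{P}_\Omega\mathcal{P}_{T_0}$ over the sampled entries and applying the matrix Bernstein inequality delivers this once $m\gtrsim \mu r n_1\log n_1$. The subtlety is that this uses the incoherence of $T_0$, whereas Assumption~\ref{assumption1} only supplies incoherence of $\Y_0=\mathcal{D}(\X_0)$; since the constraint in~(\ref{mnn_mc}) samples $\X$ in the original coordinates, incoherence of $\X_0$ is genuinely needed, so one must additionally argue that for the admissible operators $\mathcal{D}$ (difference, Sobel, Laplacian) incoherence of $\mathcal{D}(\X_0)$ transfers to $\X_0$ up to $\mathrm{cond}(\mathcal{D})$, which Assumption~\ref{assumption3} (bounding $\|\mathcal{D}\|$) together with the full-rank hypothesis (bounding $\|\mathcal{D}^{-1}\|=1/\sigma_{\min}(\mathcal{D})$) should make quantitative.

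For (ii), I would build $\Lambda$ by the least-squares / golfing construction of \cite{candes2012exact}: taking $\Lambda=\mathcal{P}_\Omega(\mathcal{D}^{-1})^*\mathcal{P}_T\big(\mathcal{P}_T(\mathcal{D}^{-1})^*\mathcal{P}_\Omega\mathcal{D}^{-1}\mathcal{P}_T\big)^{-1}(\U\V^*)$ enforces $\mathcal{P}_T\big((\mathcal{D}^*)^{-1}\Lambda\big)=\U\V^*$ automatically, and the operator $\mathcal{P}_T(\mathcal{D}^{-1})^*\mathcal{P}_\Omega\mathcal{D}^{-1}\mathcal{P}_T$ concentrates around $p\,\mathcal{P}_T(\mathcal{D}\mathcal{D}^*)^{-1}\mathcal{P}_T$, which is invertible on $T$ by the conditioning of $\mathcal{D}$. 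It then remains to show $\|\mathcal{P}_{T^\perp}((\mathcal{D}^*)^{-1}\Lambda)\|<1$; splitting this into a conditional-mean term (controlled by $\|\U\V^*\|_\infty$, $\|\U\V^*\|_{2,\infty}$, $\|\U\V^*\|$ via Assumption~\ref{assumption1}, together with $\|\mathcal{D}\|$ and $\|\mathcal{D}^{-1}\|$) and a fluctuation term (controlled by Bernstein-type bounds over the random support) gives the estimate once $m$ exceeds the stated threshold. Collecting the constraints on $m$ from the two parts --- the dominant one being the $\mu r n_1^{1/4}\cdot n_1\log n_1$ term that produces the $n_1^{5/4}$ scaling --- and union-bounding over the $O(\log n_1)$ golfing stages yields exact recovery with probability at least $1-c_1n_1^{-3}\log n_1$ whenever $m\ge c_0\mu r n_1^{5/4}\log n_1$.

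The hard part is the concentration analysis in the presence of $\mathcal{D}$: unlike standard MC, $\mathbb{E}[(\mathcal{D}^{-1})^*\mathcal{P}_\Omega\mathcal{D}^{-1}]=p(\mathcal{D}\mathcal{D}^*)^{-1}$ is not a multiple of the identity, so the clean identity $\mathcal{P}_T\mathcal{P}_\Omega\mathcal{P}_T\approx p\,\mathcal{P}_T$ fails verbatim; one must either precondition by $(\mathcal{D}\mathcal{D}^*)^{1/2}$ or carry the conditioning constants $\|\mathcal{D}\|$ and $1/\sigma_{\min}(\mathcal{D})$ through every Bernstein estimate, and jointly close the incoherence-transfer gap from the previous paragraph. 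Once this bookkeeping is in place, the remaining steps are routine adaptations of the corresponding lemmas in \cite{candes2012exact}, in parallel with the argument already used for the MNN-RPCA theorem.
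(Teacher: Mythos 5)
Your overall architecture --- a Cand\`es--Recht dual certificate for the nuclear norm at $\mathcal{D}(\X_0)$, injectivity of the sampling operator restricted to the tangent space, and a least-squares/golfing construction whose Neumann-series terms produce the dominant $\mu r n_1^{1/4}\cdot n_1\log n_1$ constraint --- is the same route the paper takes (Lemma \ref{mc_condition}, the ansatz (\ref{dual_mc}), and Lemmas \ref{op_inject_t}--\ref{lem_mc_norm_5}). The difference is that you stop exactly at the two points where the transformation $\mathcal{D}$ genuinely interacts with the sampling: (a) the concentration of $\mathcal{P}_T(\mathcal{D}^{-1})^*\mathcal{P}_\Omega\mathcal{D}^{-1}\mathcal{P}_T$ around $p\,\mathcal{P}_T(\mathcal{D}\mathcal{D}^*)^{-1}\mathcal{P}_T$, which is not a multiple of $\mathcal{P}_T$ and so breaks the verbatim use of the standard lemmas; and (b) the transfer of incoherence between $\X_0$ (whose entries are sampled) and $\mathcal{D}(\X_0)$ (on which Assumption \ref{assumption1} is imposed). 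You correctly identify both as the hard part, but you only describe what ``one must'' do --- precondition by $(\mathcal{D}\mathcal{D}^*)^{1/2}$ or carry $\Vert\mathcal{D}\Vert$ and $1/\sigma_{\min}(\mathcal{D})$ through every Bernstein estimate --- without executing either bound. That is a genuine gap: these two estimates are precisely what distinguishes MNN-MC from ordinary MC, their constants would depend on the conditioning of $\mathcal{D}$ (which does not appear in (\ref{eq.17_mnn})), and without them the theorem is not established by your argument.

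For context, the paper sidesteps this bookkeeping entirely: it defines $T$ from the SVD of $\A\X_0$, constructs the dual variable exactly as in unmodified matrix completion via $\Y=\mathcal{A}_{\Omega T}(\mathcal{A}_{\Omega T}^*\mathcal{A}_{\Omega T})^{-1}(\U\V^T)$ with $\mathcal{A}_{\Omega T}=\mathcal{P}_\Omega\mathcal{P}_T$, and lets $\A$ enter only through the subgradient inequality in Lemma \ref{mc_condition}, where the cancellation $\langle\mathcal{R}_\Omega^*\lambda,\A\mathbf{H}\rangle=0$ is invoked from $\mathcal{R}_\Omega(\mathbf{H})=0$ even though the inner product involves $\A\mathbf{H}$ rather than $\mathbf{H}$. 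So your formulation is the more honest one --- it makes explicit the coordinate mismatch between where the constraint lives and where the nuclear norm lives --- but honesty without the accompanying concentration analysis leaves the proof incomplete. To close it you would need, at minimum, a quantitative version of your incoherence-transfer claim for the admissible $\A$ and a Bernstein bound for the preconditioned operator, with all constants tracked back into $c_0$ and $c_1$.
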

Theorems \ref{theorem_main3} and \ref{theorem_main4} establish the exact recoverable theory for the fusion-induced model combining low-rank and general local smoothness priors, which was unavailable in previous studies. Proofs are provided in Appendices \ref{rpca_proof} and \ref{mc_proof}.

\subsection{Optimizations}
\label{opti}
The exact recoverable theory, as outlined in Theorems \ref{theorem_main3} and \ref{theorem_main4}, asserts that the optimal solution $(\hat{\X},\hat{\S})$ of the model  accurately reflects the true value $(\X_0,\S_0)$. Thus, the following corollary can be inferred with high probability.
\begin{corollary}
\label{coro_mnn}
Suppose $\X_0$ and $\S_0$ satisfy Assumptions \ref{assumption1} and \ref{assumption2}, and transformation operator $\mathcal{D}(\cdot)$ satisfy Assumption \ref{assumption3}. Denote the objective functions of the RPCA and MC models as 
\begin{equation}
\begin{split}
\mathcal{J}_1^{\mathcal{D}}(\X) := & \Vert \mathcal{D}(\X) \Vert_* +\lambda \Vert \M-\X \Vert_1, \\
\mathcal{J}_2^{\mathcal{D}}(\X) := & \Vert \mathcal{D}(\X) \Vert_* +\mu \Vert \mathcal{P}_{\Omega}(\M-\X) \Vert_F^2,\\
\end{split}
\end{equation}
respectively, where $\lambda =1/\sqrt{n_1}$ and $\mu=(\sqrt{n_1}+\sqrt{n_2})\sqrt{p}\sigma$ according to \cite{candes2010matrix}, and $n_1, n_2, \sigma, p$ are the sizes of matrix, noise standard variance, and missing ratio. Then, for any $\X$, we have:
\begin{equation}
\label{obj}
\mathcal{J}_1^{\mathcal{D}}(\X) \geq \mathcal{J}_1^{\mathcal{D}}(\X_0), \mathcal{J}_2^{\mathcal{D}}(\X) \geq \mathcal{J}_2^{\mathcal{D}}(\X_0).
\end{equation}
\end{corollary}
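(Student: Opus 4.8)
The plan is to read off both inequalities in (\ref{obj}) from the exact-recovery Theorems \ref{theorem_main3} and \ref{theorem_main4}, exploiting two elementary facts: each of $\mathcal{J}_1^{\mathcal{D}}$ and $\mathcal{J}_2^{\mathcal{D}}$ is convex, being the sum of the nuclear norm composed with the linear map $\mathcal{D}$ and a convex data term, so ``$\X_0$ is a global minimizer'' can be certified by a single optimality check; and these penalized objectives are exactly the constrained MNN models (\ref{mnn_rpca})--(\ref{mnn_mc}) with the linear constraint folded in. All statements below hold on the same high-probability events on which Theorems \ref{theorem_main3} and \ref{theorem_main4} guarantee exact recovery.

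For the RPCA bound I would eliminate $\S$ from (\ref{mnn_rpca}) via $\S=\M-\X$: the constraint $\M=\X+\S$ is then automatic and the objective of (\ref{mnn_rpca}) becomes exactly $\mathcal{J}_1^{\mathcal{D}}(\X)$, so $\min_{\X}\mathcal{J}_1^{\mathcal{D}}(\X)$ and (\ref{mnn_rpca}) are literally the same program. Under Assumptions \ref{assumption1}--\ref{assumption3} and the rank/sparsity budget (\ref{pho_r_mnn}), Theorem \ref{theorem_main3} with $\lambda=1/\sqrt{n_1}$ identifies its unique optimizer as $(\hat{\X},\hat{\S})=(\X_0,\S_0)$; hence $\X_0$ globally minimizes $\mathcal{J}_1^{\mathcal{D}}$, which is precisely $\mathcal{J}_1^{\mathcal{D}}(\X)\geq\mathcal{J}_1^{\mathcal{D}}(\X_0)$ for every $\X$.

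For the MC bound I would regard $\mu\Vert\mathcal{P}_{\Omega}(\M-\X)\Vert_F^2$ as the penalized surrogate of the constraint $\mathcal{P}_{\Omega}(\M)=\mathcal{P}_{\Omega}(\X)$ in (\ref{mnn_mc}). On the feasible set $\{\X:\mathcal{P}_{\Omega}(\X)=\mathcal{P}_{\Omega}(\M)\}$ the penalty vanishes and $\mathcal{J}_2^{\mathcal{D}}$ collapses to $\Vert\mathcal{D}(\X)\Vert_*$, so Theorem \ref{theorem_main4} (Assumptions \ref{assumption1}, \ref{assumption3}; sampling budget (\ref{eq.17_mnn})) already gives $\Vert\mathcal{D}(\X)\Vert_*\geq\Vert\mathcal{D}(\X_0)\Vert_*=\mathcal{J}_2^{\mathcal{D}}(\X_0)$ there. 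For directions that leave the feasible set I would, anticipating that Theorem \ref{theorem_main4} is proved by a dual certificate, reuse that certificate --- a subgradient of $\Vert\mathcal{D}(\cdot)\Vert_*$ at $\X_0$ whose component on the complement of $\Omega$ is bounded in operator norm --- and combine it with the parameter choice $\mu=(\sqrt{n_1}+\sqrt{n_2})\sqrt{p}\,\sigma$ of \cite{candes2010matrix} to show that any decrease of the nuclear-norm term off the feasible set is dominated by the quadratic penalty, i.e.\ that $0\in\partial\mathcal{J}_2^{\mathcal{D}}(\X_0)$; convexity then upgrades this to $\mathcal{J}_2^{\mathcal{D}}(\X)\geq\mathcal{J}_2^{\mathcal{D}}(\X_0)$ globally.

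The routine ingredients --- the change of variable, convexity, and the feasible-set case --- drop out of the two theorems immediately. The hardest step will be the off-feasible MC analysis: transporting a dual certificate designed for the \emph{constrained} program (\ref{mnn_mc}) into an admissible subgradient of the \emph{penalized} functional $\mathcal{J}_2^{\mathcal{D}}$, which requires tracking how the adjoint $\mathcal{D}^{*}$ composes with the sampling operator $\mathcal{P}_{\Omega}$ so that the nuclear-norm subgradient splits consistently across $\Omega$ and its complement with the estimates the certificate provides; Assumption \ref{assumption3} (unit Frobenius norm of $\mathcal{D}$) is exactly what keeps those estimates bounded.
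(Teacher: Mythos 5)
Your RPCA half is exactly the paper's argument: substituting $\S=\M-\X$ turns (\ref{mnn_rpca}) into the unconstrained minimization of $\mathcal{J}_1^{\mathcal{D}}$, and Theorem \ref{theorem_main3} then identifies $\X_0$ as its global minimizer. The MC half restricted to the feasible set $\{\X:\mathcal{P}_{\Omega}(\X)=\mathcal{P}_{\Omega}(\M)\}$ is also exactly what the paper does; note that the appendix restatement of the corollary reads ``for any \emph{feasible} solution $\X$'', and the paper's proof goes no further than that case.

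The off-feasible step you flag as the hardest is not merely hard --- it cannot be carried out, because the unrestricted claim $0\in\partial\mathcal{J}_2^{\mathcal{D}}(\X_0)$ is false in general. In the noiseless setting of Theorem \ref{theorem_main4} the residual $\mathcal{P}_{\Omega}(\M-\X_0)$ vanishes, so the quadratic penalty contributes gradient zero at $\X_0$ and $\partial\mathcal{J}_2^{\mathcal{D}}(\X_0)=\partial\bigl(\Vert\mathcal{D}(\cdot)\Vert_*\bigr)(\X_0)$. For $0$ to lie in this set, $\X_0$ would have to be a global minimizer of $\Vert\mathcal{D}(\cdot)\Vert_*$ itself, i.e.\ $\mathcal{D}(\X_0)=0$. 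Otherwise there is a direction $\mathbf{H}$ along which $\Vert\mathcal{D}(\X_0+t\mathbf{H})\Vert_*$ decreases linearly in $t$ while the penalty grows only like $t^2$, so $\X_0$ is not even a local minimizer of $\mathcal{J}_2^{\mathcal{D}}$. The dual certificate of Theorem \ref{theorem_main4} cannot rescue this: it has the form $\Y=\mathcal{R}_{\Omega}^*\lambda$ and certifies optimality only against perturbations with $\mathcal{P}_{\Omega}(\mathbf{H})=0$; it says nothing about directions that leave the feasible set, and no choice of $\mu=(\sqrt{n_1}+\sqrt{n_2})\sqrt{p}\,\sigma$ repairs a first-order-versus-second-order mismatch. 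The MC inequality is therefore only correct, and only proved in the paper, with $\X$ restricted to the feasible set, where your argument (and the paper's) is already complete; you should either adopt that restriction or drop the off-feasible analysis entirely.
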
 
Then, according to Corollary \ref{coro_mnn}, we can directly perform a simple gradient descent on the objective function (\ref{obj}) to obtain the final solution. More details can be seen in Appendix \ref{opt_proof}. 

\subsection{Some Suitable Transformation Operators for MNN}
\label{operator}
In this section, we shift our attention to another important aspect, namely, the selection of the transformation operator $\mathcal{D}(\cdot)$. The local smoothness of images is typically achieved through convolution operators. The most common one is the first-order difference operator $ \nabla(\cdot)$ =[-1,1]/[-1;1]. Based on the first-order difference operator \cite{rudin1992nonlinear} and its fast Fourier transform-based solving algorithm \cite{chambolle2004algorithm,huang2008fast}, various types of total variation regularization have been developed in recent decades \cite{wang2017hyperspectral,he2015total,peng2020enhanced}. 
However, the difference operators may not be able to characterize local information of data with multiple directions and higher-order derivative information. Hence, we consider several other transformations by taking advantage of multiple directions and higher-order derivatives. 

Specifically, we examine four common convolution kernels: the first-order central difference operator, the Sobel operator, and two types of Laplacian operators. Their forms and edge extraction effects on the "Barbara" image are shown in Figure  \ref{lena_ind}. It can be observed that compared to the central difference operator, the latter three operators capture more abundant local information about the image.
 
	\begin{figure}[htp]
		\centering
		\includegraphics[scale=0.4]{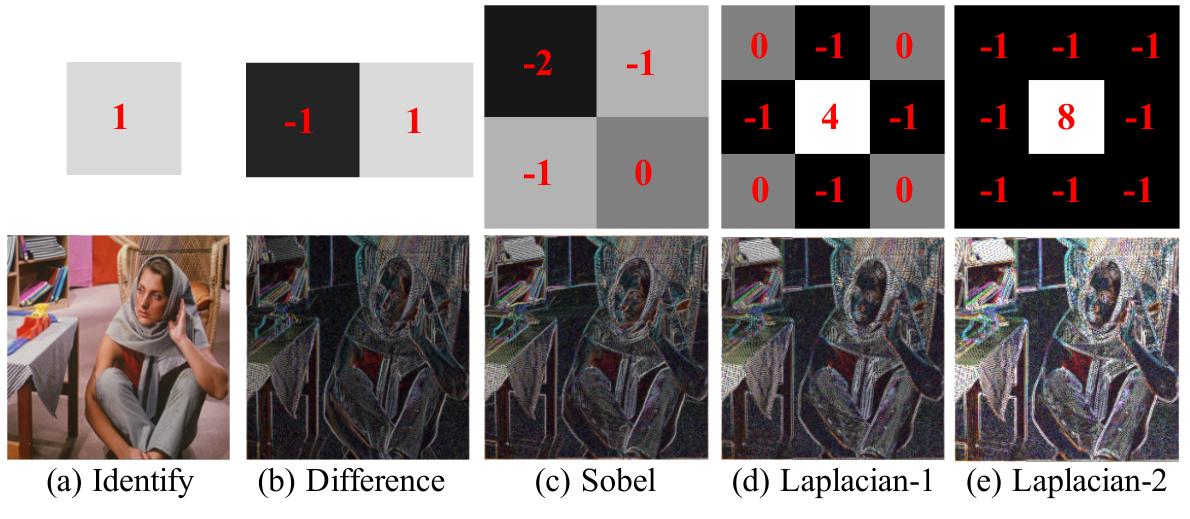}
		\caption{Demonstrates the forms of several typical convolution kernels and its edge extraction effect after acting on the image.}
		\label{lena_ind}
	\end{figure}
Next, we give an intuitive mathematical explanation of the superior local correlation excavation abilities of the Sobel and Laplacian operators by Remark \ref{remark_1}.

\begin{remark}
\label{remark_1}
For an image, it can be modeled as a bivariate function $z = f(x, y)$, where $x, y$ represent the coordinates, and $z$ represents the grayscale value. The first-order difference of $f(x, y)$ is:
\begin{equation}
\begin{split}
\frac{\partial{f}}{\partial{x}}&=f(x+1,y)-f(x,y), \\ \frac{\partial{f}}{\partial{y}}&=f(x,y+1)-f(x,y). 
\end{split}
\end{equation}
The second-order difference of $f(x, y)$ can be represented in the $ x $ and $ y $ directions as follows:
\begin{equation}
\begin{split}
\frac{\partial^2{f}}{\partial{x^2}} &=f(x+1,y)-2f(x,y)+f(x-1,y), \\
\frac{\partial^2{f}}{\partial{y^2}} &=f(x,y+1)-2f(x,y)+f(x,y-1). 
\end{split}
\end{equation}
The first-order difference operator $\frac{\partial{f}}{\partial{x}}/\frac{\partial{f}}{\partial{y}}$ corresponds to the convolution kernel [-1,1]/[-1;1]. The Sobel operator $\frac{\partial{f}}{\partial{x}}+\frac{\partial{f}}{\partial{y}}$ and Laplacian operator $\frac{\partial^2{f}}{\partial{x^2}}+\frac{\partial^2{f}}{\partial{y^2}}$ have effects equivalent to [-2,-1;-1,0] and [0,-1,0;-1,4,-1;0,-1,0], respectively. The first derivative detects edges, while the second derivative detects the rate of change of edges. Derivatives in multiple directions can comprehensively characterize edge information. Therefore, the Sobel operator and the two types of Laplacian operators provided in Figure \ref{lena_ind} can exploit richer local priors.
\end{remark}

\begin{figure}[htp]
		\centering
		\includegraphics[scale=0.4]{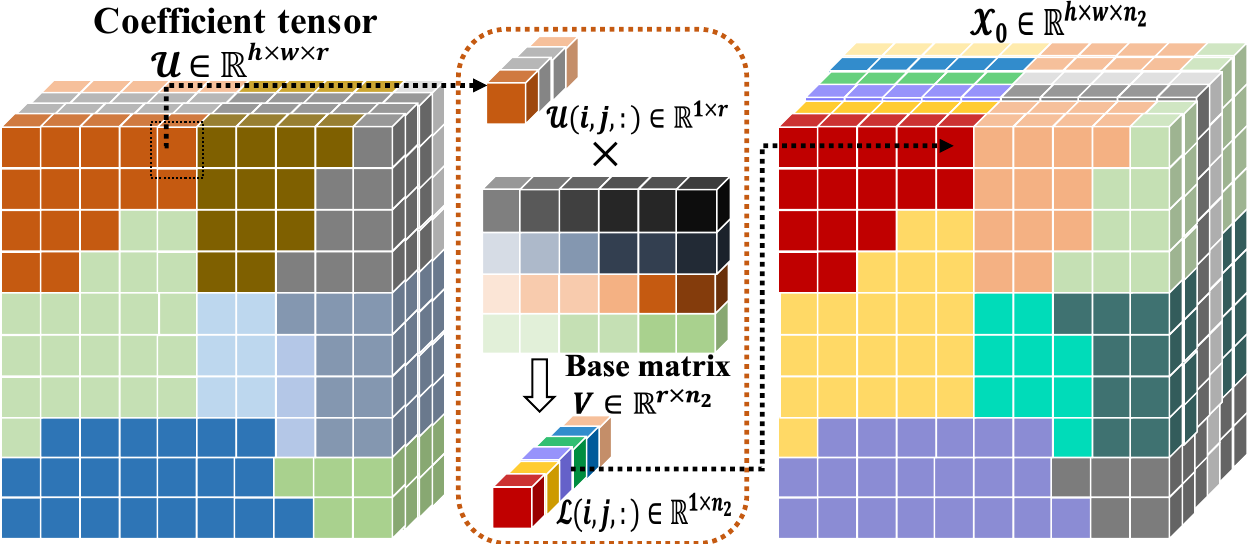}
		\caption{The simulated data generated mechanism of joint low rank and local smoothness data $\X_0$.}
		\label{simulation_generate}
	\end{figure}
	
	\begin{figure*}[!]
		\centering
		\includegraphics[scale=0.132]{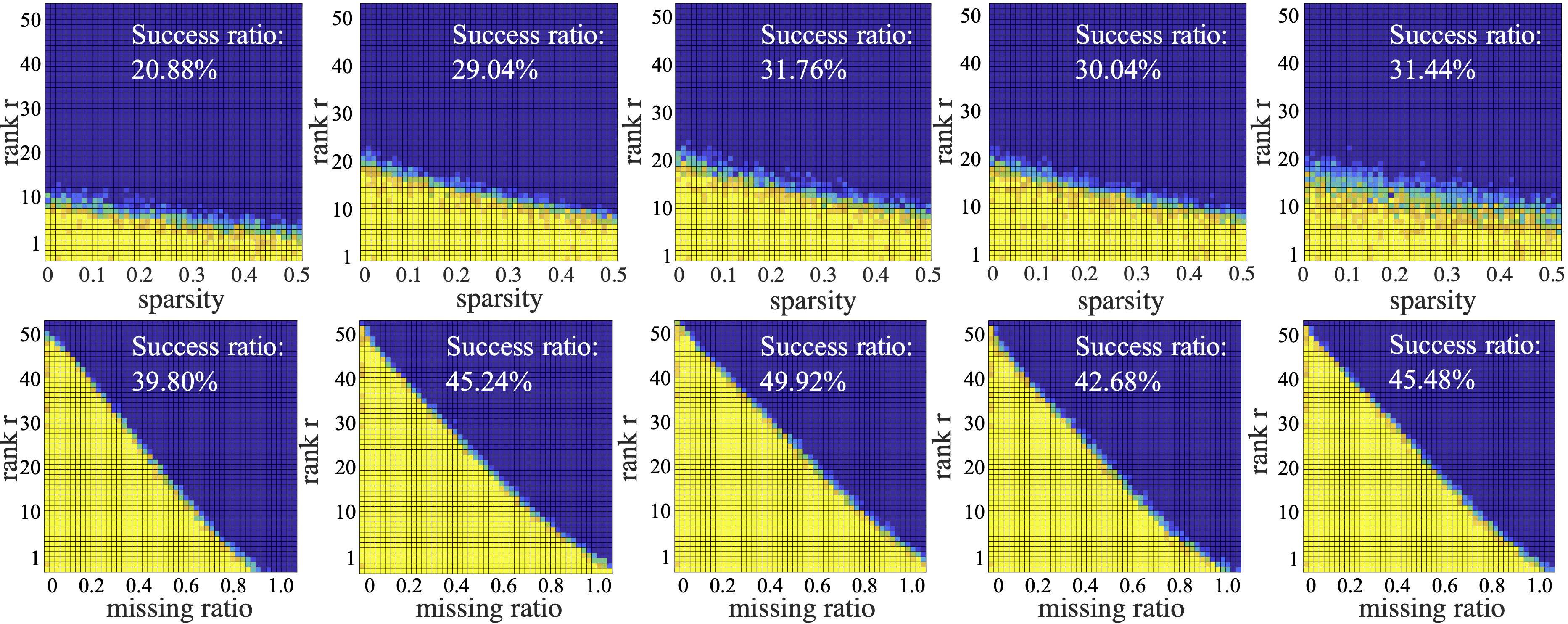}
		\caption{Phase transition diagrams of MNN variants for Robust PCA (top row) and MC (bottom row). Blue (0\%) and yellow (100\%) indicate success ratios. From left to right: NN, MNN-Diff, MNN-Sobel, MNN-L1, MNN-L2.
}
		\label{dsp_mc_fig}
	\end{figure*}

\section{Simulated Experiment}
\label{simu_exp}
The numerical experiments are conducted to validate Theorems \ref{theorem_main3} and \ref{theorem_main4}. As described in Section \ref{operator}, we use difference, Sobel, and Laplacian operators in MNN, denoted as MNN-Diff, MNN-Sobel, MNN-L1, and MNN-L2 in Figure \ref{lena_ind}. Note that MNN-Diff reduces to the CTV regularizer.  Following Corollary \ref{coro_mnn}, we set $ \lambda = 1/\sqrt{\max\{n_1, n_2\}} $ for the TRPCA task and \(\mu = (\sqrt{n_1} + \sqrt{n_2})\sqrt{p}\sigma \), \(\sigma = 1e^{-4}\) for the MC noiseless task. Further performance gains can be achieved by fine-tuning \(\lambda\) and \(\mu\). All simulations are run on a PC with an Intel Core i5-10600KF  CPU (4.10 GHz), 32 GB RAM, and a GeForce RTX 3080 GPU.

\subsection{Data Generation}
We create two factor matrices $\U \in \mathbb{R}^{n_1\times r} $ and $ \mathbf{V} \in \mathbb{R}^{r \times n_2}$, yielding $\X_0 = \U\V^T$ as a low-rank matrix, where $n_1=hw$. To introduce image-like properties, each column of $\U$ (i.e., $\U(:,i)$) is reshaped into an $h\times w$ matrix, thus we can get the tensor $\mathcal{U}\in \mathbb{R}^{h\times w\times r}$ satisfies $\U=\mbox{unfold}_3(\mathcal{U})$ where $\mbox{unfold}_3(\cdot)$ is the unfold operator to unfold the tensor to matrix along the third-mode. Each slice $\mathcal{U}(:,:,i)$ is randomly divided into $c$ regions, with elements within each region being consistent and drawn from $\mathcal{N}(0,1)$ distribution. The elements of $\mathbf{V}$ are selected from $\mathcal{N}(0,1)$ distribution. The entire generation process is shown in Figure  \ref{simulation_generate}. Besides, the support set $\Omega$ is chosen randomly. For RPCA task, $ \S_0 $ is set as $ \S_0 =\mathcal{P}_{\Omega} (\E) $ and the observed matrix $\M$ is set as $ \M=\X_0+\S_0 $, where $\E$ is a matrix with independent Bernoulli $ \pm 1 $ entries. For MC task, $\M$ is set as $\M =\mathcal{P}_{\Omega}(\X_0)$.

\begin{figure}[!]
		\centering
		\includegraphics[scale=0.52]{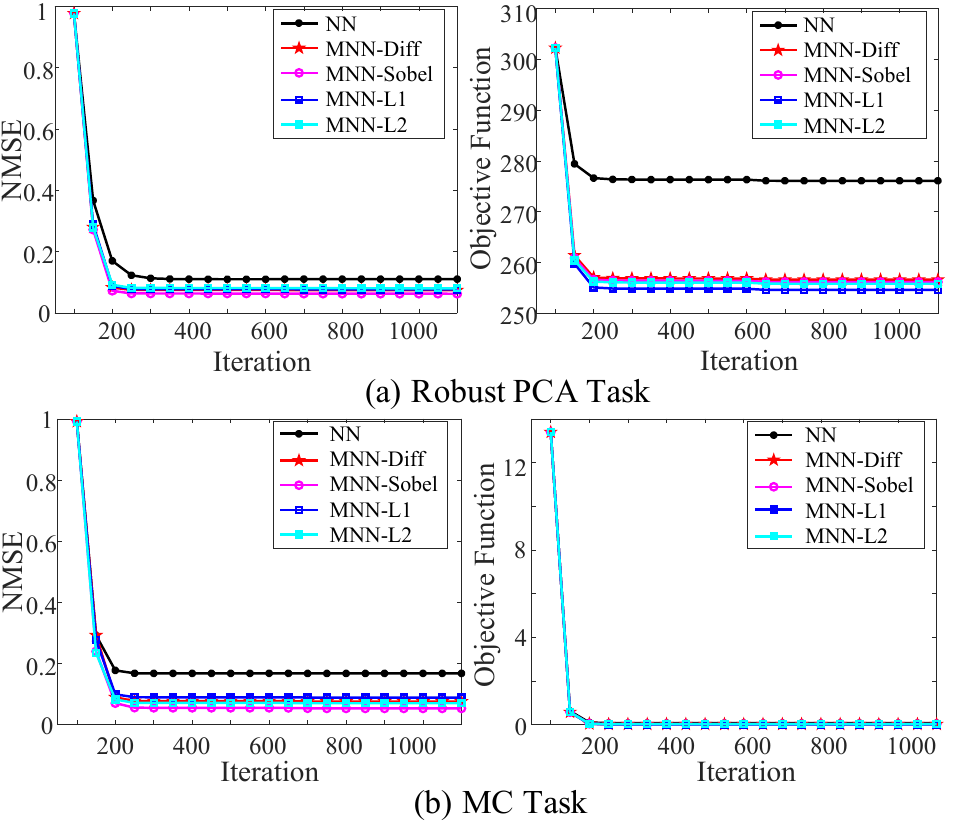}
		\caption{Objective function and relative recovery error over iterations: (a) RPCA with $r=10, \rho_s=0.1$; (b) MC with $r=10$, sampling ratio 0.2.}
		\label{mnn_ite}
	\end{figure}
	
\begin{table}[htbp]
\centering
\fontsize{8.5}{8.5}\selectfont
\caption{Comparison of methods for fusing global and local priors.}
\begin{tabular}{l|c}
\hline
\textbf{Type} & \textbf{Methods} \\
\hline
Non-convex Decomposition & KBR, LRTDTV,\\
Multi-regularization Fusion & LRTV, LRTDTV,\\
Matrix-based Methods & NN, LRTV, CTV\\
Tensor-based Methods &TCTV,SNN,TNN, Qrank, Framelet\\
\hline
\end{tabular}
\label{tab:methods_comparison}
\end{table}

\subsection{Experiment Settings and Result Analysis}

\textbf{Experiment Settings}. In all experiments, we set $h=w=50$, $n1=hw=2500$, and $n2=100$. We evaluate how the rank $r$, sparsity $\rho_s$ (RPCA), and missing ratio $\rho$ (MC) affect performance by varying $\rho_s$ in $(0.01, 0.5)$ (step 0.01), $\rho$ in $(0.01, 0.99)$ (step 0.02), and $r$ in $(1, 50)$ (step 1).

\textbf{Phase Transition Diagram}. For each $(r,\rho_s)$ and $(r,\rho)$, we run 10 trials and consider recovery successful if $\Vert \hat{\X} - \X_0 \Vert_F/ \Vert \X_0 \Vert_F \leq 0.05$. Figure \ref{dsp_mc_fig} shows the phase diagrams with recovery rates. Compared to NN, MNN variants (Diff, Sobel, L1, L2) significantly expand the recovery region by leveraging local structures. Notably, MNN-Sobel, MNN-L1, and MNN-L2 outperform MNN-Diff, highlighting the advantage of high-order operators as noted in Remark \ref{remark_1}.

\begin{table*}[htp]
\setlength\tabcolsep{4.5pt}
\fontsize{10}{10}\selectfont
  \centering
  \caption{Restoration comparison of all competing methods under different salt and pepper noise noise variance (SR). The best and second results are highlighted in bold italics and \underline{underlined}.}
\begin{tabular}{l|c|c|c|c|c|c|c|c|c|c|c|c|c}
     \Xhline{2\arrayrulewidth}
     SR & Metrics & NN & SNN & KBR & TNN &LRTV &  \makecell[c]{LRTD\\TV} & CTV & TCTV &  \makecell[c]{MNN-\\Diff}   &   \makecell[c]{MNN-\\Sobel}  &  \makecell[c]{MNN-\\L1}  &  \makecell[c]{MNN-\\L2}  \\
     \Xhline{2\arrayrulewidth}
		\multicolumn{13}{c}{Hyperspectral Image Denoising: Five Datasets} \cr
	 \hline
\multirow{2}{*}{0.1} & PSNR$\uparrow$ & 46.16& 29.58& 35.97& 45.43& 32.57& 31.21& 48.53& 47.30& 47.57& \textbf{49.52}& \underline{49.33}& 47.99 \\
& SSIM$\uparrow$ & 0.998& 0.897& 0.974& 0.989& 0.928& 0.894& 0.998& 0.992& 0.997& \textbf{0.999}& \textbf{0.999}& 0.998 \\
\hline
\multirow{2}{*}{0.3} & PSNR$\uparrow$ & 39.93& 23.21& 33.31& 40.34& 30.26& 29.30& 45.69& 43.55& 44.99& \textbf{47.08}& \underline{46.70}& 45.52 \\
& SSIM$\uparrow$ & 0.992& 0.608& 0.959& 0.980& 0.883& 0.845& 0.997& 0.988& 0.997& \textbf{0.999}& \textbf{0.998}& 0.997 \\
\hline
& SSIM$\uparrow$ & 0.690& 0.138& 0.519& 0.051& 0.644& 0.642& 0.962& 0.366& 0.781& \textbf{0.991}& 0.910& \textbf{0.991} \\
\hline
\multicolumn{2}{c|}{Average Time/s} & 8.62& 116.8& 58.39& 136.4& 33.79& 102.9& 96.63& 277.1& 77.61& 75.04& 74.12& 77.69  \\
	  \hline
	  \hline
	  \multicolumn{13}{c}{Multispectral Image Denoising: Eleven Datasets} \cr
	 \hline
\multirow{2}{*}{0.1} & PSNR$\uparrow$ & 29.85& 33.39& 35.76& 41.70& 33.56& 36.32& 40.64& \textbf{44.64}& 40.15& 42.14& \underline{42.92}& 40.74 \\
& SSIM$\uparrow$ & 0.968& 0.967& 0.965& 0.993& 0.957& 0.977& 0.995& \textbf{0.996}& 0.995& \textbf{0.996}& \textbf{0.996}& 0.995 \\
\hline
\multirow{2}{*}{0.3} & PSNR$\uparrow$ & 24.96& 31.14& 34.32& 39.14& 31.90& 34.48& 38.57& \textbf{42.95}& 38.34& 40.47& \underline{41.39}& 39.43 \\
& SSIM$\uparrow$ & 0.920& 0.945& 0.956& 0.988& 0.943& 0.965& 0.992& \textbf{0.995}& 0.992& 0.994& \textbf{0.995}& 0.993 \\
\hline
\multicolumn{2}{c|}{Average Time/s} & 17.01& 294.5& 106.8& 275.6& 52.91& 220.9& 233.5& 513.9& 68.62& 70.24& 67.24& 67.35  \\
	  \hline
	  \hline
	  \multicolumn{13}{c}{RGB Video Denoising: Ten Datasets} \cr
	 \hline
\multirow{2}{*}{0.1} & PSNR$\uparrow$ & 32.26& 27.44& 26.48& 38.01& 29.97& 23.51& 35.73& \underline{38.69}& 37.14& 38.61& \textbf{39.77}& 37.62  \\
& SSIM$\uparrow$ & 0.958& 0.774& 0.827& 0.980& 0.892& 0.809& 0.976& \underline{0.983}& 0.980& 0.983& \textbf{0.984}& 0.981   \\
\hline
\multirow{2}{*}{0.3} & PSNR$\uparrow$ & 30.16& 10.58& 25.89& 35.47& 28.52& 22.43& 33.91& \underline{36.63}& 34.90& 36.57& \textbf{37.45}& 35.84 \\
& SSIM$\uparrow$ & 0.933& 0.097& 0.808& 0.963& 0.870& 0.777& 0.963& 0.967& 0.970& \underline{0.977}& \textbf{0.979}& 0.975  \\
\hline
\multicolumn{2}{c|}{Average Time/s} & 28.17& 104.1& 72.61& 142.9& 52.67& 119.9& 129.2& 293.1& 157.6& 159.8& 157.8& 159.9  \\
	  \hline
	  \hline
	  \Xhline{1\arrayrulewidth}
\end{tabular}
\label{rpca_real}
\end{table*}

\begin{figure*}[!h]
\renewcommand{\arraystretch}{0.5}
\setlength\tabcolsep{0.5pt}
\centering
\begin{tabular}{ccccccccc}
\centering
\scriptsize{Original} & \scriptsize{Noisy} & \scriptsize{NN} & \scriptsize{LRTV} & \scriptsize{LRTDTV} & \scriptsize{CTV} & \scriptsize{TCTV} & \scriptsize{MNN-Sobel} & \scriptsize{MNN-L2} \\
\includegraphics[width=19mm, height = 19mm]{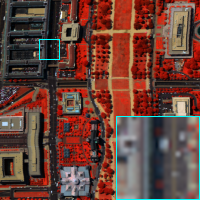}&
\includegraphics[width=19mm, height = 19mm]{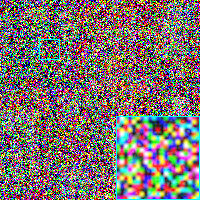}&
\includegraphics[width=19mm, height = 19mm]{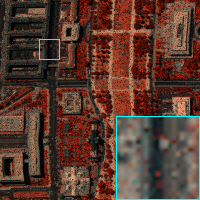}&
\includegraphics[width=19mm, height = 19mm]{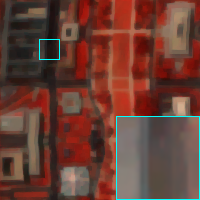}&
\includegraphics[width=19mm, height = 19mm]{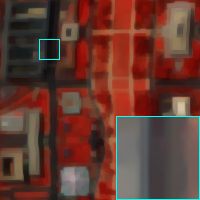}&
\includegraphics[width=19mm, height = 19mm]{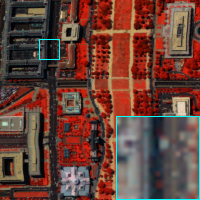}&
\includegraphics[width=19mm, height = 19mm]{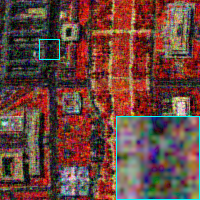}&
\includegraphics[width=19mm, height = 19mm]{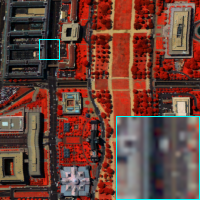}&
\includegraphics[width=19mm, height = 19mm]{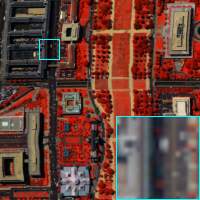} \\
\scriptsize{DCmall} & \scriptsize{6.02/0.026} & \scriptsize{24.76/0.842} & \scriptsize{22.72/0.563} & \scriptsize{22.31/0.543} & \scriptsize{35.14/0.978} & \scriptsize{19.19/0.452} & \scriptsize{\textbf{41.57/0.996}} & \scriptsize{\underline{40.79/0.995}} \\
\includegraphics[width=19mm, height = 16mm]{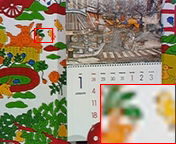}&
\includegraphics[width=19mm, height = 16mm]{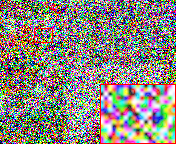}&
\includegraphics[width=19mm, height = 16mm]{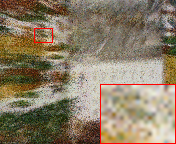}&
\includegraphics[width=19mm, height = 16mm]{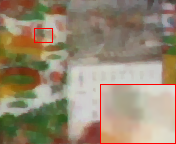}&
\includegraphics[width=19mm, height = 16mm]{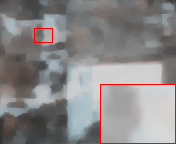}&
\includegraphics[width=19mm, height = 16mm]{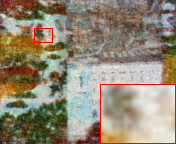}&
\includegraphics[width=19mm, height = 16mm]{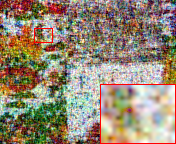}&
\includegraphics[width=19mm, height = 16mm]{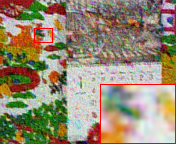}&
\includegraphics[width=19mm, height = 16mm]{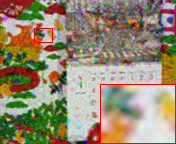} \\
\scriptsize{Mobile} & \scriptsize{6.04/0.054} & \scriptsize{15.70/0.382} & \scriptsize{17.27/0.393} & \scriptsize{15.30/0.327} & \scriptsize{19.35/0.663} & \scriptsize{16.25/0.427} & \scriptsize{\underline{21.91/0.807}} & \scriptsize{\textbf{22.01/0.822}} \\
\end{tabular}
\caption{Recovered pseudo-colored images under 70\% sparse noise. The PSNR and SSIM values are placed below the images, and the best and second-best results are bolded and underlined respectively.}
\label{rpca_case}
\end{figure*}

\begin{figure*}[!h]
\renewcommand{\arraystretch}{0.5}
\setlength\tabcolsep{0.5pt}
\centering
\begin{tabular}{ccccccccc}
\centering
\scriptsize{Original} & \scriptsize{Noisy} & \scriptsize{NN} & \scriptsize{KBR} & \scriptsize{TNN} & \scriptsize{CTV} & \scriptsize{TCTV} & \scriptsize{MNN-Sobel} & \scriptsize{MNN-L2} \\
\includegraphics[width=19mm, height = 19mm]{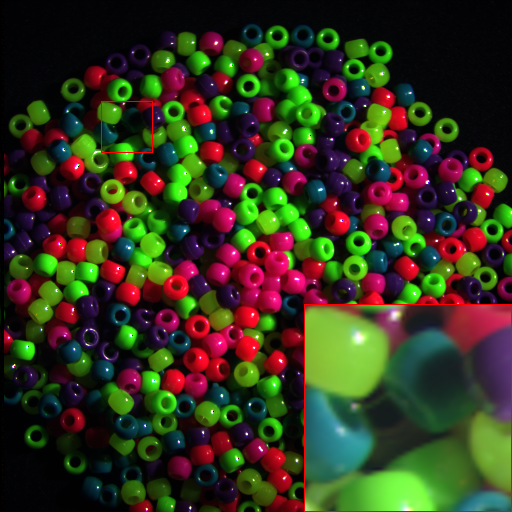}&
\includegraphics[width=19mm, height = 19mm]{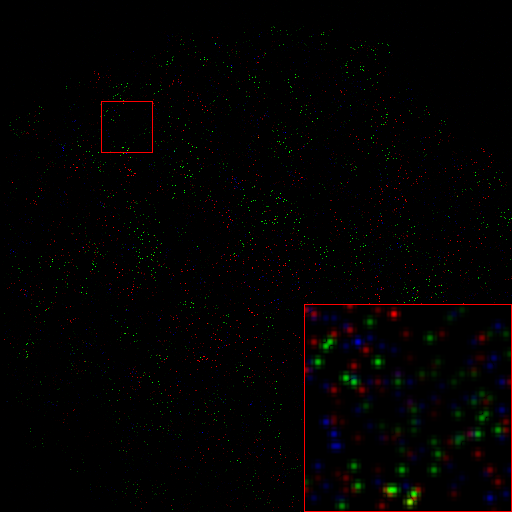}&
\includegraphics[width=19mm, height = 19mm]{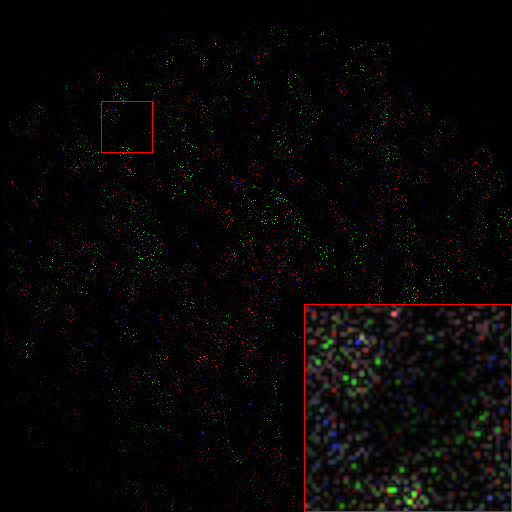}&
\includegraphics[width=19mm, height = 19mm]{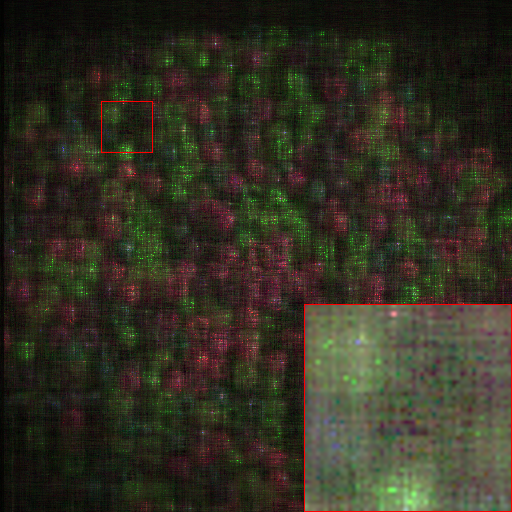}&
\includegraphics[width=19mm, height = 19mm]{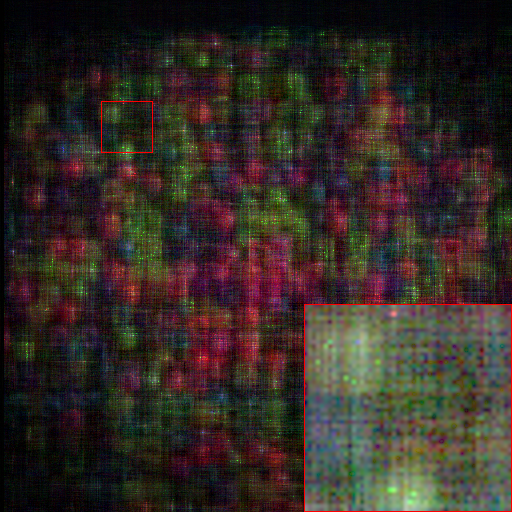}&
\includegraphics[width=19mm, height = 19mm]{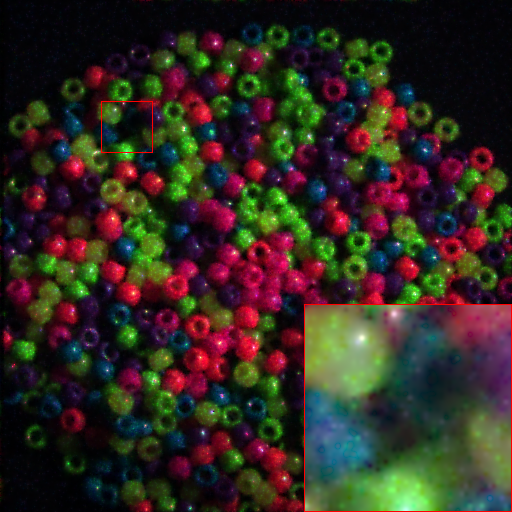}&
\includegraphics[width=19mm, height = 19mm]{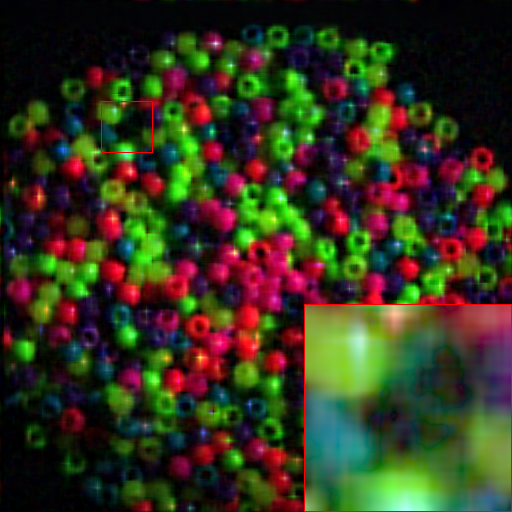}&
\includegraphics[width=19mm, height = 19mm]{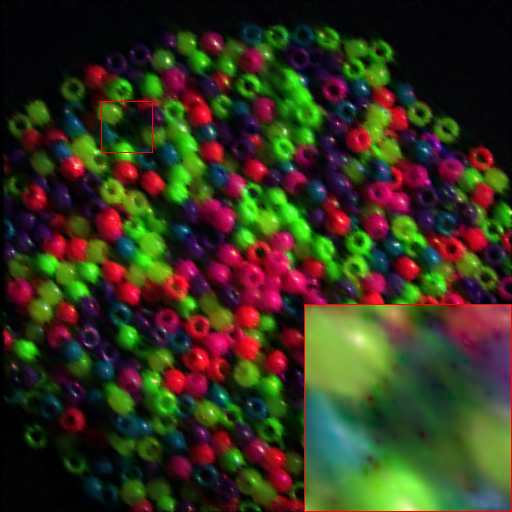}&
\includegraphics[width=19mm, height = 19mm]{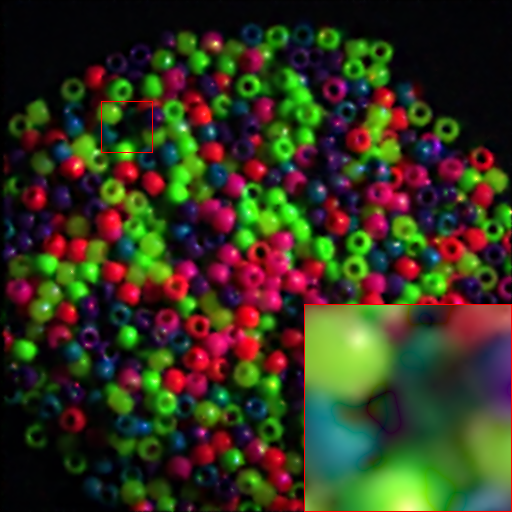} \\
\scriptsize{beads} & \scriptsize{13.92/0.081} & \scriptsize{14.35/0.167} & \scriptsize{18.17/0.403} & \scriptsize{18.39/0.426} & \scriptsize{24.21/0.829} & \scriptsize{23.82/0.766} &
\scriptsize{\underline{24.62/0.851}}& \scriptsize{\textbf{25.13/0.867}} \\
\includegraphics[width=19mm, height = 19mm]{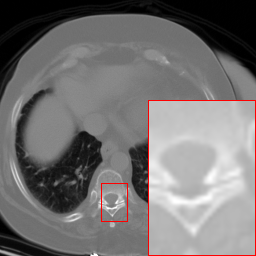}&
\includegraphics[width=19mm, height = 19mm]{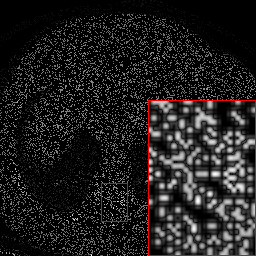}&
\includegraphics[width=19mm, height = 19mm]{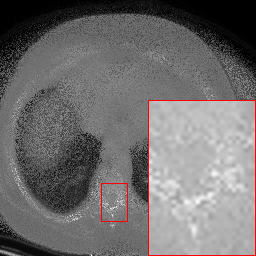}&
\includegraphics[width=19mm, height = 19mm]{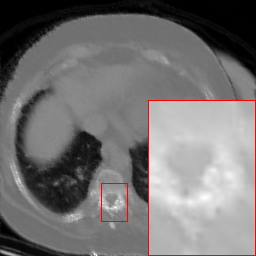}&
\includegraphics[width=19mm, height = 19mm]{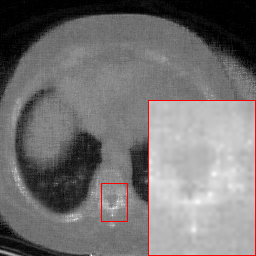}&
\includegraphics[width=19mm, height = 19mm]{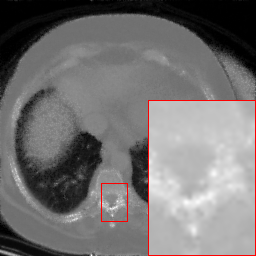}&
\includegraphics[width=19mm, height = 19mm]{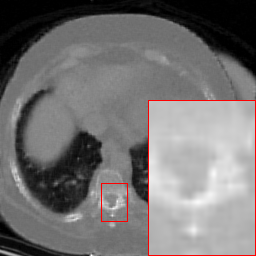}&
\includegraphics[width=19mm, height = 19mm]{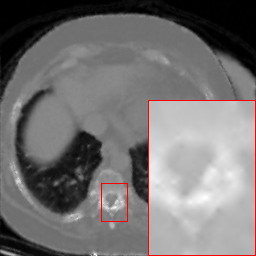}&
\includegraphics[width=19mm, height = 19mm]{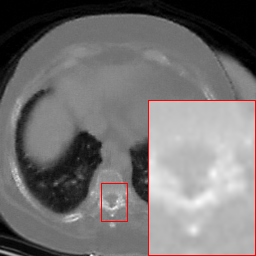} \\ \\
\scriptsize{chest-pet} & \scriptsize{9.28/0.196} & \scriptsize{23.13/0.520} & \scriptsize{33.86/0.934} & \scriptsize{28.62/0.690} & \scriptsize{31.93/0.898} & \scriptsize{34.02/0.903} & \scriptsize{\underline{34.25/0.957}} & \scriptsize{\textbf{34.57/0.962}} \\
\end{tabular}
\caption{Restored pseudo-colored images under sampling rates of 0.02 (first row) for beans data (R-G-B:23-13-4) and 0.2 (second row) for chest-pet dataset at band 80.}\label{mc_case}
\end{figure*}

\begin{table*}[htp]
\setlength\tabcolsep{4.5pt}
\fontsize{10}{10}\selectfont
  \centering
  \caption{Restoration comparison of all competing methods under different sampling ratio (SR). The best and second results are highlighted in bold italics and \underline{underlined}.}
\begin{tabular}{l|c|c|c|c|c|c|c|c|c|c|c|c|c}
     \Xhline{2\arrayrulewidth}
     SR & Metrics & NN & SNN & KBR & TNN &Framlet & Qrank & CTV & TCTV & \makecell[c]{MNN-\\Diff} &  \makecell[c]{MNN-\\Sobel} & \makecell[c]{MNN-\\L1} & \makecell[c]{MNN-\\L2}  \\
     \Xhline{2\arrayrulewidth}
		\multicolumn{13}{c}{Hyperspectral Image Completion: Five Datasets} \cr
\hline
\multirow{2}{*}{5\%} & PSNR$\uparrow$ & 21.85& 20.20& 30.28& 27.41& 29.26& 28.38& \underline{35.29}& 29.48& 29.29& \textbf{36.03}& 32.04& 32.57 \\
& SSIM$\uparrow$ & 0.625& 0.428& 0.871& 0.771& 0.825& 0.803& \underline{0.972}& 0.862& 0.913& \textbf{0.981}& 0.957& 0.961 \\
\hline
\multirow{2}{*}{10\%} & PSNR$\uparrow$ & 33.98& 21.91& \underline{42.60}& 31.01& 32.28& 28.06& 41.90& 33.54& 37.10& \textbf{43.09}& 39.34& 40.46  \\
& SSIM$\uparrow$ & 0.909& 0.528& 0.989& 0.876& 0.884& 0.861& \underline{0.992}& 0.935& 0.982& \textbf{0.995}& 0.990& 0.992 \\
\hline
\multicolumn{2}{c|}{Average Time/s} & 11.30& 35.23& 258.6& 143.3& 222.3& 112.1& 213.2& 474.5& 41.30& 42.98& 41.97& 41.85  \\
	  \hline
	  \hline
	  \multicolumn{13}{c}{Multispectral Image Completion: Eleven Datasets} \cr
\hline
\multirow{2}{*}{5\%} & PSNR$\uparrow$ & 16.71& 27.56& 26.49& 32.58& 31.61& 30.23& 35.71& \underline{36.12}& 30.64& \textbf{36.98}& 36.10& 36.05  \\
& SSIM$\uparrow$  & 0.583& 0.820& 0.888& 0.920& 0.912& 0.889& 0.963& 0.973& 0.921& \textbf{0.978}& \underline{0.975}& \underline{0.975} \\
\hline
\multirow{2}{*}{10\%} & PSNR$\uparrow$ & 22.41& 30.84& 35.45& 36.29& 35.48& 34.72& 39.89& \underline{40.09}& 35.42& \textbf{41.11}& 40.32& 40.05 \\
& SSIM$\uparrow$ & 0.793& 0.892& 0.980& 0.963& 0.958& 0.952& 0.983& 0.988& 0.971& \textbf{0.992}& \underline{0.991}& \underline{0.991} \\
\hline
\multicolumn{2}{c|}{Average Time/s} &15.4& 77.73& 431.5& 275.4& 512.2& 332.1& 303.1& 800.6& 31.70& 32.36& 31.99& 31.49  \\
	  \hline
	  \hline
	  \multicolumn{13}{c}{RGB Video Completion: Ten Datasets} \cr
\hline
\multirow{2}{*}{5\%} & PSNR$\uparrow$ &  23.01& 17.72& 26.90& 28.14& 26.53& 25.35& \underline{29.43}& 28.70& 25.76& \textbf{29.53}& 27.95& 28.10 \\
& SSIM$\uparrow$ & 0.605& 0.484& 0.802& 0.801& 0.790& 0.751& 0.890& 0.856& 0.826& \textbf{0.909}& 0.892& \underline{0.894} \\
\hline
\multirow{2}{*}{10\%} & PSNR$\uparrow$ & 29.06& 20.48& \underline{32.81}& 30.54& 31.55& 29.36& \textbf{33.18}& 31.89& 29.32& 32.75& 31.61& 31.72 \\
& SSIM$\uparrow$ &  0.846& 0.589& 0.917& 0.861& 0.893& 0.879& \underline{0.937}& 0.910& 0.898& \textbf{0.945}& 0.935& 0.936\\
\hline
\multicolumn{2}{c|}{Average Time/s} & 20.51& 58.22& 401.8& 214.8& 449.4& 168.6& 426.2& 685.1& 67.44& 68.16& 64.92& 64.83   \\
	  \hline
	  \hline
	  \multicolumn{13}{c}{MRI-CT Completion: Four Datasets} \cr
\hline
\multirow{2}{*}{5\%} & PSNR$\uparrow$ & 14.14& 17.40& 21.58& 20.65& 19.82& 19.94& 23.70& 24.80& 20.48& 24.45& \underline{24.87}& \textbf{24.94} \\
& SSIM$\uparrow$ & 0.204& 0.448& 0.532& 0.368& 0.506& 0.380& 0.688& 0.647& 0.625& 0.801& \underline{0.810}& \textbf{0.813} \\
\hline
\multirow{2}{*}{10\%} & PSNR$\uparrow$ & 18.36& 19.89& 25.68& 22.37& 22.71& 23.94& 25.48& 26.73& 22.75& 26.34& \textbf{27.04}& \underline{26.98}  \\
& SSIM$\uparrow$ & 0.369& 0.535& 0.704& 0.457& 0.621& 0.699& 0.761& 0.721& 0.737& 0.858& \underline{0.869}& \textbf{0.870} \\
\hline
\multicolumn{2}{c|}{Average Time/s} &  20.07& 69.32& 484.0& 316.8& 537.8& 235.2& 424.7& 1009& 60.81& 62.13& 60.04& 60.01 \\
	  \hline
	  \hline
	  \Xhline{1\arrayrulewidth}
\end{tabular}
\label{mc_real}
\end{table*}
	
\textbf{Empirical analysis of convergence}. According to Corollary \ref{coro_mnn}, the objectives of models (\ref{mnn_rpca}) and (\ref{mnn_mc}) reach their minima at $\X_0$ and $\S_0$ under Assumptions \ref{assumption1}–\ref{assumption3}. Even when the assumptions are only partially met, the objective values still decrease, and NMSE sequence $\Vert \X_k -\X_0\Vert_F/\Vert\X_0\Vert_F$ steadily declines. Figure \ref{mnn_ite} shows that under a learning rate of $ 1e^{-4} $, NN and all MNN variants converge stably.

\section{Real Application}
\label{real_exp}
We then assess the restoration performance on real datasets.

\textbf{Datasets}. We selected four commonly used low-rank image datasets, which are HSI data used in \cite{wang2023guaranteed}, MSI \footnote{\url{https://www.cs.columbia.edu/CAVE/databases/multispectral/}}, color video sequences \footnote{\url{http://trace.eas.asu.edu/yuv/}}, and MRI and CT images \footnote{\url{https://www.cancerimagingarchive.net/}}. Among them, hyperspectral images, multispectral images, color video sequences, and MRI and CT images contain 5, 11, 10, and 4 images, respectively. We adopt the same noise addition mechanism and sampling scheme as in the simulation experiments section. 

\textbf{Comparison methods}. To validate the effectiveness of MNN in fusing global and local priors, we selected methods such as LRTV \cite{he2015total}, LRTDTV \cite{wang2017hyperspectral}, CTV \cite{peng2022exact}, Qrank \cite{}, KBR \cite{xie2017kronecker}, TCTV \cite{wang2023guaranteed}, SNN \cite{liu2012tensor}, Qrank \cite{kong2021tensor}, TNN \cite{lu2019tensor} and Framlet \cite{jiang2020framelet}. It is worth noting that CTV can actually be regarded as a special case of MNN since it utilizes multiple differential operators from different methods and employs the ADMM \cite{boyd2011distributed} algorithm to solve the model. 
To facilitate readers' understanding of these comparison methods, the description of these methods are placed in Table \ref{tab:methods_comparison}.

\subsection{Robust Principal Component Analysis Tasks} 
We evaluate denoising performance on HSI, MSI, and RGB video data with salt-and-pepper noise levels ranging from 10\% to 70\%, using PSNR and SSIM as metrics. Table \ref{rpca_real} summarizes the average results. The following observations can be observed:  1) In most cases, the MNN-Sobel model achieves the best performance; 2) Models such as MNN-Sobel, MNN-L1, and MNN-L2 generally outperform MNN-Diff, partially supporting Remark \ref{remark_1}, which suggests that expanding the receptive field of convolutional operators enhances performance; 3) In some scenarios, MNN-L1 and MNN-L2 outperform MNN-Sobel, indicating that different transformation operators are required for different types of data and noise. If suitable transformation operators can be adaptively selected, further improvements in restoration performance are achievable.  

Figure \ref{rpca_case} shows restored pseudo-color images for two datasets, where MNN-Sobel and MNN-L2 visibly enhance noise reduction and color fidelity. More results are provided in the supplementary materials.  

\subsection{Completion Tasks} 

We conduct completion experiments on four datasets using random sampling. Table \ref{mc_real} reports the average restoration metrics, leading to the same three conclusions as in the denoising results (Table \ref{rpca_real} ). Notably, despite being matrix-based, MNN consistently outperforms tensor-based methods like TNN, TCTV, and KBR, demonstrating its ability to effectively capture both global and local features.

Figure \ref{mc_case} shows visual results on two datasets, where MNN-based models excel in structure preservation, color fidelity, and overall restoration. Additional results are provided in the supplementary materials.

\section{Conclusion}
\label{cons}
This paper proposes a modified nuclear norm (MNN) regularizer that integrates global low-rankness and local priors via transformation operators. Under mild and general theoretical conditions, MNN accommodates a broad range of transformations, enabling flexible and effective fusion of local and global information within a unified framework.

\section*{Appendix}
{\appendices
\section{Verification of the MNN}
\label{norm_def}
Determining whether a definition constitutes a norm, mainly involves verifying its non-negativity, homogeneity, and triangle inequality properties.

It is known that the nuclear norm satisfies the aforementioned three properties of a norm, then we have 
\begin{enumerate}
  \item Non-negativity: $\Vert \X \Vert_* \geq 0$, and $\Vert \X \Vert_* = 0$ if and only if $\X = \mathbf{0}$.
  \item Homogeneity: $\Vert \alpha \X \Vert_* = |\alpha| \cdot \Vert \X \Vert_*$ for any $\alpha$.
  \item Triangle Inequality: $\Vert \X + \Y \Vert_* \leq \Vert \X \Vert_* + \Vert \Y \Vert_*$.
\end{enumerate}

Since MNN of $ \X $ is $ \Vert \mathcal{D}(\X) \Vert_* $ with the fixed normalized linear transformation operator, we can easily obtain the following facts:
\begin{enumerate}
  \item Non-negativity: $\Vert \mathcal{D}(\X) \Vert_* \geq 0$, and $\Vert \X \Vert_* = 0$ if and only if $\X = \mathbf{0}$.
  \item Homogeneity: $\Vert \alpha \mathcal{D} (\X)\Vert_* = |\alpha| \cdot \Vert \mathcal{D}(\X) \Vert_*$ for any $\alpha$.
  \item Triangle Inequality: $\Vert \mathcal{D}(\X + \Y) \Vert_* = \Vert \mathcal{D}(\X) + \mathcal{D}(\Y) \Vert_* \leq \Vert \mathcal{D}(\X) \Vert_* + \Vert \mathcal{D}(\Y) \Vert_*$.
\end{enumerate}
Therefore, the MNN definition is a well-defined norm.

\section{The Proof of MNN-RPCA Theorem}
\label{rpca_proof}
For ease of exposition, let us first provide the equivalent model of MNN-RPCA.
\subsection{The Equivalent Model of MNN-RPCA Model}
Since $\mathcal{D}(\cdot)$ is a linear transformation operator, $\mathcal{D}(\cdot)$ can be rewritten as $\A\X$, that is, the role of $\mathcal{D}(\cdot)$ can be expressed into a corresponding matrix $\A$. For example, the difference operator $\nabla(\cdot)$ on row of matrix $\X\in \mathbb{R}^{n_1\times n_2}$ can be rewritten as:
\begin{equation}
\nabla(\X)= \A \X, \A=\mbox{\textbf{circ}}(-1,1,\overbrace{0,\dots,0}^{n_1-2})
\end{equation}
where ``$ \mbox{\textbf{circ}} $"  denotes the circulant matrix. Thus, the MNN-RPCA model (\ref{mnn_rpca}) can be rewritten as
\begin{equation}
\label{rpca_v}
\min_{\X,\S} \quad \|\A\X \|_* + \lambda \|\S\|_1, s.t. \ \M = \X+\S.
\end{equation}

\subsection{Mathematical Prerequisites}
Before proving Theorem \ref{theorem_main3}, it is helpful to review some basic concepts and introduce certain symbols.

Assuming we have a large data matrix $ \M $ and know that it can be decomposed as:
\begin{equation}
\M = \X_0 + \S_0,
\end{equation}
where $ \X_0 $ and $ \S_0 $ are the low-rank and sparse components, respectively. Theorem \ref{theorem_main3} asserts that by solving a variant of the MNN-RPCA model (\ref{rpca_v}), we can obtain an exact decomposition ($ \X_0, \S_0 $). 

For a given scalar $ x $, we use $ \mbox{sgn}(x) $ to denote the sign of $ x $. Extended to the matrix case, $\mbox{sgn}(\mathbf{S}) $ is a matrix whose elements represent the signs of the elements of $ \mathbf{S} $. Recalling the subdifferential of the $ \ell_1 $ norm at $ \mathbf{S_0} $, it takes the form on the support set $ \Omega $:
\begin{equation*}
\mbox{sgn}(\mathbf{S}_0) + \mathbf{F},
\end{equation*}
where $ \mathbf{F} $ is zero on $ \Omega $, i.e., $ \mathcal{P}_{\Omega}\mathbf{F} = 0 $, and satisfies $ |\vert \mathbf{F}|\vert_{ \infty } \leq 1 $.

Next, we assume that $ \X_0 $ with rank $ r $ has a singular value decomposition $ \U \mathbf{\Sigma} \V^T $, where $ \U \in \mathbb{R}^{n_1\times r} $ and $ \V \in \mathbb{R}^{n_2 \times r} $. Then, according to the chain rule of derivatives, we obtain
\begin{equation}
\frac{\partial  \| \A \X_0  \|_*}{\partial \X_0} = \A^T\U\V^T + \A^T\W,
\end{equation}
where $ \U^T \W = \mathbf{0} $, $ \W\V= \mathbf{0} $, and $ \| \W \| \leq 1 $. We use $ T $ to denote the linear space of matrices, i.e.,
\begin{equation*}
\label{space}
T:= \lbrace \U\X^T + \Y\V^T, \X\in \mathbb{R}^{n_1\times r},\Y\in \mathbb{R}^{n_2 \times r} \rbrace,
\end{equation*}
and use $ T^\bot $ to represent its orthogonal complement. For any matrix $ \M $, the projections onto $ T $ and $ T^\bot $ are given by
\begin{equation}
\begin{split}
& \mathcal{P}_{T} \mathbf{M} = \U\U^T\M +\M\V\V^T- \U\U^T\M \V\V^T, \\
& \mathcal{P}_{T^{\bot}} \mathbf{M} = (\mathbf{I}-\U\U^T)\mathbf{M}(\mathbf{I}-\V\V^T),\\
\end{split}
\end{equation}
where $ \mathbf{I} $ denotes the identity matrix.

Therefore, for any matrix of the form $ \hat{\mathbf{e}}_k\hat{\mathbf{e}}_j^T $, it can be easily observed that
\begin{equation*}
\begin{split}
\|\mathcal{P}_{T^{\bot}}  \hat{\mathbf{e}}_k \hat{\mathbf{e}}_j^T\|_F^2 &= \|(\mathbf{I}-\U\U^T)\hat{\mathbf{e}}_k\|^2 \|(\mathbf{I}-\V\V^T)\hat{\mathbf{e}}_j\|^2 \\
& \geq \left(1-\frac{\mu r}{n_{1}} \right)\left(1-\frac{\mu r}{n_{2}} \right).
\end{split}
\end{equation*}

Assuming $\mu r / n_{(1)} \leq 1 $, where $  n_{(1)}=\max\{n_1,n_2\}$ and $n_{(2)} =\min\{n_1,n_2\} $. Because $ \|\mathcal{P}_{T^{\bot}} \hat{\mathbf{e}}_k \hat{\mathbf{e}}_j^T\|_F^2 + \|\mathcal{P}_{T}\hat{\mathbf{e}}_k \hat{\mathbf{e}}_j^T\|_F^2 = 1$, we can derive
\begin{equation}
\label{coherence_inequality}
\|\mathcal{P}_{T} \hat{\mathbf{e}}_k \hat{\mathbf{e}}_j^T\|_F \leq \sqrt{\frac{2\mu r}{ n_{(2)}}}.
\end{equation}
\subsection{Elimination Theorem}
We begin with a useful definition.
\begin{definition}
If for any $ \mathbf{S}_{ij}^{'} \neq 0 $, $ \mathbf{S}^{'} $ satisfies $ supp(\mathbf{S}^{'}) \subset supp({\mathbf{S}})$ and $ \mathbf{S}_{ij}^{'} = \mathbf{S}_{ij} $, we call $ \mathbf{S}^{'} $ a trimmed version of $ \mathbf{S} $.
\end{definition}
In other words, a trimmed version of $ \mathbf{S} $ is obtained by setting some elements of $ \mathbf{S} $ to zero. Next, the following elimination theorem asserts that if the solution in Eq. (\ref{rpca_v}) recovers the low-rank and sparse components of $ \mathbf{M}_0 = \mathbf{\X}_0 + \mathbf{S}_0 $, then it can also correctly recover the components of $ \mathbf{M}_0^{'}=\mathbf{L}_0 + \mathbf{S}_0^{'}$, where $ \mathbf{S}_0^{'} $ is a trimmed version of $ \mathbf{S}_0 $.
\begin{lemma}
\label{elimination_theroem_dip}
Assume that the solution of Eq. (\ref{rpca_v}) is exact when the input is $ \M_0=  \X_0 + \S_0 $, and consider $ \M_0^{'} =  \X_0 + \S_0^{'} $, where $ \S_0^{'} $ is a trimmed version of $ \S_0 $. Then, the solution of Eq. (\ref{rpca_v}) is also exact when the input is $ \M_0^{'} =  \X_0 + \S_0^{'} $.
\end{lemma}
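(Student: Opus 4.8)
The plan is to follow the classical elimination argument of Cand\`es et al. for RPCA, adapting it to the transformed model (\ref{rpca_v}). The key observation is that the feasible set of (\ref{rpca_v}) is unchanged when passing from $\M_0$ to $\M_0'$ in the sense that both are parametrized by the low-rank part; only the $\ell_1$ term changes. So the strategy is: start from the exact recovery of $(\X_0,\S_0)$ when the input is $\M_0$, suppose for contradiction that $(\X_0,\S_0')$ is \emph{not} the unique optimal solution when the input is $\M_0'$, produce a competing solution, and then ``lift'' it back to a competing solution for the original problem $\M_0$, contradicting exactness there.

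First I would set up notation: let $(\hat\X,\hat\S)$ be any optimal solution of (\ref{rpca_v}) with input $\M_0'$, so $\hat\X+\hat\S=\M_0'=\X_0+\S_0'$ and $\|\A\hat\X\|_*+\lambda\|\hat\S\|_1 \le \|\A\X_0\|_*+\lambda\|\S_0'\|_1$. Since $\S_0'$ is a trimmed version of $\S_0$, write $\S_0 = \S_0' + \S_0''$ where $\S_0''$ is supported on $\mathrm{supp}(\S_0)\setminus\mathrm{supp}(\S_0')$ and agrees with $\S_0$ there; crucially the supports of $\S_0'$ and $\S_0''$ are disjoint, so $\|\S_0'\|_1+\|\S_0''\|_1=\|\S_0\|_1$. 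Now define the candidate pair for the original problem by keeping the same low-rank part and adding back the trimmed-off entries: $\tilde\X := \hat\X$, $\tilde\S := \hat\S + \S_0''$. Then $\tilde\X+\tilde\S = \M_0' + \S_0'' = \X_0+\S_0' +\S_0'' = \X_0+\S_0 = \M_0$, so $(\tilde\X,\tilde\S)$ is feasible for (\ref{rpca_v}) with input $\M_0$.

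Next I would bound its objective. By the triangle inequality for $\ell_1$, $\|\tilde\S\|_1 = \|\hat\S+\S_0''\|_1 \le \|\hat\S\|_1 + \|\S_0''\|_1$, hence
\begin{equation*}
\|\A\tilde\X\|_* + \lambda\|\tilde\S\|_1 \le \|\A\hat\X\|_* + \lambda\|\hat\S\|_1 + \lambda\|\S_0''\|_1 \le \|\A\X_0\|_* + \lambda\|\S_0'\|_1 + \lambda\|\S_0''\|_1 = \|\A\X_0\|_* + \lambda\|\S_0\|_1.
\end{equation*}
So $(\tilde\X,\tilde\S)$ is at least as good as $(\X_0,\S_0)$ for the original problem. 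Since $(\X_0,\S_0)$ is the \emph{unique} optimal solution there (that is what ``exact'' means), we must have $\tilde\X=\X_0$ and $\tilde\S=\S_0$, i.e.\ $\hat\X=\X_0$ and $\hat\S=\S_0-\S_0''=\S_0'$. This shows every optimizer of the trimmed problem equals $(\X_0,\S_0')$, which is exactly the claim. To make the uniqueness chain fully rigorous I would check that equality in the displayed chain forces $\|\hat\S+\S_0''\|_1=\|\hat\S\|_1+\|\S_0''\|_1$ to be consistent with the optimal value, and invoke uniqueness of the original solution to collapse everything; the slightly delicate point — and the main thing to get right — is the bookkeeping that $\S_0'$ and $\S_0''$ have disjoint supports so that $\lambda\|\S_0'\|_1+\lambda\|\S_0''\|_1$ reassembles \emph{exactly} into $\lambda\|\S_0\|_1$ with no slack, which is what lets the inequality be tight enough to pin down the minimizer. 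The linear operator $\A$ plays no active role here beyond appearing formally in the low-rank term, since $\tilde\X$ and $\X_0$ are compared through the same functional $\|\A\cdot\|_*$.
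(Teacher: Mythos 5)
Your proposal is correct and follows essentially the same argument as the paper: decompose $\S_0=\S_0'+\S_0''$ with disjoint supports (the paper writes $\S_0'=\mathcal{P}_{\Omega_0}\S_0$ and $\S_0''=\mathcal{P}_{\Omega_0^{\bot}}\S_0$), lift the optimizer of the trimmed problem to the feasible pair $(\hat\X,\hat\S+\S_0'')$ for the original problem, apply the triangle inequality for $\ell_1$, and invoke uniqueness of the original minimizer. The equality-case bookkeeping you flag at the end is not actually needed, since pair uniqueness of $(\X_0,\S_0)$ already pins down both components.
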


\begin{proof}
First, for some $ \Omega_0 \subset [n] \times [n] $, we express $ \S_0^{'}$ as $ \S_0^{'} = \mathbf{P}_{\Omega_0}{\S_0} $, and let $ (\widehat{\X}, \widehat{\S}) $ be the solution of Equation (\ref{rpca_v}) when the input is $ \M_0^{'} = \X_0 + \S_0^{'} $. Then, we have
\begin{equation*}
\|\A\widehat{\X}\|_* + \lambda \|\widehat{\S}\|_1 
\leq \|\A\X_0\|_*  + \lambda \|\mathbf{P}_{\Omega_0}{\mathbf{S}_0}\|_1.
\end{equation*}
Further, we can obtain
\begin{equation*}
\|\A\widehat{\X}\|_* +  \lambda \left( \|\widehat{\S}\|_1+  \|\mathbf{P}_{\Omega_0^{\bot}}{\mathbf{S}_0}\|_1 \right) \leq \|\A\X_0\|_*  + \lambda \|\S_0\|_1.
\end{equation*}
Noting that $ ( \widehat{\X}, \widehat{\S} + \mathbf{P}_{\Omega_0^{\bot}}{\mathbf{S}_0}) $ is a feasible solution of Eq. (\ref{rpca_v}) when the measurement matrix satisfies the condition $ \M_0= \X_0 + \S_0 $, and $ \Vert \widehat{\S} + \mathbf{P}_{\Omega_0^{\bot}}{\mathbf{S}_0} \Vert_1 \leq \Vert \widehat{\S} \Vert_1 + \Vert \mathbf{P}_{\Omega_0^{\bot}}{\mathbf{S}_0} \Vert_1$, we obtain
\begin{equation*}
\begin{split}
& \|\A \widehat{\X}\|_* + \lambda \left( \|\widehat{\S}+\mathbf{P}_{\Omega_0^{\bot}}{\S_0}\|_1 \right) \\
& \quad \leq \|\A \widehat{\X}\|_* + \lambda \left( \|\widehat{\S}\|_1+  \|\mathbf{P}_{\Omega_0^{\bot}}{\S_0}\|_1 \right) \\
& \quad \leq  \|\A\X_0\|_*  + \lambda \|\S_0\|_1.
\end{split}
\end{equation*}
However, the above right-hand side is the optimal value, and by the uniqueness of the optimal solution, we must have $ \A\widehat{\X} = \A\X_0 $ and $ \widehat{\S} + \mathbf{P}_{\Omega_0^{\bot}}{\S_0} = \S_0 $. Therefore, we further obtain $ \widehat{\X} = \X_0 $ and $ \widehat{\S} =  \mathbf{P}_ {\Omega_0}{\S_0} = \S^{'}$. This proves the lemma.
\end{proof}

\subsection{Dual Verification}
\begin{lemma}
\label{Dual_certifications_dip}
Suppose $ \Vert \mathcal{P}_{\Omega} \mathcal{P}_{T} \Vert \leq 1$. If there exists a pair $ ( \W, \mathbf{F})$ satisfying the following conditions:
\begin{equation}
\label{kkt}
\A^T\U\V^T + \A^T\W =  \lambda (\mbox{sgn}(\S_0)+\mathbf{F})
\end{equation}
where $\mathcal{P}_{T}(\W)=0$, $\|\W\| <1$, and $\mathcal{P}_{\Omega} {(\mathbf{F})}=0$, $\|\mathbf{F}\|_\infty < 1$, then $ \left( \X_0, \S_0 \right) $ is the unique solution of Eq. (\ref{rpca_v}).
\end{lemma}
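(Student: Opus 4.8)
The plan is to run the classical dual‑certificate argument for $\ell_1$‑plus‑nuclear‑norm programs, pushed through the linear operator $\A$. Since $\M=\X_0+\S_0$, every feasible pair for (\ref{rpca_v}) is of the form $(\X_0+\H,\,\S_0-\H)$, so it is enough to show that
\[
\|\A(\X_0+\H)\|_* + \lambda\|\S_0-\H\|_1 \;>\; \|\A\X_0\|_* + \lambda\|\S_0\|_1
\]
for every $\H\neq\mathbf{0}$; this simultaneously gives optimality of $(\X_0,\S_0)$ and its uniqueness.

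First I would bound each term from below by a conveniently chosen subgradient. For the nuclear‑norm term, $\A^{T}(\U\V^{T}+\W_0)$ is a subgradient of $\X\mapsto\|\A\X\|_*$ at $\X_0$ for every $\W_0$ with $\mathcal{P}_{T}\W_0=0$ and $\|\W_0\|\le 1$; picking $\W_0$ to be the partial‑isometry ``sign'' of $\mathcal{P}_{T^{\bot}}(\A\H)$ (which automatically lies in $T^{\bot}$) yields
\[
\|\A(\X_0+\H)\|_* \;\ge\; \|\A\X_0\|_* + \langle\U\V^{T},\A\H\rangle + \|\mathcal{P}_{T^{\bot}}(\A\H)\|_*.
\]
Symmetrically, $\mbox{sgn}(\S_0)+\mathbf{F}_0$ is an $\ell_1$‑subgradient at $\S_0$ for any $\mathbf{F}_0$ with $\mathcal{P}_{\Omega}\mathbf{F}_0=0$ and $\|\mathbf{F}_0\|_{\infty}\le1$; choosing $\mathbf{F}_0=-\mbox{sgn}(\mathcal{P}_{\Omega^{\bot}}\H)$ gives
\[
\lambda\|\S_0-\H\|_1 \;\ge\; \lambda\|\S_0\|_1 - \lambda\langle\mbox{sgn}(\S_0),\H\rangle + \lambda\|\mathcal{P}_{\Omega^{\bot}}\H\|_1.
\]

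Next I would add the two inequalities and use the certificate identity (\ref{kkt}) to eliminate the cross terms: it lets me replace $\langle\U\V^{T},\A\H\rangle-\lambda\langle\mbox{sgn}(\S_0),\H\rangle$ by $\lambda\langle\mathbf{F},\H\rangle-\langle\W,\A\H\rangle$. Since $\mathcal{P}_{\Omega}\mathbf{F}=0$ we have $\langle\mathbf{F},\H\rangle=\langle\mathbf{F},\mathcal{P}_{\Omega^{\bot}}\H\rangle$, and since $\mathcal{P}_{T}\W=0$ we have $\langle\W,\A\H\rangle=\langle\W,\mathcal{P}_{T^{\bot}}(\A\H)\rangle$; applying H\"older ($|\langle\mathbf{F},\cdot\rangle|\le\|\mathbf{F}\|_{\infty}\|\cdot\|_1$) and trace/operator–nuclear duality ($|\langle\W,\cdot\rangle|\le\|\W\|\,\|\cdot\|_*$) collapses the objective gap to
\[
(1-\|\W\|)\,\|\mathcal{P}_{T^{\bot}}(\A\H)\|_* \;+\; \lambda(1-\|\mathbf{F}\|_{\infty})\,\|\mathcal{P}_{\Omega^{\bot}}\H\|_1 \;\ge\; 0,
\]
with both coefficients strictly positive because $\|\W\|<1$ and $\|\mathbf{F}\|_{\infty}<1$. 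Hence the gap is nonnegative, and vanishes only if $\mathcal{P}_{T^{\bot}}(\A\H)=\mathbf{0}$ and $\mathcal{P}_{\Omega^{\bot}}\H=\mathbf{0}$ simultaneously. To close the argument I would show no nonzero $\H$ can satisfy both: the first condition says $\A\H\in T$, the second that $\H$ is supported on $\Omega$, and plugging these into a Cauchy–Schwarz/operator‑norm contraction of the type $\|\H\|_F^2\le\|\mathcal{P}_{\Omega}\mathcal{P}_{T}\|\,\|\H\|_F^2$ (using the hypothesis $\|\mathcal{P}_{\Omega}\mathcal{P}_{T}\|\le1$, really the strict version which Assumption \ref{assumption2} supplies with high probability) forces $\H=\mathbf{0}$, establishing uniqueness.

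The step I expect to be the main obstacle is exactly this last contraction, because of the operator $\A$. In ordinary RPCA one obtains $\H$ itself in $T$ and the projections $\mathcal{P}_\Omega,\mathcal{P}_T$ compose directly; here one only gets $\A\H\in T$, so the contraction must be run on $\A\H$, equivalently against the tangent space pulled back through $\A$, and it is the normalization Assumption \ref{assumption3} together with the full‑rankness of $\A$ that make this pullback legitimate and keep the relevant operator norm strictly below one. The remaining ingredients — existence of the extremal subgradients $\W_0,\mathbf{F}_0$ and the adjoint identities $\langle\A^{T}\W,\H\rangle=\langle\W,\A\H\rangle$ — are routine and I would only record them carefully.
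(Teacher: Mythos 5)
Your proposal follows essentially the same route as the paper's proof: the same extremal subgradient choices $\W_0$ and $\mathbf{F}_0=-\mbox{sgn}(\mathcal{P}_{\Omega^{\bot}}\H)$, the same substitution of the certificate identity (\ref{kkt}) to control the cross term via H\"older and nuclear/operator duality, and the same conclusion that the objective gap is at least $(1-\beta)\bigl(\|\mathcal{P}_{T^{\bot}}(\A\H)\|_*+\lambda\|\mathcal{P}_{\Omega^{\bot}}\H\|_1\bigr)$ with $\beta=\max(\|\W\|,\|\mathbf{F}\|_\infty)<1$. The only divergence is at the very end, where the paper simply asserts the transversality $\Omega\cap T=\{0\}$ while you correctly flag that, because the vanishing conditions are $\A\H\in T$ and $\mbox{supp}(\H)\subset\Omega$, the contraction must be run through $\A$ and requires the strict inequality $\|\mathcal{P}_{\Omega}\mathcal{P}_{T}\|<1$ — a point the paper glosses over.
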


\begin{lemma}
\label{Dual_certifications_Relaxtion_dip}
Assuming $ \Vert \mathcal{P}_{\Omega} \mathcal{P}_{T}\Vert \leq 1/2$. If there exists $ (\W, \mathbf{F})$ satisfying
\begin{equation}
\label{kkt_2}
\A^T\U\V^T + \A^T\W =  \lambda (\mbox{sgn}(\S_0)+\mathbf{F}+\mathcal{P}_\Omega {\mathbf{D}})
\end{equation}
where $ \mathcal{P}_{T}(\W)=0$, $\|\W\|<\frac{1}{2}$, $\|\mathbf{F}\|_\infty < \frac{1}{2}$, and $ \|\mathcal{P}_\Omega {\mathbf{D}}\|_\infty \leq 1/4$, then $ \left( \X_0, \S_0 \right) $ is the optimal solution of Eq. (\ref{rpca_v}).
\end{lemma}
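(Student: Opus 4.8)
The plan is to run the convex-perturbation (dual-certificate) argument that underlies exact recovery for Robust PCA, now with the transformed objective $\|\A\X\|_*+\lambda\|\S\|_1$ of the equivalent model (\ref{rpca_v}). Since every feasible point is of the form $(\X_0+\mathbf H,\ \S_0-\mathbf H)$, the claim reduces to showing $\|\A(\X_0+\mathbf H)\|_*+\lambda\|\S_0-\mathbf H\|_1\ge\|\A\X_0\|_*+\lambda\|\S_0\|_1$ for every perturbation $\mathbf H$. First I would lower-bound each term by a subgradient inequality with a conveniently chosen subgradient: for the nuclear norm pick $\mathbf W_0\in T^{\perp}$ with $\|\mathbf W_0\|\le1$ realizing $\langle\mathbf W_0,\A\mathbf H\rangle=\|\mathcal{P}_{T^{\perp}}(\A\mathbf H)\|_*$, and for the $\ell_1$ norm pick $\mathbf F_0$ supported on $\Omega^{\perp}$ with $\|\mathbf F_0\|_\infty\le1$ realizing $\langle\mathbf F_0,-\mathbf H\rangle=\|\mathcal{P}_{\Omega^{\perp}}(\mathbf H)\|_1$. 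Adding the two inequalities leaves a leftover linear functional in $\mathbf H$ plus the two ``benign'' nonnegative terms $\|\mathcal{P}_{T^{\perp}}(\A\mathbf H)\|_*$ and $\lambda\|\mathcal{P}_{\Omega^{\perp}}(\mathbf H)\|_1$.

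The next step is to eliminate that leftover linear functional using the relaxed certificate (\ref{kkt_2}): writing $\langle\U\V^{T},\A\mathbf H\rangle=\langle\A^{T}\U\V^{T},\mathbf H\rangle$ and substituting $\A^{T}\U\V^{T}-\lambda\,\mathrm{sgn}(\S_0)=\lambda\mathbf F+\lambda\mathcal{P}_\Omega\mathbf D-\A^{T}\mathbf W$, then using $\mathcal{P}_T\mathbf W=0$ and $\mathcal{P}_\Omega\mathbf F=0$, only three inner products remain; these are controlled by norm duality together with the slack in the certificate, $|\langle\mathbf W,\mathcal{P}_{T^{\perp}}(\A\mathbf H)\rangle|<\tfrac12\|\mathcal{P}_{T^{\perp}}(\A\mathbf H)\|_*$, $\lambda|\langle\mathbf F,\mathbf H\rangle|<\tfrac{\lambda}{2}\|\mathcal{P}_{\Omega^{\perp}}(\mathbf H)\|_1$, $\lambda|\langle\mathcal{P}_\Omega\mathbf D,\mathbf H\rangle|\le\tfrac{\lambda}{4}\|\mathcal{P}_\Omega(\mathbf H)\|_1$. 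Collecting everything gives the key inequality
\[
\|\A(\X_0+\mathbf H)\|_*+\lambda\|\S_0-\mathbf H\|_1\ \ge\ \|\A\X_0\|_*+\lambda\|\S_0\|_1+\tfrac12\|\mathcal{P}_{T^{\perp}}(\A\mathbf H)\|_*+\tfrac{\lambda}{2}\|\mathcal{P}_{\Omega^{\perp}}(\mathbf H)\|_1-\tfrac{\lambda}{4}\|\mathcal{P}_\Omega(\mathbf H)\|_1 .
\]

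It then remains to show the negative term cannot overwhelm the benign ones, and this is where $\|\mathcal{P}_\Omega\mathcal{P}_T\|\le\tfrac12$ enters: $\|\mathcal{P}_T\mathcal{P}_\Omega(\mathbf H)\|_F\le\tfrac12\|\mathcal{P}_\Omega(\mathbf H)\|_F$ forces $\|\mathcal{P}_{T^{\perp}}\mathcal{P}_\Omega(\mathbf H)\|_F\ge\tfrac{\sqrt3}{2}\|\mathcal{P}_\Omega(\mathbf H)\|_F$, and combined with $\|\mathcal{P}_{T^{\perp}}\mathcal{P}_{\Omega^{\perp}}(\mathbf H)\|_F\le\|\mathcal{P}_{\Omega^{\perp}}(\mathbf H)\|_F$ this bounds $\|\mathcal{P}_\Omega(\mathbf H)\|$ by a fixed multiple of $\|\mathcal{P}_{T^{\perp}}(\cdot)\|$ and $\|\mathcal{P}_{\Omega^{\perp}}(\mathbf H)\|$; since $\lambda\le1$ (indeed $\lambda=1/\sqrt{n_1}$ in Theorem \ref{theorem_main3}), the benign terms dominate $\tfrac{\lambda}{4}\|\mathcal{P}_\Omega(\mathbf H)\|_1$ and the right-hand side is $\ge\|\A\X_0\|_*+\lambda\|\S_0\|_1$, so $(\X_0,\S_0)$ is optimal; a standard strictness argument, using the strict inequalities in (\ref{kkt_2}) and the full-rankness of $\A$ (Remark \ref{remark_mnn}), upgrades this to uniqueness, matching Lemma \ref{Dual_certifications_dip}.

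The step I expect to be the main obstacle is reconciling the two ambient spaces in the last display: the nuclear-norm side supplies control over $\mathcal{P}_{T^{\perp}}(\A\mathbf H)$ in the \emph{transformed} domain, whereas the harmful term $\|\mathcal{P}_\Omega(\mathbf H)\|_1$ lives on the sparse support $\Omega$ in the \emph{original} domain. Bridging them requires using the matrix $\A$ that represents $\mathcal{D}$, together with its normalization (Assumption \ref{assumption3}) and full-rankness, so that the energy of $\mathbf H$ can be transported faithfully between the two representations; this is precisely where the numerical constants $\tfrac12$ and $\tfrac14$ in the hypotheses become tight, and it is the only place the MNN argument genuinely departs from the classical RPCA dual-certification proof.
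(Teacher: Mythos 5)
Your overall route is the paper's: every feasible point is $(\X_0+\mathbf{H},\S_0-\mathbf{H})$, you lower-bound the objective with well-chosen subgradients of $\|\A\cdot\|_*$ and $\|\cdot\|_1$, substitute the relaxed certificate (\ref{kkt_2}) to kill the linear functional up to three controllable inner products, and then use $\|\mathcal{P}_\Omega\mathcal{P}_T\|\le 1/2$ to absorb the leftover $\Omega$-term. That is exactly the structure of the paper's proof (itself the Cand\`es et al.\ argument transported to $\A\X$).

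There is, however, a genuine gap in your absorption step, caused by a norm mismatch. You bound the extra term by H\"older as $\lambda|\langle\mathcal{P}_\Omega\D,\mathbf{H}\rangle|\le\frac{\lambda}{4}\|\mathcal{P}_\Omega\mathbf{H}\|_1$, so your harmful term is an $\ell_1$ norm; but the machinery you then invoke ($\|\mathcal{P}_T\mathcal{P}_\Omega(\mathbf{H})\|_F\le\frac12\|\mathcal{P}_\Omega(\mathbf{H})\|_F$, etc.) only controls $\|\mathcal{P}_\Omega\mathbf{H}\|_F$. Since $\|\mathcal{P}_\Omega\mathbf{H}\|_1$ can exceed $\|\mathcal{P}_\Omega\mathbf{H}\|_F$ by a factor of $\sqrt{|\Omega|}=\sqrt{m}$, no fixed-constant bound of the form $\frac{\lambda}{4}\|\mathcal{P}_\Omega\mathbf{H}\|_1\le\frac12\|\mathcal{P}_{T^{\perp}}(\A\mathbf{H})\|_*+\frac{\lambda}{4}\|\mathcal{P}_{\Omega^{\perp}}\mathbf{H}\|_1$ can follow from these Frobenius estimates: take $\mathbf{H}$ supported on $\Omega$ (so the $\Omega^{\perp}$ term vanishes) with $\A\mathbf{H}$ of low rank, so that $\|\mathcal{P}_{T^{\perp}}(\A\mathbf{H})\|_*\approx\|\A\mathbf{H}\|_F\le\|\mathbf{H}\|_F$ while $\frac{\lambda}{4}\|\mathbf{H}\|_1\approx\frac{\lambda\sqrt m}{4}\|\mathbf{H}\|_F$, which is not dominated when $m\gg n_1$. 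The paper avoids this by keeping the harmful term in Frobenius norm: it bounds the inner product by Cauchy--Schwarz to get $-\frac{\lambda}{4}\|\mathcal{P}_\Omega(\A\mathbf{H})\|_F$ (i.e.\ it effectively uses a Frobenius bound on $\mathcal{P}_\Omega\D$, as in Cand\`es et al.'s Lemma 2.5, rather than the $\ell_\infty$ bound written in the statement), then derives $\|\mathcal{P}_\Omega(\A\mathbf{H})\|_F\le\|\mathcal{P}_{\Omega^{\perp}}\mathbf{H}\|_F+2\|\mathcal{P}_{T^{\perp}}(\A\mathbf{H})\|_F$ from $\|\mathcal{P}_\Omega\mathcal{P}_T\|\le1/2$ and Assumption \ref{assumption3}, and finally upgrades via $\|\cdot\|_F\le\|\cdot\|_1$ and $\|\cdot\|_F\le\|\cdot\|_*$ to land on $\frac{1-\lambda}{2}\|\mathcal{P}_{T^{\perp}}(\A\mathbf{H})\|_*+\frac{\lambda}{4}\|\mathcal{P}_{\Omega^{\perp}}\mathbf{H}\|_1\ge0$. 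The direction of those last two inequalities is precisely why the harmful term must be Frobenius, not $\ell_1$. (A minor additional point: the lemma only asserts optimality, so your closing appeal to full-rankness of $\A$ for uniqueness is not needed and is not part of the paper's argument.)
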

The above lemma indicates that, to prove our final exact recovery result, it suffices to generate dual variables $\W$ satisfying the following conditions:
\begin{equation}
\label{Certification}
\left\{
             \begin{array}{lc}
             \ \W \in T^{\bot}, &  \\
             \| \W\| <\frac{1}{2}, &  \\
             \| \mathcal{P}_\Omega \left( \A^T\U\V^T +\A^T\W - \lambda \mbox{sgn}(\S_0) \right)\|_F \leq \lambda \eta,&\\
             \|\mathcal{P}_{\Omega^{\bot}}\left( \A^T\U\V^T +\A^T\W  \right)\|_\infty < \frac{\lambda}{2}.
             \end{array}
\right.
\end{equation}

\subsection{Construct dual variables based on the Golfing Scheme}
\label{golf_scheme_dip}
The remaining task is to construct dual variables that satisfy the corresponding conditions mentioned above. Before introducing our construction method, we first assume that $ \Omega\sim \mbox{Ber}\left( \rho\right) $, or equivalently, $ \Omega^{c}\sim\mbox{Ber}\left( 1-\rho\right) $. Naturally, the distribution of $\Omega^{c} $ is identical to the distribution of $ \Omega^{c} = \Omega_{1} \cup \Omega_{2} \cup \dots \cup\Omega_{j_{0}} $, where each $ \Omega_{j} $ follows a Bernoulli model with parameter $ q $, i.e.,
\begin{equation*}
\mathbb{P}\left( (i,j) \in \Omega \right) = \mathbb{P}(\mbox{Bin}(j^0,q)=0)=(1-q)^{j_0}.
\end{equation*}
Thus, if the following condition is satisfied:
\begin{equation*}
\rho = \left( 1-q\right)^{j_0},
\end{equation*}
the two models are equivalent. We decompose $\W$ as
\begin{equation*}
\mathbf{W} = \mathbf{W}^L + \mathbf{W}^S,
\end{equation*}
where each component can be constructed as follows:

\textbf{Constructing $ \mathbf{W}^L $ based on the Golfing scheme.} Suppose $ j_0 \geqslant 1 $, and let $ \Omega_j, 1 \leq j \leq j_0 $ be such that $ \Omega^{c} = \cup_ {1 \leq j \leq j_0} \Omega_{j}$. We then define
\begin{equation}
\label{Low rank golf dip}
\mathbf{W}^L = \mathcal{P}_{T^{\bot}} \mathbf{Y}_{j_0},
\end{equation}
where
\begin{equation}
\label{Y_j}
\mathbf{Y}_j = \mathbf{Y}_{j-1} + q^{-1}\mathcal{P}_{\Omega_j}\mathcal{P}_{T} \left(\U_{1}\V_{1}^T - \mathbf{Y}_{j-1} \right), \quad \mathbf{Y}_0 = 0.
\end{equation}

\textbf{Constructing $ \mathbf{W}^S $ via Least Squares.} Suppose $ \|\mathcal{P}_\Omega \mathcal{P}_{T}\| \leq \frac{1}{2} $. Then, $ \|\mathcal{P}_\Omega \mathcal{P}_{T} \mathcal{P}_\Omega \| < \frac{1}{4} $, and thus, the operator $ \mathcal{P}_\Omega- \mathcal{P}_\Omega \mathcal{P}_{T} \mathcal{P}_\Omega $, which maps $ \Omega $ to itself, is invertible with the inverse operator denoted as $ \left( \mathcal{P}_\Omega- \mathcal{P}_\Omega \mathcal{P}_{T} \mathcal{P}_\Omega \right)^{-1} $. We then set
\begin{equation}
\label{sparse golf dip}
\mathbf{W}^S = \lambda \mathcal{P}_{T^{\bot}} \left( \mathcal{P}_\Omega- \mathcal{P}_\Omega \mathcal{P}_{T} \mathcal{P}_\Omega \right)^{-1} \left(\mbox{sgn}(\mathbf{S_0}) \right).
\end{equation}
Eq. (\ref{sparse golf dip}) is equivalent to
\begin{equation}
\mathbf{W}^S = \lambda \mathcal{P}_{T^{\bot}}\sum_{k\geqslant 0}{\left(\mathcal{P}_\Omega \mathcal{P}_{T} \mathcal{P}_\Omega \right)^k } \left(\mbox{sgn}(\mathbf{S})\right).
\end{equation}
Since both $ \mathbf{W}^L $ and $ \mathbf{W}^S $ belong to $ T^{\bot} $, and $  \mathcal{P}_{\Omega} \W^S = \lambda \mathcal{P}_{\Omega} (\mathcal{I}-\mathcal{P}_{T}) \left( \mathcal{P}_\Omega- \mathcal{P}_\Omega \mathcal{P}_{T} \mathcal{P}_\Omega \right)^{-1} \left(\mbox{sgn}(\mathbf{S_0}) \right)= \lambda \mbox{sgn}(\S_0) $, we only need to show that if $ \W^L + \W^S $ satisfies the following conditions:
\begin{equation}
\label{Final Dual Certificate dip}
\left\{
             \begin{array}{lc}
             \| \W^L + \W^S \| <\frac{1}{2}, &  \\
             \|\mathcal{P}_\Omega \left( \A^T\U \V^T + \A^T\W^L \right) \|_F \leq \lambda\eta,&\\
              \|\mathcal{P}_{\Omega^{\bot}} \left( \A^T(\U \V^T+\W^L + \W^S) \right)\|_{\infty} \leq \frac{\lambda}{2}.
             \end{array}
\right.
\end{equation}
Then it is an effective dual certificate. The proof of Eq. (\ref{Final Dual Certificate dip}) can be ensured through the following two lemmas.
\begin{lemma}
\label{low rank certificate dip}
Assuming $ \Omega \sim \mbox{Ber}(\rho) $, where $ \rho \leq \rho_s $ for some $ \rho_s >0 $. Set $ j_0 = 2 \lceil {\rm log} n\rceil $ (for rectangular matrices, use $ {\rm log} n_{(1)}$). Then, the $ \mathbf{W}^L $ in Eq. (\ref{Low rank golf dip}) satisfies the following conditions:
\begin{itemize}
\item[(a)] $ \| \mathbf{W}^L\| < 1/4, $
\item[(b)] $ \|\mathcal{P}_\Omega \left( \A^T\U \V^T +\A^T\mathbf{W}^L \right)\|_F \leq \lambda\eta,$
\item[(c)] $ \|\mathcal{P}_{\Omega^\bot}\left( \A^T\U \V^T +\A^T\mathbf{W}^L\right) \|_\infty <\lambda/4.$
\end{itemize}
\end{lemma}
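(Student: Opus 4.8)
The plan is to adapt the golfing-scheme argument of Candès et al.\ for Robust PCA to the transformed setting, tracking how the fixed normalized operator $\A$ propagates through each estimate. Throughout, write $\U_1\V_1^T := \U\V^T$ for brevity and recall from the setup that $\mathbf{Y}_{j_0} = \sum_{j=1}^{j_0} q^{-1}\mathcal{P}_{\Omega_j}\mathcal{P}_T(\cdots)$ with $\mathbf{Z}_j := \U\V^T - \mathcal{P}_T\mathbf{Y}_j$ satisfying the recursion $\mathbf{Z}_j = (\mathcal{P}_T - q^{-1}\mathcal{P}_T\mathcal{P}_{\Omega_j}\mathcal{P}_T)\mathbf{Z}_{j-1}$, $\mathbf{Z}_0 = \U\V^T$. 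The two standard operator-norm bounds I would invoke (and cite to \cite{candes2011robust}) are: (i) with high probability $\Vert \mathcal{P}_T - q^{-1}\mathcal{P}_T\mathcal{P}_{\Omega_j}\mathcal{P}_T\Vert \le 1/2$ for each $j$, which by iterating gives $\Vert \mathbf{Z}_{j}\Vert_F \le 2^{-j}\Vert\U\V^T\Vert_F = 2^{-j}\sqrt{r}$; and (ii) the $\Vert\cdot\Vert_\infty$ contraction $\Vert(\mathcal{P}_T - q^{-1}\mathcal{P}_T\mathcal{P}_{\Omega_j}\mathcal{P}_T)\mathbf{Z}\Vert_\infty \le \tfrac12\Vert\mathbf{Z}\Vert_\infty$, giving $\Vert\mathbf{Z}_j\Vert_\infty \le 2^{-j}\Vert\U\V^T\Vert_\infty \le 2^{-j}\sqrt{\mu r/(n_1 n_2)}$. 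Setting $j_0 = 2\lceil\log n_{(1)}\rceil$ makes both $\Vert\mathbf{Z}_{j_0}\Vert_F$ and $n_{(1)}^2\Vert\mathbf{Z}_{j_0}\Vert_\infty$ polynomially small.

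For part (b): since $\mathcal{P}_{\Omega}(\W^L) = 0$ would be false, but $\Omega \subseteq \Omega^c{}^c$... more precisely $\mathbf{W}^L = \mathcal{P}_{T^\bot}\mathbf{Y}_{j_0}$ and $\mathcal{P}_{\Omega}\mathbf{Y}_{j_0}$ need not vanish; however $\mathcal{P}_{\Omega}\mathcal{P}_{T^\bot}\mathbf{Y}_{j_0} = \mathcal{P}_{\Omega}\mathbf{Y}_{j_0} - \mathcal{P}_{\Omega}\mathcal{P}_T\mathbf{Y}_{j_0}$ and on $\Omega$ one has $\mathbf{Y}_{j_0}$ built only from $\Omega_j \subseteq \Omega^c$, so $\mathcal{P}_{\Omega}\mathbf{Y}_{j_0} = 0$, leaving $\mathcal{P}_{\Omega}\mathbf{W}^L = -\mathcal{P}_{\Omega}\mathcal{P}_T\mathbf{Y}_{j_0} = \mathcal{P}_{\Omega}(\mathbf{Z}_{j_0} - \U\V^T)$. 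Hence $\mathcal{P}_\Omega(\U\V^T + \mathbf{W}^L) = \mathcal{P}_\Omega \mathbf{Z}_{j_0}$, and applying $\A^T$:
\begin{equation*}
\Vert \mathcal{P}_\Omega(\A^T\U\V^T + \A^T\mathbf{W}^L)\Vert_F \le \Vert\A^T\Vert\,\Vert\mathcal{P}_\Omega\mathbf{Z}_{j_0}\Vert_F \le \Vert\A\Vert_F\,\Vert\mathbf{Z}_{j_0}\Vert_F \le 2^{-j_0}\sqrt{r},
\end{equation*}
where the crucial point is that Assumption~\ref{assumption3} gives $\Vert\A\Vert_F = 1$, so $\Vert\A^T\Vert \le 1$; choosing $j_0$ as above makes this $\le \lambda\eta$ with $\lambda = 1/\sqrt{n_1}$ and a suitable constant $\eta$.

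For part (a), I would write $\mathbf{W}^L = \mathcal{P}_{T^\bot}\sum_{j=1}^{j_0} q^{-1}\mathcal{P}_{\Omega_j}\mathbf{Z}_{j-1}$ and bound $\Vert\mathbf{W}^L\Vert \le \sum_j \Vert\mathcal{P}_{T^\bot}(q^{-1}\mathcal{P}_{\Omega_j}\mathbf{Z}_{j-1} - \mathbf{Z}_{j-1})\Vert$ using that $\mathcal{P}_{T^\bot}\mathbf{Z}_{j-1} = 0$ since $\mathbf{Z}_{j-1} \in T$; then invoke the standard bound $\Vert(q^{-1}\mathcal{P}_{\Omega_j} - \mathcal{I})\mathbf{Z}_{j-1}\Vert \le C\sqrt{\log n_{(1)}/q}\,\Vert\mathbf{Z}_{j-1}\Vert_\infty + C(\log n_{(1)}/q)\Vert\mathbf{Z}_{j-1}\Vert_{\infty}$ (the operator-norm-of-sparsified-matrix bound, e.g.\ Theorem in \cite{candes2011robust}), combined with the geometric decay of $\Vert\mathbf{Z}_{j-1}\Vert_\infty$, so the sum is dominated by its first term and stays below $1/4$ when $\rho \le \rho_s$ is small enough. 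For part (c): on $\Omega^\bot$ the matrix $\U\V^T + \mathbf{W}^L$ equals, after a short computation using $\mathbf{W}^L = \mathbf{Y}_{j_0} - \mathcal{P}_T\mathbf{Y}_{j_0}$ on the full index set, $\mathbf{Z}_{j_0}$ plus correction terms controlled in $\ell_\infty$ by the same geometric-decay estimates; applying $\A^T$ costs a factor $\Vert\A^T\Vert \le 1$ in the induced $\ell_\infty \to \ell_\infty$ sense only if $\A$ is also row-normalized, so here I would instead use $\Vert\A^T\M\Vert_\infty \le \Vert\A\Vert_F\Vert\M\Vert_F$ or a column-norm bound of $\A$, absorbing any operator-dependent constant into the universal constants — this is exactly where Assumption~\ref{assumption3} is used a second time.

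The main obstacle I anticipate is the last point: controlling $\Vert\mathcal{P}_{\Omega^\bot}(\A^T(\cdots))\Vert_\infty$, because $\A^T$ does not interact cleanly with the entrywise $\ell_\infty$ norm the way $\mathcal{P}_T$, $\mathcal{P}_\Omega$ do. The resolution is to pass through the Frobenius norm (paying a factor at most $\sqrt{n_1 n_2}$, which the extra $\log$-power and polynomial slack in $j_0$ can afford) or to observe that for the specific structured operators of interest (difference, Sobel, Laplacian) $\A$ has bounded row sparsity and bounded entries after normalization, so $\Vert\A^T\M\Vert_\infty \le (\text{const})\,\Vert\M\Vert_\infty$; either way the operator-dependent constant is harmless. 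A secondary subtlety is verifying $\Vert\mathcal{P}_\Omega\mathcal{P}_T\Vert \le 1/2$ (needed to even define $\mathbf{W}^S$ and invoke this lemma), which follows from the incoherence inequality~(\ref{coherence_inequality}) together with a Bernoulli-restriction operator bound, as in the original RPCA proof, and is unchanged by the presence of $\A$. Everything else is a routine transcription of the Candès–Li–Ma–Wright golfing argument with the bookkeeping factor $\Vert\A\Vert_F = 1$ inserted.
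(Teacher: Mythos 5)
Your overall architecture is the same as the paper's: the golfing recursion $\Z_j=(\mathcal{P}_T-q^{-1}\mathcal{P}_T\mathcal{P}_{\Omega_j}\mathcal{P}_T)\Z_{j-1}$, the identity $\mathcal{P}_\Omega(\U\V^T+\W^L)=\mathcal{P}_\Omega(\Z_{j_0})$ for (b), the telescoping bound on $\|\W^L\|$ via the sparsified-matrix operator-norm lemma for (a), and the use of Assumption 3 to pass $\A^T$ through the norms. Parts (a) and (b) of your sketch are essentially the paper's proof. The genuine gap is in part (c), in two respects. First, you fix the per-step contraction factor at $1/2$, but the decisive estimate in (c) is $\|\Y_{j_0}\|_{\infty}\leq q^{-1}\sum_j\epsilon^{j}\|\U\V^T\|_{\infty}\leq\lambda/8$, and this only closes when $\epsilon$ is chosen \emph{non-constant}, namely $\epsilon\lesssim\bigl(\mu r(\log n_{(1)})^2/n_{(2)}\bigr)^{1/4}$; this is precisely where the rank condition (\ref{pho_r_mnn}) enters the proof. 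You never carry out this computation — you only say the correction terms are "controlled by the same geometric-decay estimates" — and with $\epsilon=1/2$ the resulting bound on $\|\Y_{j_0}\|_\infty$ is of order $q^{-1}\sqrt{\mu r/(n_1n_2)}$, which is not below $\lambda/8$ without invoking the rank condition explicitly. Since $\|\Y_{j_0}\|_\infty$ (not $\|\Z_{j_0}\|_\infty$) is the dominant term on $\Omega^{\bot}$, this is the heart of (c), not a routine transcription.

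Second, your first proposed workaround for the interaction of $\A^T$ with $\|\cdot\|_\infty$ — "pass through the Frobenius norm, paying a factor at most $\sqrt{n_1n_2}$, which the polynomial slack in $j_0$ can afford" — fails. The polynomial slack is available only for $\Z_{j_0}$, whose Frobenius norm decays like $\epsilon^{j_0}\sqrt{r}$. But $\U\V^T+\W^L=\Z_{j_0}+\Y_{j_0}$, and $\|\Y_{j_0}\|_F\leq q^{-1}\sum_j\|\Z_{j-1}\|_F=O(q^{-1}\sqrt{r})$ does not decay at all, so a $\sqrt{n_1n_2}$ penalty is fatal there. Your second workaround — an entrywise bound $\|\A^T\M\|_\infty\lesssim\|\M\|_\infty$ from the normalization (and, in practice, the bounded support) of $\A$ — is the route the paper takes, asserting $\|\A^T(\U\V^T+\W^L)\|_\infty\leq\|\U\V^T+\W^L\|_\infty$ directly from Assumption 3 and then bounding $\|\Z_{j_0}\|_\infty$ and $\|\Y_{j_0}\|_\infty$ separately by $\lambda/8$ each. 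If you commit to that second option and supply the $\epsilon$-tuning for $\|\Y_{j_0}\|_\infty$, your argument matches the paper's.
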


\begin{lemma}
\label{sparse certificate dip}
Assume $ \Omega\sim \mbox{Ber}(\rho_s) $, and the signs of $ \S_0 $ are independent and identically distributed symmetric random variables (independent of $ \Omega $). Then, the matrix $ \mathbf{W}^S $ in Eq. (\ref{sparse golf dip}) satisfies the following conditions:
\begin{itemize}
\item[(a)] $ \|\mathbf{W}^S\| < 1/4$,
\item[(b)] $ \| \mathcal{P}_{\Omega^{\bot}} (\A^T\mathbf{W}^S) \|_\infty < \lambda/4$.
\end{itemize}
\end{lemma}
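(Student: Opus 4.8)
The plan is to follow the Candès--Li--Ma--Wright dual-certificate scheme for Robust PCA, proving (a) and (b) separately, and exploiting the fact that the operator $\A$ does \emph{not} appear in the definition of $\W^S$ in Eq.~(\ref{sparse golf dip}) — it enters only through the quantity $\A^T\W^S$ in part (b). Throughout I work on the high-probability event (established earlier, under the assumed rank/sparsity bounds) on which $\|\mathcal{P}_\Omega\mathcal{P}_T\|\le 1/2$, hence $\|\mathcal{P}_\Omega\mathcal{P}_T\mathcal{P}_\Omega\|=\|\mathcal{P}_\Omega\mathcal{P}_T\|^2\le 1/4$ and $(\mathcal{P}_\Omega-\mathcal{P}_\Omega\mathcal{P}_T\mathcal{P}_\Omega)^{-1}$ is well defined on $\Omega$-supported matrices with operator norm $\le 4/3$; recall also that $T$ and $\U,\V$ refer to the SVD of $\mathcal{D}(\X_0)=\A\X_0$, so the incoherence inequality~(\ref{coherence_inequality}) applies to $T$.

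For part (a), $\W^S=\lambda\mathcal{P}_{T^{\bot}}\sum_{k\ge 0}(\mathcal{P}_\Omega\mathcal{P}_T\mathcal{P}_\Omega)^k\mbox{sgn}(\S_0)$ is exactly the classical least-squares dual variable, so I would reproduce the standard argument verbatim: peel off the leading term $\lambda\mathcal{P}_{T^{\bot}}\mbox{sgn}(\S_0)$, whose spectral norm is at most $\lambda\|\mbox{sgn}(\S_0)\|$; since $\mbox{sgn}(\S_0)$ is a random $\pm1$ matrix on a $\mbox{Ber}(\rho_s)$ support, a standard random-matrix estimate gives $\|\mbox{sgn}(\S_0)\|\le 4\sqrt{\rho_s n_{(1)}}$ with high probability, so this term is $\le 4\sqrt{\rho_s}$ because $\lambda=1/\sqrt{n_{(1)}}$. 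The higher-order remainder $\sum_{k\ge 1}(\mathcal{P}_\Omega\mathcal{P}_T\mathcal{P}_\Omega)^k\mbox{sgn}(\S_0)$ is controlled by the contraction $\|\mathcal{P}_\Omega\mathcal{P}_T\mathcal{P}_\Omega\|\le 1/4$ together with the concentration of $\mathcal{P}_\Omega$ on $T$; taking the numerical constant $\rho_s$ small enough forces the sum of the two contributions below $1/4$.

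For part (b), fix $(a,b)\in\Omega^{\bot}$ and write $\A e_a$ for the $a$-th column of the matrix representation of $\mathcal{D}(\cdot)$, so that $(\A^T\W^S)_{ab}=\langle \A e_a e_b^T,\W^S\rangle$. Substituting Eq.~(\ref{sparse golf dip}), moving $\mathcal{P}_{T^{\bot}}$ and $(\mathcal{P}_\Omega-\mathcal{P}_\Omega\mathcal{P}_T\mathcal{P}_\Omega)^{-1}$ across (both self-adjoint, the latter on $\Omega$-supported matrices), and using that $\mbox{sgn}(\S_0)$ is supported on $\Omega$, this equals $\lambda\langle c^{(a,b)},\mbox{sgn}(\S_0)\rangle$ with $c^{(a,b)}=(\mathcal{P}_\Omega-\mathcal{P}_\Omega\mathcal{P}_T\mathcal{P}_\Omega)^{-1}\mathcal{P}_\Omega\mathcal{P}_{T^{\bot}}(\A e_a e_b^T)$, a fixed matrix once $\Omega$ is frozen. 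Conditionally on $\Omega$ the signs of $\S_0$ are i.i.d.\ Rademacher, so Hoeffding's inequality gives $\mathbb{P}(|(\A^T\W^S)_{ab}|>\lambda/4)\le 2\exp(-\lambda^2/(32\,\lambda^2\|c^{(a,b)}\|_F^2))$, and $\|c^{(a,b)}\|_F\le \tfrac{4}{3}\|\mathcal{P}_{T^{\bot}}(\A e_a e_b^T)\|_F\le \tfrac{4}{3}\|\A e_a e_b^T\|_F=\tfrac{4}{3}\|\A e_a\|_2$. The decisive point is the Normalization Assumption~\ref{assumption3}: for a convolution kernel the matrix $\A$ is circulant (or BCCB), so every column has the same norm $\|\A e_a\|_2=\|\A\|_F/\sqrt{n_1}=1/\sqrt{n_1}$; hence $\|c^{(a,b)}\|_F=O(1/\sqrt{n_1})$, the exponent above is $-\Theta(n_1)$, and a union bound over the at most $n_1n_2$ entries of $\Omega^{\bot}$ yields $\|\mathcal{P}_{\Omega^{\bot}}(\A^T\W^S)\|_\infty<\lambda/4$ with high probability. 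Together with Lemma~\ref{low rank certificate dip} this establishes the dual certificate~(\ref{Final Dual Certificate dip}).

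The main obstacle — and the only genuinely new ingredient relative to classical RPCA — is exactly this last step. In the unmodified problem the smallness of the analogue of $\|c^{(a,b)}\|_F$ comes for free from the incoherence of $T$, since $\langle e_ae_b^T,\cdot\rangle$ annihilates the $\Omega$-supported factor $(\mathcal{P}_\Omega-\mathcal{P}_\Omega\mathcal{P}_T\mathcal{P}_\Omega)^{-1}\mbox{sgn}(\S_0)$ and leaves only the tiny component $\mathcal{P}_T(e_ae_b^T)$; inserting $\A$ destroys that cancellation, because $\A e_ae_b^T$ is a rank-one matrix spread over the footprint of the kernel. The smallness must therefore instead be supplied by the Frobenius normalization. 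I would be careful to state explicitly that for the difference, Sobel, and Laplacian operators the normalization distributes the mass evenly across columns so that no single $\|\A e_a\|_2$ dominates, and to verify that the same column-norm bound $\|\A e_a\|_2=1/\sqrt{n_1}$ survives when $\A$ acts on both spatial dimensions.
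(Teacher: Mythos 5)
Your proof is correct and, for part (b), takes a genuinely different route from the paper. For part (a) the two arguments coincide: the paper simply invokes Lemma 2.9 of Candès et al.\ (2011) for $\|\W^S\|<1/4$ and $\|\W^S\|_\infty<\lambda/4$, which is exactly the random-sign spectral bound plus Neumann-series contraction you spell out (and indeed $\A$ plays no role there, as you observe). For part (b), however, the paper does not redo any concentration: it passes directly from the classical bound $\|\W^S\|_\infty<\lambda/4$ to the conclusion via the one-line chain $\|\mathcal{P}_{\Omega^{\bot}}(\A^T\W^S)\|_\infty\le\|\A^T\W^S\|_\infty\le\|\W^S\|_\infty$, attributed to Assumption \ref{assumption3}. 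You instead recompute the entrywise bound from scratch, writing $(\A^T\W^S)_{ab}=\lambda\langle c^{(a,b)},\mbox{sgn}(\S_0)\rangle$ and applying Hoeffding conditionally on $\Omega$, with the smallness of $\|c^{(a,b)}\|_F$ supplied by the uniform column norm $\|\A e_a\|_2=1/\sqrt{n_1}$ of the normalized circulant $\A$ rather than by incoherence. Your route is longer but arguably more careful: the inequality $\|\A^T\M\|_\infty\le\|\M\|_\infty$ does not follow from $\|\A\|_F=1$ alone (it requires $\max_a\|\A e_a\|_1\le 1$, i.e., the mass of $\A$ spread across columns, which holds for the normalized convolution operators actually used but is not what Assumption \ref{assumption3} literally states), whereas your argument makes the required column-norm structure explicit and also correctly identifies why the classical cancellation at $(a,b)\in\Omega^{\bot}$ — namely that $\mathcal{P}_\Omega(e_ae_b^T)=0$ leaves only the incoherence-controlled component — is destroyed once $\A$ is inserted. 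What the paper's shortcut buys is brevity and reuse of the already-certified $\|\W^S\|_\infty$ bound; what yours buys is a self-contained verification that isolates precisely which property of $\A$ (equal, small column norms) the lemma actually needs.
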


\subsection{Proof of Dual Verification}
Before proving Lemma \ref{low rank certificate dip}, we need to invoke the following three lemmas.

\begin{lemma}
\label{T_condition}
[Lemma 4.1 in \cite{candes2012exact}]:
Assuming $ \Omega_0 \sim \mbox{Ber}(\rho_0) $. For some $ C_{0} >0$, when $ \rho_0 \geq C_{0}\epsilon^{-2} \beta \mu r \log n_{(1)} /n_{(2)}$, the following equation 
\begin{equation}
\|\mathcal{P}_{T} -\rho_0^{-1}\mathcal{P}_{T}\mathcal{P}_{\Omega_0}\mathcal{P}_{T}\| \leq \epsilon,
\end{equation}
hold with a high probability.
\end{lemma}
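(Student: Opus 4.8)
This lemma is, as its statement indicates, exactly Lemma~4.1 of \cite{candes2012exact}, and its proof carries over verbatim: here $T$ is simply the tangent space at the transformed low-rank matrix $\mathcal{D}(\X_0)=\A\X_0$, with $\U,\V$ its singular vectors, and once $T$ is fixed the operator $\mathcal{D}$ plays no further role. So it suffices to cite the reference; below I sketch the self-contained argument via the non-commutative (operator) Bernstein inequality, which is how I would present it if a stand-alone proof were wanted.

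First, expand the coordinate projection as $\mathcal{P}_{\Omega_0}=\sum_{(i,j)}\delta_{ij}\,\langle e_ie_j^T,\cdot\rangle\,e_ie_j^T$ with $\delta_{ij}$ i.i.d.\ $\mbox{Ber}(\rho_0)$. Using self-adjointness of $\mathcal{P}_{T}$ together with the identity $\mathcal{P}_{T}=\sum_{(i,j)}\mathcal{P}_{T}(e_ie_j^T)\otimes\mathcal{P}_{T}(e_ie_j^T)$, where $u\otimes u$ denotes the rank-one operator $\mathbf{M}\mapsto\langle u,\mathbf{M}\rangle u$, one writes
\[
\rho_0^{-1}\mathcal{P}_{T}\mathcal{P}_{\Omega_0}\mathcal{P}_{T}-\mathcal{P}_{T}=\sum_{(i,j)}\mathcal{Z}_{ij},\qquad \mathcal{Z}_{ij}:=(\rho_0^{-1}\delta_{ij}-1)\,\mathcal{P}_{T}(e_ie_j^T)\otimes\mathcal{P}_{T}(e_ie_j^T),
\]
a sum of independent, self-adjoint, mean-zero operators on $\mathbb{R}^{n_1\times n_2}$.

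Second, bound the two quantities Bernstein needs, both via the incoherence estimate (\ref{coherence_inequality}), i.e.\ $\|\mathcal{P}_{T}(e_ie_j^T)\|_F^2\le 2\mu r/n_{(2)}$. For the uniform bound, $\|\mathcal{Z}_{ij}\|\le\rho_0^{-1}\|\mathcal{P}_{T}(e_ie_j^T)\|_F^2\le 2\mu r/(\rho_0 n_{(2)})=:R$. For the variance, $\mathbb{E}[\mathcal{Z}_{ij}^2]\preceq\rho_0^{-1}\|\mathcal{P}_{T}(e_ie_j^T)\|_F^2\,\mathcal{P}_{T}(e_ie_j^T)\otimes\mathcal{P}_{T}(e_ie_j^T)$, so summing over $(i,j)$ and reusing the identity for $\mathcal{P}_{T}$ gives $\big\|\sum_{(i,j)}\mathbb{E}[\mathcal{Z}_{ij}^2]\big\|\le 2\mu r/(\rho_0 n_{(2)})=:\sigma^2$. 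Feeding $R$ and $\sigma^2$ into the operator Bernstein tail bound yields, for $\epsilon\le 1$, $\mathbb{P}\big[\|\sum_{(i,j)}\mathcal{Z}_{ij}\|>\epsilon\big]\le 2n_1n_2\exp\!\big(-c\,\epsilon^2\rho_0 n_{(2)}/(\mu r)\big)$; taking $\rho_0\ge C_0\,\epsilon^{-2}\beta\mu r\log n_{(1)}/n_{(2)}$ with $C_0$ large enough makes this at most $n_{(1)}^{-\beta}$, which is the asserted ``high probability'' (with $\beta$ tunable so that the later union bound over the $j_0=2\lceil\log n_{(1)}\rceil$ copies still succeeds).

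The only obstacle here is bookkeeping rather than ideas: one must invoke the correct operator-valued Bernstein inequality and track its absolute constants so the threshold on $\rho_0$ comes out with precisely the $\epsilon^{-2}\beta\mu r\log n_{(1)}/n_{(2)}$ scaling, and one must be careful that the incoherence bound is applied to the tangent space of $\mathcal{D}(\X_0)$ and not of $\X_0$. Since these steps are routine and \cite{candes2012exact} already carries them out in exactly this form, in the paper it is enough to cite that reference.
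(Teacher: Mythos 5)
Your proposal is correct and matches what the paper does: the paper offers no proof of this lemma, simply importing it as Lemma~4.1 of \cite{candes2012exact}, which is exactly your main point. Your supplementary sketch via the operator Bernstein inequality (with the uniform and variance bounds both coming from $\|\mathcal{P}_{T}(e_ie_j^T)\|_F^2\leq 2\mu r/n_{(2)}$) is the standard argument and is accurate, including the caveat that the incoherence must be read on the tangent space of $\mathcal{D}(\X_0)$.
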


\begin{lemma}
\label{Z_condition}[Lemma 3.1 in \cite{candes2011robust}]: Suppose $ \Z \in T$ is a fixed matrix, and $ \Omega_0 \sim \mbox{Ber}(\rho_0) $, when $ \rho_0 \geq C_{0}\epsilon^{-2} \mu r \log n_{(1)} /n_{(2)} $, 
\begin{equation}
\| \Z -\rho_0^{-1}\mathcal{P}_{T_{1}}\mathcal{P}_{\Omega_0}\Z \|_{\infty} \leq \epsilon \| \Z \|_{\infty},
\end{equation}
holds with a high probability.
\end{lemma}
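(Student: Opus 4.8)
The plan is to prove the bound entry by entry and then take a union bound over all $n_1 n_2\le n_{(1)}^2$ positions. The key observation is that, since $\Z\in T$ we have $\mathcal{P}_T\Z=\Z$, while $\mathbb{E}[\mathcal{P}_{\Omega_0}\Z]=\rho_0\Z$; hence $\mathbb{E}\big[\rho_0^{-1}\mathcal{P}_T\mathcal{P}_{\Omega_0}\Z\big]=\Z$, so that $\Z-\rho_0^{-1}\mathcal{P}_T\mathcal{P}_{\Omega_0}\Z$ is a centred random matrix. Writing $\delta_{ij}=\mathbf{1}\{(i,j)\in\Omega_0\}$ (independent $\mathrm{Ber}(\rho_0)$ variables) and using the self-adjointness of $\mathcal{P}_T$ together with $\mathcal{P}_T\Z=\Z$, I would expand, for a fixed position $(a,b)$,
\begin{equation*}
\big(\Z-\rho_0^{-1}\mathcal{P}_T\mathcal{P}_{\Omega_0}\Z\big)_{ab}=\sum_{(i,j)}\big(1-\rho_0^{-1}\delta_{ij}\big)\,\Z_{ij}\,\big(\mathcal{P}_T(e_ae_b^T)\big)_{ij}=:\sum_{(i,j)}\zeta_{ij},
\end{equation*}
which is a sum of independent, mean-zero scalars.

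Next I would estimate the Bernstein parameters of $\sum_{(i,j)}\zeta_{ij}$ using the incoherence of $T$. From $\mathrm{Var}(\rho_0^{-1}\delta_{ij})\le\rho_0^{-1}$ and $|\Z_{ij}|\le\|\Z\|_\infty$ one gets the variance proxy
\begin{equation*}
\sigma^2:=\sum_{(i,j)}\mathbb{E}[\zeta_{ij}^2]\le\rho_0^{-1}\|\Z\|_\infty^2\,\|\mathcal{P}_T(e_ae_b^T)\|_F^2\le\frac{2\mu r}{n_{(2)}\rho_0}\|\Z\|_\infty^2,
\end{equation*}
where the last inequality is exactly Eq. (\ref{coherence_inequality}). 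For the almost-sure bound on the summands I would instead invoke the entrywise estimate $\|\mathcal{P}_T(e_ae_b^T)\|_\infty\le C\mu r/n_{(2)}$, which follows from Assumption \ref{assumption1} after writing $\mathcal{P}_T(e_ae_b^T)=\U\U^T e_ae_b^T+e_ae_b^T\V\V^T-\U\U^T e_ae_b^T\V\V^T$ and bounding each factor by the coherence of $\U$ and $\V$, so that $|\zeta_{ij}|\le R:=C\rho_0^{-1}(\mu r/n_{(2)})\|\Z\|_\infty$.

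With these two estimates I would apply the scalar Bernstein inequality with $t=\epsilon\|\Z\|_\infty$. Substituting the hypothesis $\rho_0\ge C_0\epsilon^{-2}\mu r\log n_{(1)}/n_{(2)}$, i.e. $\mu r/(n_{(2)}\rho_0)\le\epsilon^2/(C_0\log n_{(1)})$, gives $\sigma^2\le 2t^2/(C_0\log n_{(1)})$ and $Rt\le C\,t^2/(C_0\log n_{(1)})$, hence $\sigma^2+Rt/3\le C'\,t^2/(C_0\log n_{(1)})$; therefore
\begin{equation*}
\mathbb{P}\Big[\,\big|\big(\Z-\rho_0^{-1}\mathcal{P}_T\mathcal{P}_{\Omega_0}\Z\big)_{ab}\big|>\epsilon\|\Z\|_\infty\Big]\le 2\exp\!\Big(-\tfrac{C_0}{2C'}\log n_{(1)}\Big)=2\,n_{(1)}^{-C_0/(2C')}.
\end{equation*}
Choosing $C_0$ large enough makes the exponent at least, say, $4$; a union bound over the at most $n_{(1)}^2$ positions $(a,b)$ then gives $\|\Z-\rho_0^{-1}\mathcal{P}_T\mathcal{P}_{\Omega_0}\Z\|_\infty\le\epsilon\|\Z\|_\infty$ with probability at least $1-2n_{(1)}^{-2}$, which is the asserted high-probability bound.

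The only step needing real care is the almost-sure bound on the summands $\zeta_{ij}$: the naive estimate $|\zeta_{ij}|\le\rho_0^{-1}\|\Z\|_\infty\|\mathcal{P}_T(e_ae_b^T)\|_F$ is too weak, since it would force $\rho_0$ to be bounded below by a constant for the Bernstein tail to decay polynomially; one genuinely needs the sharper $\ell_\infty$ incoherence bound $\|\mathcal{P}_T(e_ae_b^T)\|_\infty\lesssim\mu r/n_{(2)}$, whose extra factor $\sqrt{\mu r/n_{(2)}}$ over the Frobenius bound is precisely what tames the Bernstein range term. Everything else — the variance estimate (which is Eq. (\ref{coherence_inequality})), the Bernstein tail, and the union bound over entries — is routine, and since neither $\A$ nor $\mathcal{D}(\cdot)$ appears in the statement, the argument is verbatim the one used for the ordinary nuclear norm.
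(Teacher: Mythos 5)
Your proof is correct. The paper does not prove this lemma at all---it imports it verbatim as Lemma 3.1 of the cited Cand\`es--Li--Ma--Wright paper---and your argument (centering $\Z-\rho_0^{-1}\mathcal{P}_T\mathcal{P}_{\Omega_0}\Z$ entrywise, Bernstein with the variance proxy from $\|\mathcal{P}_T(e_ae_b^T)\|_F^2\leq 2\mu r/n_{(2)}$ and the sharper $\ell_\infty$ bound $\|\mathcal{P}_T(e_ae_b^T)\|_\infty\lesssim\mu r/n_{(2)}$ for the range term, then a union bound over entries) is exactly the standard proof given in that reference, including your correct observation that the Frobenius bound alone would not suffice for the Bernstein range term.
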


\begin{lemma}
\label{Z_infty}
[Lemma 3.2 in \cite{candes2011robust} and Lemma 6.3 in \cite{candes2012exact}]: Suppose $ \Z $ is a fixed matrix, and $ \Omega_0 \sim \mbox{Ber}(\rho_0) $. For some constant $ C_{1} >0 $, when $ \rho_0 \geq C_1 \mu \log n_{(1)}/n_{(2)} $,
 \begin{equation}
\| \left( \mathbf{I}- \rho_0^{-1} \mathcal{P}_{\Omega_0}\right) \Z \| \leq C_{1} \sqrt{\frac{\beta n_{(1)}\log n_{(1)}}{\rho_0}} \| \Z \|_{\infty},
\end{equation}
holds with a high probability.
\end{lemma}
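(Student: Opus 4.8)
The first thing to note is that this statement involves only the fixed matrix $\Z$ and the random sampling $\Omega_0$; the transformation $\mathcal{D}$ (equivalently the matrix $\A$) never appears, so the lemma is exactly the classical sampling-concentration estimate and transfers verbatim from the nuclear-norm theory. The plan is to prove it by the rectangular matrix Bernstein inequality. First I would rewrite the object of interest as a sum of independent, mean-zero, rank-one random matrices. Writing $\delta_{ij}=\mathbf{1}[(i,j)\in\Omega_0]$, which are independent $\mbox{Ber}(\rho_0)$ variables, and letting $e_i,e_j$ denote standard basis vectors, we have
\[
\left(\mathbf{I}-\rho_0^{-1}\mathcal{P}_{\Omega_0}\right)\Z=\sum_{i,j}\left(1-\rho_0^{-1}\delta_{ij}\right)\Z_{ij}\,e_ie_j^T=:\sum_{i,j}\mathbf{H}_{ij}.
\]
Since $\mathbb{E}[1-\rho_0^{-1}\delta_{ij}]=0$, each $\mathbf{H}_{ij}$ is mean-zero, and the $\mathbf{H}_{ij}$ are mutually independent because the $\delta_{ij}$ are. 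This places the problem squarely in the Bernstein setting.

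The second step is to assemble the two ingredients the Bernstein bound requires. For the uniform bound, $|1-\rho_0^{-1}\delta_{ij}|\le\rho_0^{-1}$ almost surely gives $\|\mathbf{H}_{ij}\|\le\rho_0^{-1}\|\Z\|_\infty=:L$. For the variance, a short computation yields $\mathbb{E}[(1-\rho_0^{-1}\delta_{ij})^2]=(1-\rho_0)/\rho_0\le\rho_0^{-1}$, so that $\mathbb{E}[\mathbf{H}_{ij}\mathbf{H}_{ij}^T]=\mathbb{E}[(1-\rho_0^{-1}\delta_{ij})^2]\Z_{ij}^2\,e_ie_i^T$ is diagonal; summing over at most $n_{(1)}$ entries per row and per column gives
\[
\Bigl\|\sum_{i,j}\mathbb{E}[\mathbf{H}_{ij}\mathbf{H}_{ij}^T]\Bigr\|\le\rho_0^{-1}n_{(1)}\|\Z\|_\infty^2,\quad\Bigl\|\sum_{i,j}\mathbb{E}[\mathbf{H}_{ij}^T\mathbf{H}_{ij}]\Bigr\|\le\rho_0^{-1}n_{(1)}\|\Z\|_\infty^2,
\]
so the Bernstein variance proxy is $\sigma^2=\rho_0^{-1}n_{(1)}\|\Z\|_\infty^2$. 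Applying matrix Bernstein with $t=C_1\sqrt{\beta n_{(1)}\log n_{(1)}/\rho_0}\,\|\Z\|_\infty$, the variance-dominated branch gives $t^2/(2\sigma^2)=\tfrac12 C_1^2\beta\log n_{(1)}$, whence the failure probability is at most $(n_1+n_2)\exp(-\tfrac12 C_1^2\beta\log n_{(1)})\le 2n_{(1)}^{1-C_1^2\beta/2}$, negligible once $C_1$ (or $\beta$) is taken large enough. This is precisely where the tunable constant $\beta$ buys the high-probability conclusion.

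The hard part will be confirming that we are genuinely in the variance-dominated rather than the $L$-dominated branch of the Bernstein bound, i.e. that $Lt\lesssim\sigma^2$; this is the only place the hypothesis on $\rho_0$ is used. Substituting $L$, $t$, and $\sigma^2$, the requirement reduces to $\rho_0\gtrsim\beta\log n_{(1)}/n_{(1)}$, which is implied by the assumed bound $\rho_0\ge C_1\mu\log n_{(1)}/n_{(2)}$ because $\mu\ge1$ and $n_{(2)}\le n_{(1)}$. I would carry out this comparison carefully, since it fixes the admissible range of $\rho_0$ and reconciles the numerical constants appearing in the hypothesis and the conclusion; the rest is the routine verification of the two Bernstein inputs above.

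As a remark, an alternative to matrix Bernstein is the moment method of the cited works \cite{candes2011robust,candes2012exact}, which bounds $\mathbb{E}\|(\mathbf{I}-\rho_0^{-1}\mathcal{P}_{\Omega_0})\Z\|^{2p}$ by a combinatorial count of closed paths and then optimizes over $p\sim\log n_{(1)}$; this delivers the same estimate but is considerably more laborious, so I would favor the Bernstein route.
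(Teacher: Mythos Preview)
Your proposal is correct, and the matrix Bernstein argument you outline is the standard modern way to obtain this bound. However, the paper does not supply its own proof of this lemma: it simply cites it as Lemma~3.2 of \cite{candes2011robust} and Lemma~6.3 of \cite{candes2012exact} and uses it as a black box. So there is no paper-proof to compare against beyond the external references.

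As for those references, the proofs there proceed by the moment method you mention in your closing remark (bounding $\mathbb{E}\|(\mathbf{I}-\rho_0^{-1}\mathcal{P}_{\Omega_0})\Z\|^{2p}$ via a trace expansion and a combinatorial path count, then optimizing $p\sim\log n_{(1)}$). Your Bernstein route is a genuinely different and shorter argument: it trades the combinatorics for a single invocation of the rectangular matrix Bernstein inequality, at the cost of importing that inequality as a tool. Both approaches give the same conclusion with the same dependence on $n_{(1)}$, $\rho_0$, and $\|\Z\|_\infty$; the Bernstein version is cleaner for exposition, while the moment method is self-contained. Your handling of the variance and uniform-bound parameters is accurate, and your identification of the variance-dominated regime as the only place the lower bound on $\rho_0$ enters is exactly right. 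One cosmetic point: the lemma as stated reuses the symbol $C_1$ for both the hypothesis and the conclusion, whereas your analysis shows these need not be the same numerical constant; this is a quirk of the paper's statement, not a flaw in your argument.
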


\begin{lemma}
\label{Z_infty}
[Lemma 6.3 in \cite{candes2012exact} and Lemma 3.2 in \cite{candes2011robust}]: Suppose $ \Z $ is a fixed matrix, and $ \Omega_0 \sim \mbox{Ber}(\rho_0) $. For some small constant $ C_{1} >0 $, when $ \rho_0 \geq C_1 \mu \log n_{(1)}/n_{(2)} $,
 \begin{equation}
\| \left( \mathbf{I}- \rho_0^{-1} \mathcal{P}_{\Omega_0}\right) \Z \| \leq C_{1} \sqrt{\frac{\beta n_{(1)}\log n_{(1)}}{\rho_0}} \| \Z \|_{\infty},
\end{equation}
holds with a high probability.
\end{lemma}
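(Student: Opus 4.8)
The plan is to view $\mathbf{I}-\rho_0^{-1}\mathcal{P}_{\Omega_0}$, applied to the fixed matrix $\Z$, as an entrywise random multiplier and then to invoke a Bernstein inequality for sums of independent random matrices (the noncommutative Bernstein / Ahlswede--Winter bound), exactly as in \cite{candes2012exact,candes2011robust}. First I would write $\delta_{ij}:=\mathbf{1}\{(i,j)\in\Omega_0\}$, so the $\delta_{ij}$ are independent $\mbox{Ber}(\rho_0)$ variables, and set $\xi_{ij}:=1-\rho_0^{-1}\delta_{ij}$, which has mean zero and $|\xi_{ij}|\le\rho_0^{-1}$ almost surely. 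Then
\[
\left(\mathbf{I}-\rho_0^{-1}\mathcal{P}_{\Omega_0}\right)\Z=\sum_{i,j}\xi_{ij}\,\Z_{ij}\,e_ie_j^{T}
\]
is a sum of independent, mean-zero random matrices, which is precisely the input the matrix Bernstein inequality expects.

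Next I would bound the two Bernstein parameters. Each summand has operator norm $\|\xi_{ij}\Z_{ij}e_ie_j^{T}\|=|\xi_{ij}|\,|\Z_{ij}|\le\rho_0^{-1}\|\Z\|_\infty=:R$. For the matrix variance, $\mathbb{E}\,\xi_{ij}^{2}=\rho_0^{-1}(1-\rho_0)\le\rho_0^{-1}$, and since each row of $\Z$ satisfies $\sum_j\Z_{ij}^{2}\le n_2\|\Z\|_\infty^{2}$ and each column $\sum_i\Z_{ij}^{2}\le n_1\|\Z\|_\infty^{2}$, the two diagonal variance proxies $\big\|\sum_{i,j}\mathbb{E}\,\xi_{ij}^{2}\Z_{ij}^{2}\,e_ie_i^{T}\big\|$ and $\big\|\sum_{i,j}\mathbb{E}\,\xi_{ij}^{2}\Z_{ij}^{2}\,e_je_j^{T}\big\|$ are both at most $\sigma^{2}:=\rho_0^{-1}n_{(1)}\|\Z\|_\infty^{2}$. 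The matrix Bernstein inequality then gives, for every $t>0$,
\[
\mathbb{P}\!\left(\big\|\left(\mathbf{I}-\rho_0^{-1}\mathcal{P}_{\Omega_0}\right)\Z\big\|>t\right)\le(n_1+n_2)\exp\!\left(\frac{-t^{2}/2}{\sigma^{2}+Rt/3}\right),
\]
and I would finish by taking $t=C_1\sqrt{\beta n_{(1)}\log n_{(1)}/\rho_0}\,\|\Z\|_\infty$, for which $t^{2}/\sigma^{2}=C_1^{2}\beta\log n_{(1)}$ while $Rt/\sigma^{2}=C_1\sqrt{\beta\log n_{(1)}/(\rho_0 n_{(1)})}$.

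The main obstacle is exactly this last step: one must check that the sub-exponential term $Rt$ stays dominated by the variance term $\sigma^{2}$ in the Bernstein denominator, and this is precisely where the lower bound on $\rho_0$ (equivalently, on the expected number of sampled entries) is needed. Since $\rho_0\ge C_1\mu\log n_{(1)}/n_{(2)}$ and $n_{(1)}\ge n_{(2)}$, we have $\rho_0 n_{(1)}\ge C_1\mu\log n_{(1)}\gtrsim\log n_{(1)}$, so with $\beta$ treated as a fixed constant one gets $Rt/\sigma^{2}=O(1)$; hence the denominator is $O(\sigma^{2})$, the exponent is at most $-c\,C_1^{2}\beta\log n_{(1)}$, and the failure probability is at most $2\,n_{(1)}^{\,1-cC_1^{2}\beta}$, a high-probability bound once the constants are large enough (the harmless reuse of the symbol $C_1$ for both the sampling threshold and the bound being absorbed into the numerical constants). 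Were $\rho_0$ much smaller one would land in the Bernstein large-deviation regime and lose the stated Gaussian-type rate $\sqrt{n_{(1)}\log n_{(1)}/\rho_0}$; and in the rectangular case one must keep the factors $n_1$, $n_2$, $n_{(1)}$, $n_{(2)}$ routed consistently through both the variance proxy and the sampling threshold, which is otherwise routine bookkeeping.
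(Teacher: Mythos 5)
The paper does not prove this lemma at all—it is imported verbatim as Lemma 6.3 of \cite{candes2012exact} / Lemma 3.2 of \cite{candes2011robust}—and your argument is precisely the standard proof given in those references: write $(\mathbf{I}-\rho_0^{-1}\mathcal{P}_{\Omega_0})\Z$ as a sum of independent mean-zero rank-one matrices and apply the noncommutative Bernstein inequality with $R=\rho_0^{-1}\|\Z\|_\infty$ and $\sigma^2=\rho_0^{-1}n_{(1)}\|\Z\|_\infty^2$. Your parameter bookkeeping is correct, including the key check that $\rho_0\gtrsim \mu\log n_{(1)}/n_{(2)}$ keeps $Rt$ dominated by $\sigma^2$ so the bound stays in the Gaussian regime, so I have nothing to add.
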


\subsubsection{Proof of Lemma \ref{low rank certificate dip}}
\begin{proof}
We introduce some symbols first. Setting
\begin{equation*}
\Z_j = \U \V^T-\mathcal{P}_{T} \Y_j,
\end{equation*}
therefore, for all $j \geq 0$, we have $\Z_j \in T$. Considering the definition of $\Y_j$ in Eq. (\ref{Y_j}), it follows that $\Y_j \in \Omega^{\bot}$. Then, we have 
 \begin{equation*}
 \begin{split}
 \Z_j &= (\mathcal{P}_{T} - q^{-1}\mathcal{P}_{T}\mathcal{P}_{\Omega_{j}}\mathcal{P}_{T}) \Z_{j-1}, \\
 \Y_j &= \Y_{j-1} + q^{-1}\mathcal{P}_{\Omega_{j}} \Z_{j-1}.
 \end{split}
 \end{equation*}
Therefore, when
 \begin{equation}
 \label{q_range}
 q \geq C_{0}\epsilon^{-2} \mu r \log n_{(1)} /n_{(2)},
 \end{equation}
by employing Lemma \ref{Z_condition}, we obtain
 \begin{equation}
 \label{epsilon_z}
 \| \Z_j \|_{\infty} \leq \epsilon  \| \Z_{j-1} \|_{\infty}.
 \end{equation}
Specifically, this ensures that the following equation holds with high probability, i.e.,
\begin{equation}
\| \Z_j \|_{\infty} \leq \epsilon^{j} \|\U \V^T \|_{\infty}.
\end{equation}

When $ q $ satisfies Eq. (\ref{q_range}), by Lemma \ref{T_condition}, we have
 \begin{equation}
\| \Z_j \|_{F} \leq \epsilon \| \Z_{j-1} \|_{F}
\end{equation}
This further ensures the high-probability satisfaction of the following inequality,
 \begin{equation}
 \label{Z_inequality}
 \| \Z_j \|_F \leq \epsilon^{j} \|\U\V^T \|_F
  \leq  \epsilon^j \sqrt{r}.
 \end{equation}
We assume $ \epsilon \leq e^{-1} $.

\textbf{Proof of (a)}. 
Since $ \Y_{j_{0}} = \sum_{j} {q^{-1}\mathcal{P}_{\Omega_{j}} \Z_{j-1}}$, we have 
 \begin{equation}
 \label{y_j0}
 \begin{split}
 \| \mathbf{W}^L \| &= \| \mathbf{P}_{T^{\bot}} \Y_{j_0} \|_{\infty}   \leq \sum_{j} \| q^{-1} \mathcal{P}_{T^{\bot}} \mathcal{P}_{\Omega_{j}} \Z_{j-1}\| \\
 &\leq \sum_{j} \| \mathcal{P}_{T_{1}^{\bot}}( q^{-1}\mathcal{P}_{\Omega_{j}}\Z_{j-1}- \Z_{j-1})\| \\
& \leq \sum_{j} \| q^{-1} \mathcal{P}_{\Omega_{j}} \Z_{j-1} - \Z_{j-1}\| \\
 & \leq C_{1} \sqrt{\frac{\beta n_{(1)}\log n_{(1)}}{q}} \sum_j \| \Z_{j-1} \|_{\infty} \\
& \leq C_{1} \sqrt{\frac{\beta n_{(1)}\log n_{(1)}}{q}} \sum_j \epsilon^j \| \U\V^T \|_{\infty} \\
  & \leq \frac{C_{1}}{(1-\epsilon)} \sqrt{\frac{\beta n_{(1)}\log n_{(1)}}{q}} \|\U\V^T \|_{\infty}. \\
 \end{split}
 \end{equation}
where the fourth step follows from Lemma \ref{Z_infty}, and the fifth step can be directly obtained from Eq. (\ref{epsilon_z}). By using the conditions in Eq. (\ref{q_range}) and Eq. (\ref{new_UV}), we can derive, for some constant $ C_2 $, we have:
\begin{equation*}
\| \W^L \| \leq C_2 \epsilon.
\end{equation*}

\textbf{Proof of (b)}. Since $ \mathcal{P}_{\Omega}  \Y_{j_0} = 0 $,
\begin{equation*}
\begin{split}
\mathcal{P}_{\Omega}( \U \V^T+ \W^L) & =\mathcal{P}_{\Omega}( \U \V^T+ \mathcal{P}_{T^{\bot}}\Y_{j_0})\\
&= \mathcal{P}_{\Omega} (\U \V^T - \mathcal{P}_{T}\Y_{j_0}) = \mathcal{P}_{\Omega}(\Z_{j_0}).
\end{split}
\end{equation*}
By using Eq. (\ref{q_range}) and Eq. (\ref{Z_inequality}), we can obtain:
 \begin{equation*}
 \| \mathcal{P}_{\Omega}(\Z_{j_0}) \|_F \leq  \|\Z_{j_0}\|_F \leq  \epsilon^{j_0} \sqrt{r}.
 \end{equation*}
Since$ \epsilon\leq e^{-1} $, $ j_0 \geq 2\log n_{(1)} $, we have $ \epsilon^{j_0}\leq 1/n_{(1)}^2 $. Since
\begin{equation*}
\begin{split}
\|\mathcal{P}_{\Omega} ( \A^T( \U\V^T + \W^L))\|_F & \leq \| \A^T( \U\V^T + \W^L) \|_F \\
&\leq \| \A^T\| \| ( \U\V^T + \W^L) \|_F \\
& \leq \alpha \| ( \U\V^T + \W^L) \|_F \\
& \leq \alpha \|\Z_{j_0}\|_F \leq \frac{\sqrt{r}\alpha}{n_{(1)}^2}.
\end{split}
\end{equation*}
holds with at least $ 1-n_{(1)}^{-\beta} $ probability, where $ \beta >2 $, and $ \lambda \gamma \leq 1/4$. Therefore, when $ n_{(1)} \geq 2r^{1/4}\alpha^{1/2} $, we can easily establish the validity of condition (b). This completes the proof of this condition.

\textbf{Proof of (c)}. Since $\A^T$ satisfies Assumption \ref{assumption3}, we have
 \begin{equation}
 \| \A^T (\U\V^T + \W^L) \|_{\infty} \leq  \| \U\V^T + \W^L \|_{\infty}.
 \end{equation}

Therefore, we only need to prove $ \| \U\V^T + \W^L \|_{\infty} \leq \lambda/4$. We have $\U\V^T + \W^L = \Z_{j_0} + \Y_{j_0} $ and know that the support set of $ \Y_{j_0} $ is $ \Omega^c $. Thus, since $ \| \Z_{j_0}\|_{\infty} \leq \| \Z_{j_0}\|_F \leq \lambda/8 $, proving condition (c) is satisfied requires demonstrating $ \|\Y_{j_0} \|_{\infty} \leq \frac{\lambda}{8} $. To achieve this, we derive
\begin{equation*}
\begin{split}
 \| \Y_{j_0} \|_{\infty} & \leq q^{-1} \sum_{j} \| \mathcal{P}_{\Omega_{j}} \Z_{j-1}  \|_{\infty} \\
& \leq  q^{-1} \sum_{j} \| \Z_{j-1}  \|_{\infty}\leq  q^{-1} \sum_{j} \epsilon^j \|\U\V^T \|_{\infty}.
\end{split}
\end{equation*}

Since $ \| \U\V^T\|_{\infty} \leq \sqrt{\frac{\mu r}{n_1 n_2 }}$ for some $ C' $, we can deduce that
\begin{equation}
 \| \Y_{j_0} \|_{\infty} \leq \frac{C^{'}\epsilon^2}{\sqrt{ \mu r {n_{(2)}^{-1}} n_{(1)}(\log n_{(1)})^2 }},
\end{equation}
holds when $ q $ satisfies the Eq. (\ref{q_range}). Setting $ \lambda = 1/\sqrt{n_{(1)}} $, we can obtain that when
\begin{equation*}
\epsilon \leq C \left(  \frac{\mu r (\log n_{(1)})^2}{n_{(2)}}\right)^{\frac{1}{4}},
\end{equation*}
holds, then $ \| \Y_{j_0} \|_{\infty} \leq \lambda/8 $.

We have seen that if $\epsilon$ is small enough and $j_0 \geq 2\log n_{(1)}$, then (a)-(b) are satisfied. For (c), we can choose $\epsilon$ to be $\mathcal{O}\left(  (\mu r (\log n_{(1)})^2/n_{(2)} )^{1/4}\right)$, as long as $\rho_r$ in Eq. (\ref{pho_r_mnn}) is small enough, ensuring that $\epsilon$ is also small enough. Note that everything is consistent, as $C_{0}\epsilon^{-2} \mu r \log n_{(1)} /n_{(2)} < 1$.
\end{proof}

\subsubsection{Proof of Lemma \ref{sparse certificate dip}}
\begin{proof}
Following the proof in Lemma 2.9 in \cite{candes2011robust}, we have
\begin{enumerate}
\item[(a)] $ \| \mathbf{W}^S \|_F < 1/4 $,
\item[(b)] $ \| \mathbf{W}^S \|_\infty < \lambda/4 $.
\end{enumerate}
{\color{red}According to Assumption \ref{assumption3} about the transformation operator in the main text}, we can further obtain
 \begin{equation}
 \begin{split}
 \| \mathcal{P}_{\Omega^{\bot}} (\A^T \mathbf{W}^S) \|_\infty & \leq  \| \A^T \mathbf{W}^S \|_\infty \\
 & \leq \| \mathbf{W}^S \|_\infty < \lambda/4.
 \end{split}
 \end{equation}
Thus, this lemma holds. The proof is completed.
\end{proof}

\subsection{Proofs of Some Key Lemmas}

\subsubsection{Proof of Lemma \ref{Dual_certifications_dip}}

\begin{proof}
It is easy to see that for any non-zero matrix $\mathbf{H}$, the pair $\left(\X_0 + \mathbf{H}, \S_0 - \mathbf{H}\right)$ is also a feasible solution. Next, we show that the objective function at $\left(\X_0, \S_0\right)$ is greater than the objective function at $\left(\X_0 + \mathbf{H}, \S_0 - \mathbf{H}\right),$ thus proving that $\left(\X_0, \S_0\right)$ is the unique solution. To achieve this, let $\U\V^T + \W^0$ be an arbitrary subgradient of the nuclear norm at $\X_0,$ and $\mbox{sgn}(\S_0) + \mathbf{F}_0$ be an arbitrary subgradient of the $\ell_1$ norm at $\S_0$. Then we have
\begin{equation*}
\begin{split}
 \|\A\X_0 &+ \A\mathbf{H}\|_*  + \lambda \|\S_0 - \mathbf{H}\|_1  \geq \|\A\X_0\|_*  + \lambda \|\S_0\|_1 \\
 & + \left\langle \A^T\U\V^T + \A^T\W^0, \mathbf{H}\right\rangle  -\lambda \left\langle \mbox{sgn}(\S_0) + \mathbf{F}_0, \mathbf{H} \right\rangle.
 \end{split}
\end{equation*}
Through the duality relationship between the nuclear norm and the operator norm, we can choose $\W^0$ such that $\left\langle \W^0, \A\mathbf{H}\right\rangle  = \|\mathcal{P}_{T^{\bot}} (\A\mathbf{H}) \|_*$. Similarly, through the duality relationship between the $\ell_1$ norm and the operator norm, we can choose $\mathbf{F}_0$ such that $\left\langle\mathbf{F}_0, \mathbf{H}\right\rangle  = \|\mathcal{P}_{\Omega^{\bot}} (\mathbf{H}) \|_1$\footnote{For example, $\mathbf{F}_0 = -\mbox{sgn}(\mathcal{P}_{\Omega^{\bot}}\mathbf{H})$ is such a matrix. Also, through the duality relationship between the nuclear norm and the operator norm, there exists a matrix $\W$ with $\| \W\| = 1$ such that $\left\langle \W,\mathcal{P}_{T^{\bot}} (\A\mathbf{H}) \right\rangle  = \|\mathcal{P}_{T^{\bot}} (\A\mathbf{H}) \|_*$. We take $\W^0 = \mathcal{P}_{T^{\bot}} (\W)$.}. Therefore, we further have:
\begin{equation*}
\begin{split}
 \|\A\X_0 & + \A\mathbf{H}\|_* +  \lambda \|\S_0 - \mathbf{H}\|_1 \geqslant  \|\A\X_0\|_* + \lambda \|\S_0\|_1   \\
 & + \|\mathcal{P}_{T^{\bot}} (\A\mathbf{H}) \|_* + \lambda\|\mathcal{P}_{\Omega^{\bot}} \mathbf{H} \|_1 \\
&+ \left\langle \A^T\U\V^T-\lambda \mbox{sgn}(\S_0), \mathbf{H}\right\rangle.
 \end{split}
\end{equation*}
Given the assumption (\ref{kkt}), for $\beta = \max \left( \|\W\|, \| \mathbf{F}\| \right) <1$, we have
\begin{equation*}
\begin{split}
& \left| \left\langle \A^T\U\V^T  - \lambda \mbox{sgn}(\S_0), \mathbf{H}\right\rangle \right|   \leq \left| \left\langle \mathbf{W}, \A\mathbf{H} \right\rangle \right|  + \lambda \left| \left\langle \mathbf{F}, \mathbf{H} \right\rangle \right|  \\
&\qquad \leq \beta \left( \| \mathcal{P}_{T^{\bot}}{(\A\mathbf{H})}\|_* + \lambda \|\mathcal{P}_{\Omega^{\bot}}{\mathbf{H}}\|_1\right)
\end{split}
\end{equation*}
Therefore, we have
\begin{equation*}
\begin{split}
\|\A\X_0+ \A\mathbf{H}\|_* & + \lambda \|\S_0 - \mathbf{H}\|_1 \\
& \geq \|\A\X_0\|_* +\lambda \|\S_0\|_1  \\
& +\left( 1-\beta\right) \left( \| \mathcal{P}_{T^{\bot}}{(\A\mathbf{H})}\|_* + \lambda \|\mathcal{P}_{\Omega^{\bot}}{\mathbf{H}}\|_1\right).
\end{split}
\end{equation*}
Note that $ \Omega \bigcap T = \left\lbrace 0 \right\rbrace$, unless $ \mathbf{H}=0$, in which case $ \| \mathcal{P}_{T^{\bot}}{(\A\mathbf{H})}\|_* + \lambda \|\mathcal{P}_{\Omega^{\bot}}{\mathbf{H}}\|_1 > 0$. The proof is completed.
\end{proof}

\subsubsection{Proof of Lemma \ref{Dual_certifications_Relaxtion_dip}}
\begin{proof}
Following the proof process of Lemma \ref{Dual_certifications_dip}, we can first get
\begin{equation*}
\begin{split}
 \|\A & \X_0 + \A\mathbf{H}\|_* + \lambda \|\S_0 - \mathbf{H}\|_1\geq \|\A\X_0\|_* +\lambda \|\S_0\|_1  \\
&  + \frac{1}{2}\left( \| \mathcal{P}_{T^{\bot}}{(\A\mathbf{H})}\|_* + \lambda \|\mathcal{P}_{\Omega^{\bot}}{\mathbf{H}}\|_1\right)  -\lambda/4 \left\langle \mathcal{P}_{\Omega}\D, \A\mathbf{H} \right\rangle \\
& \geq \|\A\X_0\|_* +\lambda \|\S_0\|_1  + \frac{1}{2} \| \mathcal{P}_{T^{\bot}}{(\A\mathbf{H})}\|_* \\
& + \frac{\lambda}{2} \|\mathcal{P}_{\Omega^{\bot}}{\mathbf{H}}\|_1 -\lambda/4\Vert  \mathcal{P}_{\Omega} (\A\mathbf{H})\Vert_F. 
\end{split}
\end{equation*}
For any matrices $\A$ and $\B$, according to the normed triangle inequality $\|\A\B\|_F\leq \| \A\| \| \B\|_F$, and the support set $ \Omega $ is randomly distributed, we can get the following inequality:
\begin{equation}
\label{smoothness}
\begin{split}
\|\mathcal{P}_{\Omega^{\bot}}{(\A\mathbf{H})} \|_F & \leq \Vert \A \Vert_F \|\mathcal{P}_{\Omega^{\bot}}{(\mathbf{H})} \|_F \\
& = \|\mathcal{P}_{\Omega^{\bot}}{(\mathbf{H})} \|_F,
\end{split}
\end{equation}
{\color{red}where the last equality holds because of Assumption 3 about the transformation operator in the main text.}

Based on the following facts,
\begin{equation*}
\begin{split}
& \|\mathcal{P}_\Omega{(\A\mathbf{H})} \|_F \leq \|\mathcal{P}_\Omega{\left( \mathcal{P}_{T}+\mathcal{P}_{T^{\bot}} \right) } {(\A\mathbf{H})}\|_F \\
& \quad \leq \| \mathcal{P}_\Omega \mathcal{P}_{T} {(\A\mathbf{H})}\|_F + \| \mathcal{P}_\Omega \mathcal{P}_{T^{\bot}} {(\A\mathbf{H})}\|_F \\
&\quad \leq \frac{1}{2} \|\A\mathbf{H}\|_F + \|\mathcal{P}_{T^{\bot}} {(\A\mathbf{H})} \|_F \\
& \quad\leq \frac{1}{2} \|\mathcal{P}_\Omega {(\A\mathbf{H})}\|_F + \frac{1}{2} \|\mathcal{P}_{\Omega^{\bot}} {(\A\mathbf{H})}\|_F+ \|\mathcal{P}_{T^{\bot}} {(\A\mathbf{H})} \|_F\\
 &\quad \leq \frac{1}{2} \|\mathcal{P}_\Omega {(\A\mathbf{H})}\|_F + \frac{1}{2} \|\mathcal{P}_{\Omega^{\bot}} {(\mathbf{H})}\|_F+ \|\mathcal{P}_{T^{\bot}} {(\A\mathbf{H})} \|_F,\\
\end{split}
\end{equation*}
we can obtain
\begin{equation*}
\|\mathcal{P}_\Omega{(\A\mathbf{H})} \|_F  \leq  \|\mathcal{P}_{\Omega^{\bot}}{(\mathbf{H})} \|_F + 2\|\mathcal{P}_{T^{\bot}}{(\A\mathbf{H})} \|_F.
\end{equation*}
Based on the above formula, $\Vert \X \Vert_F \leq \Vert \X \Vert_1$ and $\Vert \X \Vert_F \leq \Vert \X \Vert_*$ for any matrix $\X$,  we can get
\begin{equation*}
\begin{split}
\|\A\X_0+ \A\mathbf{H}\|_* &+ \lambda \|\S_0 - \mathbf{H}\|_1  \geq \|\A\X_0\|_* +\lambda \|\S_0\|_1 \\
&+ \frac{1-\lambda}{2} \| \mathcal{P}_{T^{\bot}}{(\A\mathbf{H})}\|_* + \frac{\lambda}{4} \|\mathcal{P}_{\Omega^{\bot}}{\mathbf{H}}\|_1. 
\end{split}
\end{equation*}
Therefore, we have $ \frac{1-\lambda}{2} \| \mathcal{P}_{T^{\bot}}{(\A\mathbf{H})}\|_* + \frac{\lambda}{4} \|\mathcal{P}_{\Omega^{\bot}}{\mathbf{H}}\|_1 $ is strictly positive, Unless $ \mathbf{H} \neq \mathbf{0} $.
\end{proof}

\section{The Proof of MNN-MC Theorem}
\label{mc_proof}

For ease of exposition, let us first provide the equivalent model of MNN-MC.
\subsection{The Equivalent Model of MNN-MC Model}
The variant of the MNN-MC model (\ref{mnn_mc}) can be rewritten as
\begin{equation}
\label{mc_v}
\min_{\X} \quad \|\A\X \|_*, \ \mbox{s.t.}\ \mathcal{P}_{\Omega}\mathbf{M} = \mathcal{P}_{\Omega}\mathbf{X}. \\
\end{equation}
The Lemma \ref{mc_condition} gives sufficient conditions for the uniqueness of the minimizer to Eq. (\ref{mc_v}).

\subsection{Dual Verification}
\begin{lemma}
\label{mc_condition}
Consider a matrix $\X_0=\U\mathbf{\Sigma}\V^T$ of rank $r$ which is feasible for the problem (\ref{mc_v}), and suppose that the following two conditions hold:
\begin{enumerate}
\item [1.] There exist a dual variable $\lambda$ such that $\Y=\mathcal{R}_{\Omega}^* \lambda$ obeys
\begin{equation}
\mathcal{P}_{T}{\Y} = \U\V^T, \Vert \mathcal{P}_{T^{\bot}}{\Y} \Vert <1.
\end{equation}
\item [2.] The sampling operator $\mathcal{R}_{\Omega}$ restricted to elements in $T$ is injective.
\end{enumerate}
Then $\X_0$ is the unique minimizer.
\end{lemma}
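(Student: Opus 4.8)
The plan is the standard convex dual-certificate argument for nuclear-norm minimization, transported through the linear map $\A$. Throughout, I take $(\U,\V)$ and the tangent space $T$ to be those attached to the SVD of the transformed matrix $\A\X_0$, so that $\U\V^T + \mathbf{W}_0$ is a subgradient of $\|\cdot\|_*$ at $\A\X_0$ for every $\mathbf{W}_0 \in T^{\bot}$ with $\|\mathbf{W}_0\| \le 1$. Every competitor feasible for (\ref{mc_v}) has the form $\X_0 + \mathbf{H}$ with $\mathcal{P}_\Omega\mathbf{H} = 0$, and since $\A$ is invertible it suffices to show $\|\A(\X_0+\mathbf{H})\|_* > \|\A\X_0\|_*$ whenever $\mathbf{H} \neq 0$.

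First I would lower-bound the objective by a subgradient: for any admissible $\mathbf{W}_0$,
\[
\|\A(\X_0+\mathbf{H})\|_* \;\ge\; \|\A\X_0\|_* + \langle \U\V^T + \mathbf{W}_0,\ \A\mathbf{H}\rangle .
\]
Using the duality between the nuclear and spectral norms on $T^{\bot}$, pick $\mathbf{W}_0 \in T^{\bot}$ with $\|\mathbf{W}_0\| \le 1$ and $\langle \mathbf{W}_0,\A\mathbf{H}\rangle = \|\mathcal{P}_{T^{\bot}}(\A\mathbf{H})\|_*$, which yields
\[
\|\A(\X_0+\mathbf{H})\|_* \;\ge\; \|\A\X_0\|_* + \langle \U\V^T, \A\mathbf{H}\rangle + \|\mathcal{P}_{T^{\bot}}(\A\mathbf{H})\|_* .
\]

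Next I would kill the cross term with the certificate. Writing $\langle \U\V^T,\A\mathbf{H}\rangle = \langle \mathcal{P}_T\Y,\A\mathbf{H}\rangle$ and using that $\Y$ lies in the range of the adjoint of the sampling operator — so that $\langle\Y,\A\mathbf{H}\rangle = 0$ for every feasible perturbation $\mathbf{H}$ — one obtains $\langle\U\V^T,\A\mathbf{H}\rangle = -\langle \mathcal{P}_{T^{\bot}}\Y,\ \mathcal{P}_{T^{\bot}}(\A\mathbf{H})\rangle \ge -\|\mathcal{P}_{T^{\bot}}\Y\|\,\|\mathcal{P}_{T^{\bot}}(\A\mathbf{H})\|_*$. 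Substituting,
\[
\|\A(\X_0+\mathbf{H})\|_* \;\ge\; \|\A\X_0\|_* + \bigl(1-\|\mathcal{P}_{T^{\bot}}\Y\|\bigr)\,\|\mathcal{P}_{T^{\bot}}(\A\mathbf{H})\|_* .
\]
Since $\|\mathcal{P}_{T^{\bot}}\Y\| < 1$, the right-hand side is $\ge \|\A\X_0\|_*$, with equality forcing $\mathcal{P}_{T^{\bot}}(\A\mathbf{H})=0$, i.e.\ $\A\mathbf{H}\in T$. Then $\A\mathbf{H}$ is an element of $T$ killed by the sampling operator pulled back through $\A^{-1}$, so condition~2 gives $\A\mathbf{H}=0$, hence $\mathbf{H}=0$; thus $\X_0$ is the unique minimizer.

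I expect the bookkeeping around $\A$ to be the only real obstacle: one must check that the certificate is used in the sense $\langle\Y,\A\mathbf{H}\rangle=0$ on feasible directions — which, after the change of variables $\Z=\A\X$, is exactly the statement that $\Y$ lies in the range of the adjoint of $\Z\mapsto\mathcal{P}_\Omega\A^{-1}\Z$ — and that the injectivity in condition~2 is read relative to that same transformed tangent space, so it genuinely yields $\mathbf{H}=0$ rather than only $\A\mathbf{H}\in T$. The deep analytic ingredient, exhibiting a $\Y$ with $\mathcal{P}_T\Y=\U\V^T$ and $\|\mathcal{P}_{T^{\bot}}\Y\|<1$ under $m\ge c_0\mu r n_1^{5/4}\log n_1$, is assumed here; it is the subsequent golfing-scheme construction (paralleling Lemma~\ref{low rank certificate dip}) that supplies it.
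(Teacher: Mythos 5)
Your argument is the same dual-certificate proof the paper gives: the subgradient inequality at $\A\X_0$, the choice of $\W^0\in T^{\bot}$ achieving $\langle\W^0,\A\mathbf{H}\rangle=\Vert\mathcal{P}_{T^{\bot}}(\A\mathbf{H})\Vert_*$, cancellation of the cross term via $\Y=\mathcal{R}_\Omega^*\lambda$, the bound $(1-\Vert\mathcal{P}_{T^{\bot}}\Y\Vert)\Vert\mathcal{P}_{T^{\bot}}(\A\mathbf{H})\Vert_*>0$, and injectivity on $T$ to finish. Your explicit caveat about whether $\langle\Y,\A\mathbf{H}\rangle=0$ holds for feasible $\mathbf{H}$ (i.e.\ whether the sampling constraint lives in the original or transformed variables) is a point the paper itself glosses over, so the proposal is, if anything, slightly more careful on that bookkeeping.
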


The proof of Lemma \ref{mc_condition} uses a standard fact which states that the nuclear norm and the spectral norm are dual to one another.

\begin{lemma}
\label{nu_spe}
For each pair $\W$ and $\mathbf{H}$, we have
\begin{equation}
\left\langle \W, \mathbf{H}\right\rangle \leq \Vert \W \Vert \Vert \mathbf{H} \Vert_*. 
\end{equation}
In addition, for each $\mathbf{H}$, there is a $\W$ obeying $\Vert \W \Vert=1$ which achieves the quality.
\end{lemma}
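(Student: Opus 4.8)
The plan is to prove both parts directly from the singular value decomposition (SVD) of $\mathbf{H}$, with $\left\langle\cdot,\cdot\right\rangle$ the standard Frobenius (trace) inner product on matrices; this keeps the argument elementary and sidesteps heavier tools such as von Neumann's trace inequality.

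First I would establish the inequality. Write the reduced SVD $\mathbf{H}=\sum_{i=1}^{r}\sigma_i u_i v_i^T$ with $\sigma_i>0$ and orthonormal families $\{u_i\}$, $\{v_i\}$, so that $\Vert\mathbf{H}\Vert_*=\sum_{i=1}^{r}\sigma_i$. Expanding by bilinearity,
\begin{equation*}
\left\langle\W,\mathbf{H}\right\rangle=\sum_{i=1}^{r}\sigma_i\left\langle\W,u_i v_i^T\right\rangle=\sum_{i=1}^{r}\sigma_i\,u_i^T\W v_i .
\end{equation*}
For each $i$, the Cauchy--Schwarz inequality and the definition of the spectral norm give $|u_i^T\W v_i|\le\Vert u_i\Vert\,\Vert\W v_i\Vert\le\Vert\W\Vert$, since $u_i$ and $v_i$ are unit vectors. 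Because each $\sigma_i>0$, summing yields $\left\langle\W,\mathbf{H}\right\rangle\le\Vert\W\Vert\sum_{i=1}^{r}\sigma_i=\Vert\W\Vert\,\Vert\mathbf{H}\Vert_*$.

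Next I would exhibit the extremal $\W$. If $\mathbf{H}=\mathbf{0}$ the claim is immediate for any $\W$ with $\Vert\W\Vert=1$, so assume $r\ge1$ and set $\W=\sum_{i=1}^{r}u_i v_i^T$, i.e.\ $\W=\U_r\V_r^T$, where $\U_r$ and $\V_r$ collect the leading $r$ singular vectors. Since $\U_r$ and $\V_r$ have orthonormal columns, all singular values of $\W$ equal $1$, hence $\Vert\W\Vert=1$; and using orthonormality of $\{u_i\}$ and $\{v_i\}$ to kill the cross terms,
\begin{equation*}
\left\langle\W,\mathbf{H}\right\rangle=\sum_{i=1}^{r}\sum_{j=1}^{r}\sigma_i\left\langle u_j v_j^T,u_i v_i^T\right\rangle=\sum_{i=1}^{r}\sigma_i=\Vert\mathbf{H}\Vert_* ,
\end{equation*}
so the bound is attained.

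Taken together, these two parts say precisely that the nuclear norm and the spectral norm are dual norms, which is the form in which the subsequent proof of Lemma~\ref{mc_condition} invokes it. There is no genuine obstacle in this argument; the only care needed is to work with the reduced SVD so that rectangular $\mathbf{H}$ and the degenerate case $\mathbf{H}=\mathbf{0}$ are handled uniformly, and to note in passing that the dual-attaining matrix $\U_r\V_r^T$ is exactly the kind of object appearing in the subgradient of the nuclear norm used elsewhere in the paper.
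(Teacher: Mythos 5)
Your proof is correct. The paper does not actually prove this lemma---it only remarks that ``an elementary argument is sketched in \cite{recht2010guaranteed}''---and your SVD-based argument is precisely that standard elementary argument, carried out in full: the bound $|u_i^T \W v_i|\le\Vert\W\Vert$ for unit singular vectors gives the inequality, and $\W=\U_r\V_r^T$ attains it, with the degenerate case $\mathbf{H}=\mathbf{0}$ handled separately. Nothing is missing; your write-up simply makes self-contained a step the paper delegates to a citation.
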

A variety of proofs are available for Lemma \ref{nu_spe}, and an elementary argument is sketched in \cite{recht2010guaranteed}. We now turn to the proof of Lemma \ref{mc_condition}.

\begin{proof}
Consider any perturbation $\X_0 +\mathbf{H}$, where $\mathcal{R}_{\Omega}(\mathbf{H}) =0$. Then for any $\W^0$ obeying $\W^0\in T^{\bot}$ and $\Vert \W \Vert \leq 1 $, $\U\V^T+\W^0$ is a subgradient of the nuclear norm at $\X_0$, therefore,
\begin{equation}
\begin{split}
\Vert \A\X_0 + \A\mathbf{H} \Vert_* & \geq \Vert \A\X_0 \Vert_* \\
& + \left\langle \A^T\U\V^T+\A^T\W^0, \mathbf{H} \right\rangle.
\end{split}
\end{equation}
Letting $\W=\mathcal{P}_{T^{\bot}}(\Y)$, we may write $\U\V^T=\mathcal{R}_{\Omega}^*\lambda-\W$. Since $\Vert \W \Vert <1 $ and $ \mathcal{R}_{\Omega}(\mathbf{H})=0$, it then follows that 
\begin{equation}
\begin{split}
\Vert \A\X_0 + \A\mathbf{H} \Vert_* & \geq \Vert \A\X_0 \Vert_* +\left\langle \A^T\W^0-\A^T\W, \mathbf{H} \right\rangle \\ 
& \geq \Vert \A\X_0 \Vert_* +\left\langle \W^0-\W, \A\mathbf{H} \right\rangle.
\end{split}
\end{equation}
Now, by construction,
\begin{equation}
\begin{split}
\left\langle \W^0-\W, \A\mathbf{H}\right\rangle & \geq \left\langle \mathcal{P}_{T^{\bot}}\left( \W^0-\W \right), \A\mathbf{H}\right\rangle \\
& = \left\langle  \W^0-\W, \mathcal{P}_{T^{\bot}}(\A\mathbf{H})\right\rangle.
\end{split}
\end{equation}
We use Lemma \ref{nu_spe} and set $\W^0=\mathcal{P}_{T^{\bot}}(\Z)$ where $\Z$ is any matrix obeying $\Vert \Z \Vert \leq 1$ and $\left\langle \Z, \mathcal{P}_{T^{\bot}}(\A\mathbf{H}) \right\rangle=\Vert \mathcal{P}_{T^{\bot}}(\A\mathbf{H}) \Vert_*$. Then $\W^0 \in \bot^T$, $\Vert \W^0 \Vert \leq 1$, and 
\begin{equation}
\left\langle \W^0-\W, \A\mathbf{H}\right\rangle \geq \left( 1-\Vert \W\Vert\right) \Vert \mathcal{P}_{T^{\bot}}(\A\mathbf{H}) \Vert_*,
\end{equation}
which by assumption is strictly positive unless $ \mathcal{P}_{T^{\bot}}(\mathbf{H})=0$. In other words, $ \Vert \A\X_0 + \A\mathbf{H} \Vert_* \geq \Vert \A\X_0 \Vert_* $ unless $ \mathcal{P}_{T^{\bot}}(\mathbf{H})=0$. Assume then that $ \mathcal{P}_{T^{\bot}}(\mathbf{H})=0$ or equivalently that $ \mathbf{H}\in T $. Then $ \mathcal{R}_{\Omega}(\mathbf{H})=0 $ implies that $\mathbf{H}=0$ by the injectivity assumption. In conclusion, $ \Vert \A\X_0 + \A\mathbf{H} \Vert_* \geq \Vert \A\X_0 \Vert_* $ unless $\mathbf{H}=0$.  
\end{proof}

\subsection{Construct dual variables}
{\color{red}The remaining task is to construct dual variable that satisfy the corresponding conditions mentioned in Lemma \ref{mc_condition} (and show the injectivity of the sampling operator restricted to matrices in $T$ along the way).} Set $\mathcal{P}_{\Omega}$ to be the orthogonal projector onto the indices in $\Omega$ so that the $(i,j)$-th component of $\mathcal{P}_{\Omega}(\X)$ is equal to $X_{ij}$ if $(i,j)\in \Omega$ and zero otherwise.

Following the settings in \cite{candes2012exact}, we introduce the  operator $\mathcal{A}_{\Omega T}$ defined by 
\begin{equation}
\mathcal{A}_{\Omega T} (\X) = \mathcal{P}_{\Omega}\mathcal{P}_{T}(\X).
\end{equation}
Then if $ \mathcal{A}_{\Omega T}^*\mathcal{A}_{\Omega T}= \mathcal{P}_{T}\mathcal{P}_{\Omega}\mathcal{P}_{T}$ has full rank when restricted to $T$, the dual variable $\Y$ can be given by 
\begin{equation}
\label{dual_mc}
\Y = \mathcal{A}_{\Omega T}(\mathcal{A}_{\Omega T}^*\mathcal{A}_{\Omega T})^{-1}(\U\V^T)
\end{equation}
We clarify the meaning of (\ref{dual_mc}) to avoid any confusion. $(\mathcal{A}_{\Omega T}^*\mathcal{A}_{\Omega T})^{-1}(\U\V^T)$ is meant to be that element $\mathbf{F}$ in $T$ obeying $(\mathcal{A}_{\Omega T}^*\mathcal{A}_{\Omega T})(\mathbf{F})=\U\V^T$. Then, according to the setup of $\Y$ in Eq. (\ref{dual_mc}), we can obtain
\begin{equation}
\begin{split}
\mathcal{P}_{T}(\Y) &=\mathcal{P}_{T}\mathcal{A}_{\Omega T}(\mathcal{A}_{\Omega T}^*\mathcal{A}_{\Omega T})^{-1}(\U\V^T)\\
& =(\mathcal{P}_{T}\mathcal{P}_{\Omega}\mathcal{P}_{T})(\mathcal{A}_{\Omega T}^*\mathcal{A}_{\Omega T})^{-1}(\U\V^T) \\ 
&=(\mathcal{P}_{T}\mathcal{P}_{\Omega}\mathcal{P}_{\Omega}\mathcal{P}_{T})(\mathcal{A}_{\Omega T}^*\mathcal{A}_{\Omega T})^{-1}(\U\V^T) \\
& =(\mathcal{A}_{\Omega T}^*\mathcal{A}_{\Omega T})(\mathcal{A}_{\Omega T}^*\mathcal{A}_{\Omega T})^{-1}(\U\V^T) \\
& = \U\V^T.\\
\end{split}
\end{equation}
To summarize the aims of our proof strategy,
\begin{enumerate}
\item[1)] We must first show that $\mathcal{A}_{\Omega T}^*\mathcal{A}_{\Omega T}=\mathcal{P}_{T}\mathcal{P}_{\Omega}\mathcal{P}_{T}$ is a one-to-one linear mapping from $T$ onto itself. In this case, $\mathcal{A}_{\Omega T}=\mathcal{P}_{\Omega}\mathcal{P}_{T}$ as mapping from $T$ to $R^{n_1\times n_2}$ is injective. This is the second sufficient condition of Lemma \ref{mc_condition}. Moreover, our anstatz for $\Y$ given by Eq. (\ref{dual_mc}) is well defined.
\item[2)] Having established that $\Y$ is well defined, we will show that 
\begin{equation}
\Vert \mathcal{P}_{T^{\bot}}(\Y) \Vert < 1,
\end{equation}
thus proving the first sufficient condition.
\end{enumerate}

\subsection{Proof of Dual Verification}
Next, we need to prove the injectivity of $\mathcal{A}_{\Omega T}$, and $\Vert \mathcal{P}_{T^{\bot}}(\Y) \Vert < 1$.
\subsubsection{The Injectivity Property} 
To prove this, we will show that the linear operator $p^{-1}\mathcal{P}_{T}(\mathcal{P}_{\Omega}-p\mathcal{I})\mathcal{P}_{T}$ has small operator norm, which we recall is $\mbox{sup}_{\Vert \X \Vert_F \leq 1} p^{-1}\Vert \mathcal{P}_{T}(\mathcal{P}_{\Omega}-p\mathcal{I})\mathcal{P}_{T}(\X) \Vert_F$. Proving the above conclusion requires the following two lemmas.
\begin{lemma}[Theorem 4.1 in \cite{candes2012exact}] Suppose $\Omega$ is sampled from $\mbox{Ber}(p)$ and put $n=\max\{n_1,n_2\}$. Suppose that the low-rank matrix $\X_0$ satisfies the incoherence condition with $\mu$. Then there is a numerical constant $C_R$ such that for all $\beta >1$,
\begin{equation}
\label{op_inject}
p^{-1} \Vert \mathcal{P}_{T}\mathcal{P}_{\Omega} \mathcal{P}_{T} -p \mathcal{P}_{T} \Vert \leq C_R \sqrt{\frac{\mu nr(\beta \log n)}{m}}
\end{equation}
with probability at least $1-3n^{-\beta}$ provided that $C_R \sqrt{\frac{\mu nr(\beta \log n)}{m}}<1$.
\label{op_inject_t}
\end{lemma}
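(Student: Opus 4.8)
This is precisely Theorem~4.1 of \cite{candes2012exact}, and the plan is to establish (\ref{op_inject}) by an operator Bernstein argument applied to a sum of independent random self-adjoint operators indexed by the entries of the matrix. First I would expand the sampling projector as $\mathcal{P}_{\Omega}=\sum_{a,b}\delta_{ab}\,\langle\,\cdot\,,e_ae_b^T\rangle\,e_ae_b^T$, where the $\delta_{ab}$ are i.i.d.\ $\mbox{Ber}(p)$ and $\{e_ae_b^T\}$ is the canonical basis of $\mathbb{R}^{n_1\times n_2}$. Conjugating by the (self-adjoint) projector $\mathcal{P}_{T}$ and using $\langle\mathcal{P}_{T}\X,e_ae_b^T\rangle=\langle\X,\mathcal{P}_{T}(e_ae_b^T)\rangle$ yields
\begin{equation*}
\mathcal{P}_{T}\mathcal{P}_{\Omega}\mathcal{P}_{T}=\sum_{a,b}\delta_{ab}\,\mathcal{S}_{ab},\qquad \mathcal{S}_{ab}(\cdot):=\big\langle\,\cdot\,,\mathcal{P}_{T}(e_ae_b^T)\big\rangle\,\mathcal{P}_{T}(e_ae_b^T),
\end{equation*}
a sum of independent rank-one positive semidefinite self-adjoint operators on $\mathbb{R}^{n_1\times n_2}$. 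Because $\sum_{a,b}\mathcal{S}_{ab}=\mathcal{P}_{T}\mathcal{P}_{T}=\mathcal{P}_{T}$, one has $\mathbb{E}\sum_{a,b}\delta_{ab}\mathcal{S}_{ab}=p\,\mathcal{P}_{T}$, so the quantity to be controlled is the centered sum $\mathcal{P}_{T}\mathcal{P}_{\Omega}\mathcal{P}_{T}-p\,\mathcal{P}_{T}=\sum_{a,b}(\delta_{ab}-p)\,\mathcal{S}_{ab}$.

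Next I would assemble the two ingredients required by the matrix (operator) Bernstein inequality. For the uniform bound, the coherence estimate (\ref{coherence_inequality}) gives $\|\mathcal{S}_{ab}\|=\|\mathcal{P}_{T}(e_ae_b^T)\|_F^2\le 2\mu r/n_{(2)}$, hence $\|(\delta_{ab}-p)\mathcal{S}_{ab}\|\le 2\mu r/n_{(2)}$ surely. For the predictable variance, $\mathcal{S}_{ab}^2=\|\mathcal{P}_{T}(e_ae_b^T)\|_F^2\,\mathcal{S}_{ab}\preceq(2\mu r/n_{(2)})\,\mathcal{S}_{ab}$ and $\mathbb{E}(\delta_{ab}-p)^2=p(1-p)\le p$, so $\big\|\sum_{a,b}\mathbb{E}(\delta_{ab}-p)^2\mathcal{S}_{ab}^2\big\|\le(2p\mu r/n_{(2)})\,\|\mathcal{P}_{T}\|=2p\mu r/n_{(2)}$. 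Bernstein's inequality at deviation level $t=\epsilon p$ then bounds the failure probability by about $2\dim(T)\exp\!\big(-c\,\epsilon^{2}p\,n_{(2)}/(\mu r)\big)$ for $\epsilon<1$ (the sub-Gaussian regime). Substituting $p=m/(n_1n_2)$, which makes $p\,n_{(2)}=m/n$ with $n=n_{(1)}$, and choosing $\epsilon=C_R\sqrt{\mu n r(\beta\log n)/m}$ turns the exponent into $-c'C_R^{2}\beta\log n$, while $\dim(T)\le 2nr$ costs only an extra $O(\log n)$; hence the failure probability is at most $3n^{-\beta}$ once $C_R$ is a large enough numerical constant, and the hypothesis $C_R\sqrt{\mu n r(\beta\log n)/m}<1$ is exactly what guarantees $\epsilon<1$ so that the variance term (rather than the sub-exponential term) governs.

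The step I expect to be the real obstacle is the clean invocation of the operator Bernstein bound with the right dimensional constant: the operators $\mathcal{S}_{ab}$ act on the $n_1n_2$-dimensional space of matrices, so a naive Ahlswede--Winter estimate would pay a prohibitive factor $n_1n_2$ instead of the benign $O(n)$ one. I would avoid this by working on $T$ itself (on which all the operators in play act, and $\dim T\le 2nr$) or, following \cite{candes2012exact}, by a decoupling/net argument that reduces $\|\sum_{a,b}(\delta_{ab}-p)\mathcal{S}_{ab}\|$ to a supremum of scalar sub-Gaussian sums over an $\epsilon$-net of the unit ball of $T$ together with a union bound; either way the dimensional penalty is polynomial in $n$ and is absorbed into the $\beta\log n$ already present in the target bound. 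Once (\ref{op_inject}) holds, the smallness hypothesis forces $\|\mathcal{P}_{T}\mathcal{P}_{\Omega}\mathcal{P}_{T}-p\mathcal{P}_{T}\|<p$, so $\mathcal{P}_{T}\mathcal{P}_{\Omega}\mathcal{P}_{T}$ is invertible on $T$; this is the injectivity of $\mathcal{A}_{\Omega T}$ and makes the dual certificate (\ref{dual_mc}) well defined.
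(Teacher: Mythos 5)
The paper does not prove this lemma at all: it is imported verbatim as Theorem~4.1 of \cite{candes2012exact}, and the in-paper route to it is implicitly the one in that reference, namely the moment/concentration pair quoted immediately afterwards (their Theorem~4.2: an expectation bound $\mathbb{E}(\Z)\lesssim\sqrt{\mu nr\log n/m}$ obtained via a Rudelson-type selection argument with the noncommutative Khintchine inequality, followed by a Talagrand-type deviation bound). Your blind proof takes a genuinely different and in fact more streamlined route --- the matrix (operator) Bernstein inequality applied directly to the centered sum $\sum_{a,b}(\delta_{ab}-p)\mathcal{S}_{ab}$ --- and it is correct: the decomposition of $\mathcal{P}_{T}\mathcal{P}_{\Omega}\mathcal{P}_{T}$ into independent rank-one summands, the uniform bound $\|\mathcal{S}_{ab}\|\le 2\mu r/n_{(2)}$ from (\ref{coherence_inequality}), the variance bound $2p\mu r/n_{(2)}$, and the arithmetic $p\,n_{(2)}=m/n$ turning the exponent into $-c\,C_R^2\beta\log n$ are all right, and your observation that the smallness hypothesis is precisely what keeps the deviation in the sub-Gaussian regime is the correct reading of that condition. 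You also correctly identify the one genuine pitfall (the dimensional prefactor) and resolve it the standard way, by noting that every $\mathcal{S}_{ab}$ maps into $T$ so the ambient dimension can be taken as $\dim T\le 2nr$, whose logarithm is absorbed into $\beta\log n$ by enlarging $C_R$. What each approach buys: the original Khintchine-plus-Talagrand argument predates the matrix Bernstein inequality and yields the explicit two-regime tail quoted as the next lemma in the paper (which the golfing-free parts of the MC proof actually use), whereas your Bernstein argument is shorter and gives the operator-norm bound in one shot; since the paper only needs the conclusion as stated, either derivation would serve.
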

\begin{lemma}[Theorem 4.2 in \cite{candes2012exact}] 
Let $\{\delta_{ab}\}$ be independent $0/1$ Bernoulli variables with $\mathbb{P}(\delta_{ab}=1)=p=\frac{m}{n_1n_2}$ and put $n=\max\{ n_1,n_2\}$. Suppose that $\Vert \mathcal{P}_{T}(\mathbf{e}_a\mathbf{e}_b^*)\Vert_F^2\leq 2\mu_0r/n$. Set
\begin{equation}
\begin{split}
\Z &\equiv p^{-1} \left\Vert \sum_{ab}(\delta_{ab}-p) \mathcal{P}_{T}(\mathbf{e}_a\mathbf{e}_b^*)\otimes  \mathcal{P}_{T}(\mathbf{e}_a\mathbf{e}_b^*) \right\Vert \\
& =p^{-1} \left\Vert \mathcal{P}_{T}\mathcal{P}_{\Omega} \mathcal{P}_{T}- p \mathcal{P}_{T}\right\Vert.
\end{split}
\end{equation}
1. There exists a constant $C_R^{'}$ such that 
\begin{equation}
\mathbb{E}(\Z) \leq C_R^{'}\sqrt{\frac{\mu n r \log n}{m}}
\end{equation}
provided that the right-band side is smaller than 1.

2. Suppose $ \mathbb{E}(\Z) \leq 1 $. Then for each $\lambda >0$, we have
\begin{equation}
\begin{split}
\mathbb{P}  &\left( \vert \Z- \mathbb{E}(\Z)\vert  > \lambda \sqrt{\frac{\mu nr\log n}{m}} \right) \\
& \leq 3 \exp \left( -\gamma_0^{'}\min \left\{\lambda^2\log n, \lambda\frac{m\log n}{\mu n r} \right\} \right)
\end{split}
\end{equation}
for some positive constant $\gamma_0^{'}$.
\end{lemma}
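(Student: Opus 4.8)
The statement is a standard operator–concentration bound (it is precisely Theorem~4.2 of \cite{candes2012exact}), so the plan is to reproduce its two ingredients: a non-commutative moment estimate for the expectation, and Talagrand's concentration inequality for the deviation. Write $\mathcal{M}:=\sum_{ab}(\delta_{ab}-p)\,\mathcal{A}_{ab}$ with $\mathcal{A}_{ab}:=\mathcal{P}_{T}(\mathbf{e}_a\mathbf{e}_b^{*})\otimes\mathcal{P}_{T}(\mathbf{e}_a\mathbf{e}_b^{*})$, so that $\Z=p^{-1}\|\mathcal{M}\|$ and $\mathcal{M}$ is a sum of independent self-adjoint operators on the space of $n_1\times n_2$ matrices. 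Each $\mathcal{A}_{ab}$ is positive semidefinite, and writing $c_{ab}:=\|\mathcal{P}_{T}(\mathbf{e}_a\mathbf{e}_b^{*})\|_F^{2}\le 2\mu_0 r/n=:R$ (the hypothesis) one has $\|\mathcal{A}_{ab}\|=c_{ab}\le R$ and $\mathcal{A}_{ab}^{2}=c_{ab}\mathcal{A}_{ab}$. Since $\{\mathbf{e}_a\mathbf{e}_b^{*}\}$ is an orthonormal basis, $\sum_{ab}\mathcal{A}_{ab}=\mathcal{P}_{T}$, hence the variance proxy satisfies $v:=\|\sum_{ab}\mathbb{E}(\delta_{ab}-p)^{2}\mathcal{A}_{ab}^{2}\|=\|\sum_{ab}p(1-p)c_{ab}\mathcal{A}_{ab}\|\le R\,p(1-p)\,\|\mathcal{P}_{T}\|\le Rp$. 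The parameters $R$ and $v\le Rp$ govern both claims.

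\textbf{Claim 1 (expectation).} I would symmetrize $\mathcal{M}$ and apply a non-commutative Khintchine / Rudelson selection estimate — equivalently, the expectation form of the matrix Bernstein inequality — to get $\mathbb{E}\|\mathcal{M}\|\lesssim\sqrt{v\log n}+R\log n$. Dividing by $p$ and substituting $R$, $v\le Rp$ and $p=m/(n_1n_2)$ gives $\mathbb{E}(\Z)\lesssim\sqrt{\mu r n\log n/m}+\mu r n\log n/m$. Under the stated proviso $\sqrt{\mu n r\log n/m}<1$ the second term is dominated by the first, so $\mathbb{E}(\Z)\le C_R'\sqrt{\mu n r\log n/m}$, which is the first assertion.

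\textbf{Claim 2 (deviation).} Since $\mathcal{M}$ is self-adjoint, $\Z=p^{-1}\sup_{\|\X\|_F\le 1}|\sum_{ab}(\delta_{ab}-p)\langle\mathcal{P}_{T}(\mathbf{e}_a\mathbf{e}_b^{*}),\X\rangle^{2}|$, which as a function of the independent Bernoulli vector $(\delta_{ab})$ is a supremum of affine functions — hence convex — and is changed by at most $p^{-1}R$ under a single coordinate flip; equivalently it is the supremum of a (symmetrized) empirical process whose envelope is $O(p^{-1}R)$ and whose weak variance is $O(p^{-1}R)$ once the hypothesis $\mathbb{E}(\Z)\le 1$ is used to absorb the mean term. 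Talagrand's concentration inequality for suprema of empirical processes (in its Bousquet / Klein–Rio Bernstein form) then gives $\mathbb{P}(|\Z-\mathbb{E}\Z|>s)\le 3\exp(-\gamma_0'\min\{s^{2}/(p^{-1}R),\,s/(p^{-1}R)\})$; calibrating $s=\lambda\sqrt{\mu n r\log n/m}$ and inserting the explicit values of $R$ and $p$ converts the sub-Gaussian regime into $\gamma_0'\lambda^{2}\log n$ and yields a sub-exponential regime of the form displayed in the statement, with $\gamma_0'$ absorbing the numerical constants.

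\textbf{Main obstacle.} The step I expect to be hardest is the expectation bound in Claim~1: obtaining the sharp $\sqrt{\log n}$ (rather than $\log n$) at leading order is exactly what requires the non-commutative Khintchine inequality / Rudelson's lemma instead of a crude $\epsilon$-net union bound, and one must track carefully the three scales in play — $n=\max\{n_1,n_2\}$, the ambient dimension $n_1n_2$, and the Bernoulli parameter $p=m/(n_1n_2)$ — since these are what make $R$, $v$, and hence both estimates emerge with the correct exponents. Once the expectation and the parameters $R$, $v$ are in hand, the tail bound in Claim~2 is a fairly mechanical application of Talagrand's inequality.
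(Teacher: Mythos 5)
The paper does not prove this lemma at all: it is imported verbatim as Theorem~4.2 of \cite{candes2012exact}, so there is no in-paper argument to compare against. Your sketch is, in substance, the standard proof of that theorem from the source: identify $\mathcal{A}_{ab}=\mathcal{P}_{T}(\mathbf{e}_a\mathbf{e}_b^{*})\otimes\mathcal{P}_{T}(\mathbf{e}_a\mathbf{e}_b^{*})$ as rank-one PSD summands with $\|\mathcal{A}_{ab}\|\le 2\mu_0 r/n$ and $\sum_{ab}\mathcal{A}_{ab}=\mathcal{P}_{T}$, get the expectation via symmetrization plus non-commutative Khintchine (or a matrix-Bernstein expectation bound), and get the deviation via Talagrand's inequality for the convex supremum $\sup_{\|\X\|_F\le1}\bigl|\sum_{ab}(\delta_{ab}-p)\langle\mathcal{P}_{T}(\mathbf{e}_a\mathbf{e}_b^{*}),\X\rangle^{2}\bigr|$; your bookkeeping of $R$, $v\le Rp$, and $p=m/(n_1n_2)$ is correct. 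One substantive discrepancy is worth flagging: carrying out your Talagrand step with envelope $B\asymp p^{-1}R\lesssim\mu rn/m$ and calibration $t=\lambda\sqrt{\mu nr\log n/m}$ gives a sub-exponential regime $t/B\asymp\lambda\sqrt{m\log n/(\mu nr)}$, i.e.\ with a square root — which is what the original Theorem~4.2 actually asserts — whereas the statement as transcribed in this paper prints $\lambda\,m\log n/(\mu nr)$ without the square root (and drops the $\beta$ factor). Since $m\gtrsim\mu nr$ in the regime of interest, the printed form is strictly stronger than what Talagrand delivers; the error lies in the paper's transcription, not in your argument.
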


Take $ m $ large enough so that $C_R \sqrt{\mu (nr/m)\log n}\leq 1/2$. Then it follows from (\ref{op_inject}) that 
\begin{equation}
\label{op_inequ}
\begin{split}
\frac{p}{2} \left\Vert \mathcal{P}_{T}(\X)\right\Vert_F & \leq \left\Vert (\mathcal{P}_{T}\mathcal{P}_{\Omega}\mathcal{P}_{T})(\X)\right\Vert_F \\
& \leq \frac{3p}{2} \left\Vert \mathcal{P}_{T}(\X)\right\Vert_F 
\end{split}
\end{equation}
for all $ \X $ with large probability. In particular, the operator $ \mathcal{A}_{\Omega T}^*\mathcal{A}_{\Omega T}= \mathcal{P}_{T}\mathcal{P}_{\Omega}\mathcal{P}_{T}$ mapping $T$ onto itself is well conditioned, and hence invertible. An immediate consequence is the following corollary.

\begin{corollary}
Assume that $C_R \sqrt{\mu (nr/m)\log n}\leq 1/2$. With the same probability as in Lemma \ref{op_inject_t}, we have 
\begin{equation}
\label{op_col}
\Vert \mathcal{P}_{\Omega}\mathcal{P}_{T}(\X) \Vert_F \leq \sqrt{3p/2}\Vert \mathcal{P}_{T}(\X) \Vert_F.
\end{equation}
\end{corollary}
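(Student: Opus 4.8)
The plan is to derive the claimed bound directly from the upper estimate already contained in Eq. (\ref{op_inequ}), using only elementary projector algebra and a single application of Cauchy--Schwarz; in particular this corollary is purely deterministic once we condition on the good sampling event, and it involves neither $\mathcal{D}(\cdot)$ nor the matrix $\A$. The starting point is that $\mathcal{P}_{\Omega}$ and $\mathcal{P}_{T}$ are orthogonal projectors, hence idempotent and self-adjoint with respect to the Frobenius inner product. This lets me rewrite
\begin{equation*}
\Vert \mathcal{P}_{\Omega}\mathcal{P}_{T}(\X)\Vert_F^2 = \langle \mathcal{P}_{\Omega}\mathcal{P}_{T}(\X),\, \mathcal{P}_{\Omega}\mathcal{P}_{T}(\X)\rangle = \langle \mathcal{P}_{T}(\X),\, \mathcal{P}_{T}\mathcal{P}_{\Omega}\mathcal{P}_{T}(\X)\rangle,
\end{equation*}
moving $\mathcal{P}_{\Omega}^*\mathcal{P}_{\Omega}=\mathcal{P}_{\Omega}$ and then $\mathcal{P}_{T}^*=\mathcal{P}_{T}$ across the inner product.

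Next I would apply Cauchy--Schwarz in the Frobenius inner product to the right-hand side, which gives
\begin{equation*}
\Vert \mathcal{P}_{\Omega}\mathcal{P}_{T}(\X)\Vert_F^2 \leq \Vert \mathcal{P}_{T}(\X)\Vert_F \,\Vert \mathcal{P}_{T}\mathcal{P}_{\Omega}\mathcal{P}_{T}(\X)\Vert_F.
\end{equation*}
Under the hypothesis $C_R\sqrt{\mu(nr/m)\log n}\leq 1/2$, the right-hand inequality of Eq. (\ref{op_inequ}) holds on the same high-probability event furnished by Lemma \ref{op_inject_t}, namely $\Vert \mathcal{P}_{T}\mathcal{P}_{\Omega}\mathcal{P}_{T}(\X)\Vert_F \leq \tfrac{3p}{2}\Vert \mathcal{P}_{T}(\X)\Vert_F$. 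Substituting this into the previous display yields $\Vert \mathcal{P}_{\Omega}\mathcal{P}_{T}(\X)\Vert_F^2 \leq \tfrac{3p}{2}\Vert \mathcal{P}_{T}(\X)\Vert_F^2$, and taking square roots gives exactly the asserted bound $\Vert \mathcal{P}_{\Omega}\mathcal{P}_{T}(\X)\Vert_F \leq \sqrt{3p/2}\,\Vert \mathcal{P}_{T}(\X)\Vert_F$.

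There is essentially no obstacle here: each step is a one-line manipulation, and no new randomness, union bound, or probabilistic argument is introduced, so the probability statement is inherited verbatim from Lemma \ref{op_inject_t} (equivalently, from the event on which Eq. (\ref{op_inequ}) holds). The only point requiring a word of care is bookkeeping of that event, to ensure the estimate is asserted with the same probability rather than a diminished one; since we merely post-process the already-established two-sided bound, this is automatic.
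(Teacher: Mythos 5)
Your proposal is correct and follows exactly the paper's own argument: rewriting $\Vert \mathcal{P}_{\Omega}\mathcal{P}_{T}(\X)\Vert_F^2$ as the inner product $\langle \mathcal{P}_{T}(\X), (\mathcal{P}_{T}\mathcal{P}_{\Omega}\mathcal{P}_{T})\X\rangle$ via self-adjointness and idempotence of the projectors, applying Cauchy--Schwarz, and then invoking the upper bound in Eq.~(\ref{op_inequ}) on the same high-probability event. Your added remark that the probability is inherited verbatim from Lemma~\ref{op_inject_t} is a correct and slightly more explicit piece of bookkeeping than the paper provides.
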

\begin{proof}
We have $\Vert \mathcal{P}_{\Omega}\mathcal{P}_{T}(\X) \Vert_F^2 = \left\langle  \X, (\mathcal{P}_{\Omega}\mathcal{P}_{T})^{*}\mathcal{P}_{\Omega}\mathcal{P}_{T}\X \right\rangle = \left\langle \X, (\mathcal{P}_{T}\mathcal{P}_{\Omega}\mathcal{P}_{T})\X \right\rangle$, and thus
\begin{equation}
\begin{split}
\Vert \mathcal{P}_{\Omega}\mathcal{P}_{T}(\X) \Vert_F^2 & = \left\langle \mathcal{P}_{T}(\X), (\mathcal{P}_{T}\mathcal{P}_{\Omega}\mathcal{P}_{T})\X \right\rangle \\
& \leq \Vert \mathcal{P}_{T}(\X) \Vert_F \Vert (\mathcal{P}_{T}\mathcal{P}_{\Omega}\mathcal{P}_{T})\X\Vert_F,
\end{split}
\end{equation}
where the inequality is due to Cauchy-Schwarz. The conclusion (\ref{op_col}) follows from (\ref{op_inequ}). This completes the proof.
\end{proof}
\subsubsection{The Size Property} 
Next, we prove that $\Vert \mathcal{P}_{T^{\bot}}(\Y) \Vert < 1$. Before proving this conclusion, we need to introduce five lemmas.
\begin{lemma}[Lemma 4.4 in \cite{candes2012exact}] Fix $\beta \geq 2$ and $\lambda \geq 1$. There is a numerical constant $C_0$ such that if $ m \geq \lambda \mu^2 n r \beta \log n $, then 
\begin{equation}
p^{-1}\Vert (\mathcal{P}_{T^{\bot}}\mathcal{P}_{\Omega}\mathcal{P}_{T})(\U\V^T)\Vert \leq C_0 \lambda^{-1/2}
\end{equation}
with probability at least $1-n^{-\beta}$.
\label{lem_mc_norm_1}
\end{lemma}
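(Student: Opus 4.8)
The plan is to observe that this estimate concerns only the interaction between the random sampling operator $\mathcal{P}_{\Omega}$ and the fixed tangent space $T$ of the (transformed) low‑rank matrix, so the transformation matrix $\A$ plays no role here; the argument is then exactly the matrix Bernstein computation behind Lemma~4.4 of \cite{candes2012exact}, which I would reproduce for completeness. The scheme has three steps: reduce to a mean‑zero spectral‑norm bound, expand as a sum of independent rank‑one random matrices, and apply a Bernstein‑type inequality with the incoherence constants of Assumption~\ref{assumption1}.

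First I would put the quantity into mean‑zero form. Since $\U\V^T\in T$ we have $\mathcal{P}_{T}(\U\V^T)=\U\V^T$, and since $\mathcal{P}_{T^{\bot}}(\U\V^T)=0$,
\begin{equation*}
(\mathcal{P}_{T^{\bot}}\mathcal{P}_{\Omega}\mathcal{P}_{T})(\U\V^T)=\mathcal{P}_{T^{\bot}}\big(\mathcal{P}_{\Omega}-p\,\mathcal{I}\big)(\U\V^T).
\end{equation*}
Because $\mathcal{P}_{T^{\bot}}(\M)=(\mathbf{I}-\U\U^T)\M(\mathbf{I}-\V\V^T)$ is a composition of two orthogonal projectors, it is a contraction in the spectral norm, so it suffices to bound $p^{-1}\Vert(\mathcal{P}_{\Omega}-p\,\mathcal{I})(\U\V^T)\Vert$.

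Next I would expand $(\mathcal{P}_{\Omega}-p\,\mathcal{I})(\U\V^T)=\sum_{a,b}(\delta_{ab}-p)(\U\V^T)_{ab}\,\mathbf{e}_a\mathbf{e}_b^T=:\sum_{a,b}\Z_{ab}$, a sum of independent zero‑mean random matrices with $\delta_{ab}\sim\mbox{Ber}(p)$, and apply the matrix Bernstein inequality. Its two ingredients come from Assumption~\ref{assumption1}: the uniform bound $\Vert\Z_{ab}\Vert\le\Vert\U\V^T\Vert_{\infty}\le\sqrt{\mu r/(n_1 n_2)}=:R$, and the variance proxy, obtained from $\sum_{a,b}\mathbb{E}[\Z_{ab}\Z_{ab}^T]=p(1-p)\sum_a\Vert\mathbf{e}_a^T\U\V^T\Vert^2\,\mathbf{e}_a\mathbf{e}_a^T$ together with $\Vert\mathbf{e}_a^T\U\V^T\Vert=\Vert\U^*\mathbf{e}_a\Vert\le\sqrt{\mu r/n_1}$ (and the symmetric bound on columns), which gives $\sigma^2\le p\,\mu r/n_{(2)}$ with $n_{(2)}=\min\{n_1,n_2\}$.

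Finally I would set the deviation level $t=C_0\,p\,\lambda^{-1/2}$ and check that under $m\ge\lambda\mu^2 n r\beta\log n$ — equivalently $p\,n_{(2)}=m/n\ge\lambda\mu^2 r\beta\log n$ — the Bernstein exponent $\tfrac{t^2/2}{\sigma^2+Rt/3}$ is at least a constant multiple of $\beta\log n$: the variance term alone gives $t^2/\sigma^2\ge C_0^2\,\mu\,\beta\log n$, while the ratio $Rt/\sigma^2=C_0\lambda^{-1/2}\sqrt{n_{(2)}/(\mu r\,n)}\le C_0$ keeps $Rt/3$ subdominant, so choosing $C_0$ a suitable numerical constant yields failure probability $(n_1+n_2)\exp(-c\,\beta\log n)\le n^{-\beta}$. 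The main point to watch is the constant bookkeeping: making the same $C_0$ name both the conclusion $C_0\lambda^{-1/2}$ and the threshold $t$, and confirming that this hypothesis on $m$ is compatible with the condition $C_R\sqrt{\mu n r\log n/m}<1$ required in Lemma~\ref{op_inject_t} so that $\mathcal{A}_{\Omega T}^{*}\mathcal{A}_{\Omega T}$ is invertible on $T$.
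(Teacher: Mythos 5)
The paper itself gives no proof of this lemma: it is imported verbatim as Lemma~4.4 of \cite{candes2012exact}, so there is no in-paper argument to compare against. Your reconstruction is nevertheless correct and self-contained: the reduction $(\mathcal{P}_{T^{\bot}}\mathcal{P}_{\Omega}\mathcal{P}_{T})(\U\V^T)=\mathcal{P}_{T^{\bot}}(\mathcal{P}_{\Omega}-p\,\mathcal{I})(\U\V^T)$ using $\mathcal{P}_{T^{\bot}}(\U\V^T)=0$, the contractivity of $\mathcal{P}_{T^{\bot}}$ in spectral norm, and the matrix Bernstein bookkeeping (uniform bound $\sqrt{\mu r/(n_1n_2)}$ from the incoherence of $\U\V^T$, variance proxy $p\,\mu r/n_{(2)}$ from the row/column incoherence, deviation level $t=C_0p\lambda^{-1/2}$) all check out, and the hypothesis $m\geq\lambda\mu^2nr\beta\log n$ does make the variance term dominate with exponent $\gtrsim C_0^2\beta\log n$. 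Your observation that the transformation $\A$ plays no role here is exactly why the paper can quote the lemma unchanged; the only caveat worth flagging is that the paper's Assumption~\ref{assumption1} writes $\Vert\U^*e_k\Vert\leq\mu r/n_1$ rather than the standard squared form $\Vert\U^*e_k\Vert^2\leq\mu r/n_1$ that your variance computation (and the cited lemma) actually relies on, so you are implicitly correcting a typo in the paper's assumption.
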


\begin{lemma}[Lemma 4.5 in \cite{candes2012exact}] Fix $\beta \geq 2$ and $\lambda \geq 1$. There are numerical constants $C_1$ and $c_1$ such that if $m \geq \lambda \mu \max (\sqrt{\mu},\mu) n r \beta \log n$, then 
\begin{equation}
p^{-1}\Vert (\mathcal{P}_{T^{\bot}}\mathcal{P}_{\Omega}\mathcal{P}_{T})\mathcal{H}(\U\V^T)\Vert \leq C_1 \lambda^{-1}
\end{equation}
with probability at least $1-c_1n^{-\beta}$, where $\mathcal{H}\equiv \mathcal{P}_{\Omega}-p^{-1}\mathcal{P}_{T}\mathcal{P}_{\Omega}\mathcal{P}_{T}$.
\label{lem_mc_norm_2}
\end{lemma}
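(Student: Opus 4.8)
The statement is a transcription of Lemma~4.5 of \cite{candes2012exact}, written here for the transformed matrix $\mathcal{D}(\X_0)=\A\X_0$, whose singular vectors $\U,\V$ determine the tangent space $T$. The first point I would stress is that the quantity $p^{-1}\Vert(\mathcal{P}_{T^{\bot}}\mathcal{P}_{\Omega}\mathcal{P}_{T})\mathcal{H}(\U\V^T)\Vert$ involves \emph{only} the random support $\Omega\sim\mbox{Ber}(p)$, the projectors $\mathcal{P}_{T},\mathcal{P}_{T^{\bot}}$ (which are functions of $\U,\V$ alone), and the fixed matrix $\U\V^T$; the transformation operator $\mathcal{D}$ never enters, and Assumption~\ref{assumption3} is irrelevant at this stage. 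Hence, once Assumption~\ref{assumption1} supplies incoherence with parameter $\mu$ for $\mathcal{D}(\X_0)$, the proof of Lemma~4.5 in \cite{candes2012exact} transfers verbatim; the plan below just records the self-contained version.

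I would proceed in three steps. \textbf{(i)~Reduction to a moment estimate.} Using $\U\V^T\in T$, expand $\mathcal{H}(\U\V^T)=\sum_{(a,b)}(\delta_{ab}-p)\,p^{-1}\langle\mathbf{e}_a\mathbf{e}_b^T,\U\V^T\rangle\,\mathcal{P}_{T}(\mathbf{e}_a\mathbf{e}_b^T)$ with $\delta_{ab}=\mathbf{1}\{(a,b)\in\Omega\}$; applying the outer map $\mathcal{P}_{T^{\bot}}\mathcal{P}_{\Omega}\mathcal{P}_{T}$ then produces a second sum over the same Bernoulli family. Write $Z$ for the resulting matrix and aim to bound $\mathbb{E}\Vert Z\Vert^{2q}$ for an integer $q\asymp\log n$, so that Markov's inequality will yield the tail $1-c_1n^{-\beta}$. \textbf{(ii)~Moment bound.} Estimate $\Vert Z\Vert^{2q}\le\mbox{Tr}\big((ZZ^{*})^{q}\big)$ and evaluate the expectation as a sum over closed index walks, controlling each term with the incoherence consequences already in the excerpt, $\Vert\mathcal{P}_{T}(\mathbf{e}_a\mathbf{e}_b^T)\Vert_F^{2}\le 2\mu r/n_{(2)}$ and $\Vert\U\V^T\Vert_{\infty}\le\sqrt{\mu r/(n_1n_2)}$, together with the order-$p$ cost of each distinct Bernoulli variable; equivalently, one may first decouple the two layers of randomness (split $\Omega$ into independent copies, or a Deshpande-type decoupling) and then apply the noncommutative Khintchine / matrix Bernstein inequality. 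With $m=p\,n_1n_2$ this produces, up to constants, a product of two factors of the shape $\sqrt{\mu^{c}nr\log n/m}$ with $c\in\{1,2\}$, so the left-hand side is $\le C_1\lambda^{-1}$ exactly when $m\ge\lambda\mu\max(\sqrt{\mu},\mu)\,nr\beta\log n$, the $\max(\sqrt{\mu},\mu)$ recording which layer dominates. \textbf{(iii)~Assembly.} Markov's inequality plus a union bound over the few auxiliary events used in (ii) gives the claimed probability.

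The main obstacle is the nested randomness: $\mathcal{H}(\U\V^T)$ is itself random, and we must bound the operator norm of a further random linear map applied to it using the \emph{same} $\Omega$. Handling this correctly — either by a clean decoupling or by a moment expansion that carefully tracks repeated indices — is the delicate part, and it is what forces the slightly worse $\mu^{3/2}$-type dependence compared with Lemma~\ref{lem_mc_norm_1}. In the MNN context no genuinely new difficulty arises, because $\mathcal{D}$ enters the MNN-MC dual certificate only through the outer matrix $\A$ in Lemma~\ref{mc_condition}, never through the lemma proved here.
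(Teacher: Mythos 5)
The paper offers no proof of this lemma at all: it is imported verbatim as Lemma~4.5 of \cite{candes2012exact} and used as a black box, so there is nothing in the paper to diverge from. Your sketch --- reduction to a moment estimate $\mathbb{E}\Vert Z\Vert^{2q}$ with $q\asymp\log n$, a trace/closed-walk expansion controlled by the incoherence bounds with a decoupling of the nested Bernoulli randomness, and Markov's inequality --- is a faithful outline of the original argument, and your observation that the transformation $\mathcal{D}$ and Assumption~\ref{assumption3} play no role in this particular lemma is exactly right.
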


\begin{lemma}[Lemma 4.6 in \cite{candes2012exact}] Fix $\beta \geq 2$ and $\lambda \geq 1$. There are numerical constants $C_2$ and $c_2$ such that if $m \geq \lambda \mu^{4/3} n r^{4/3} \beta \log n$, then 
\begin{equation}
p^{-1}\Vert (\mathcal{P}_{T^{\bot}}\mathcal{P}_{\Omega}\mathcal{P}_{T})\mathcal{H}^2(\U\V^T)\Vert \leq C_2 \lambda^{-3/2}
\end{equation}
with probability at least $1-c_2n^{-\beta}$.
\label{lem_mc_norm_3}
\end{lemma}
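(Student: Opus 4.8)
The statement is Lemma~4.6 of \cite{candes2012exact}, and the transformation operator $\mathbf{A}$ does not appear in it (it is brought in only later, through Assumption~\ref{assumption3}, when the dual certificate is verified), so the plan is to follow that reference. The first step is to rewrite the left-hand side as a homogeneous degree-three random chaos in the sampling indicators. Put $\Delta:=p^{-1}\mathcal{P}_{\Omega}-\mathcal{I}=\sum_{a,b}\xi_{ab}\langle\mathbf{e}_a\mathbf{e}_b^{*},\cdot\rangle\,\mathbf{e}_a\mathbf{e}_b^{*}$, where the weights $\xi_{ab}=p^{-1}\delta_{ab}-1$ are independent with $\mathbb{E}\,\xi_{ab}=0$ and $\mathbb{E}\,\xi_{ab}^{2}=p^{-1}-1\le p^{-1}$. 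Using $\mathcal{P}_{T^{\bot}}\mathcal{P}_{T}=0$, $\mathcal{P}_{\Omega}=p(\mathcal{I}+\Delta)$, and $\U\V^{T}\in T$, the left-hand side equals, up to sign,
\[
\mathbf{M}\;:=\;\mathcal{P}_{T^{\bot}}\,\Delta\,\mathcal{P}_{T}\,\Delta\,\mathcal{P}_{T}\,\Delta\,(\U\V^{T}),
\]
a cubic polynomial in the $\xi_{ab}$ in which the two interior projections $\mathcal{P}_{T}$ couple the three $\Delta$-factors (these are what remain of the two copies of $\mathcal{H}$). It thus suffices to show $\Vert\mathbf{M}\Vert\le C_{2}\lambda^{-3/2}$ with probability at least $1-c_{2}n^{-\beta}$.

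I would bound $\Vert\mathbf{M}\Vert$ by the moment method. Fix an integer $q\asymp\beta\log n$ and estimate $\mathbb{E}\,\Vert\mathbf{M}\Vert^{2q}\le\mathbb{E}\operatorname{tr}\bigl[(\mathbf{M}\mathbf{M}^{*})^{q}\bigr]$. Expanding the $2q$ copies of $\mathbf{M}$ (each a triple sum over matrix-basis indices) together with the outer trace produces a sum over index tuples whose summand carries $6q$ weights $\xi$; taking expectations kills every tuple in which some $\xi$-index occurs with odd multiplicity, leaving a ``pairing'' structure in which each pair contributes a factor $\le p^{-1}$. What survives is a family of deterministic graph sums of inner products among the vectors $\mathcal{P}_{T}(\mathbf{e}_a\mathbf{e}_b^{*})$ and the entries of $\U\V^{T}$; these are controlled by the incoherence estimates already on hand --- $\Vert\mathcal{P}_{T}(\mathbf{e}_a\mathbf{e}_b^{*})\Vert_{F}^{2}\le 2\mu r/n_{(2)}$ (as in~(\ref{coherence_inequality})), $\Vert\U\V^{T}\Vert_{\infty}\le\sqrt{\mu r/(n_{1}n_{2})}$, and $\Vert\U\V^{T}\Vert_{F}=\sqrt r$ --- together with the operator-norm control of $\mathcal{P}_{T}\mathcal{P}_{\Omega}\mathcal{P}_{T}-p\mathcal{P}_{T}$ from Lemma~\ref{op_inject_t}. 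Collecting the powers of $p^{-1}=n_{1}n_{2}/m$, of $\mu r/n$, and of $\log n$ produced by the three $\Delta$-layers, then taking the $2q$-th root and using $q\asymp\beta\log n$, one gets $\Vert\mathbf{M}\Vert\lesssim\bigl(\mu^{4/3}\,n\,r^{4/3}\,\beta\log n/m\bigr)^{3/2}$ off an event of probability $\le c_{2}n^{-\beta}$ (Markov on the $2q$-th moment); on $\{\,m\ge\lambda\mu^{4/3}nr^{4/3}\beta\log n\,\}$ this is $\le C_{2}\lambda^{-3/2}$, as required. As a sanity check, the crude alternative $\Vert\mathbf{M}\Vert\le p^{-1}\Vert\mathcal{P}_{T^{\bot}}\mathcal{P}_{\Omega}\mathcal{P}_{T}\Vert_{\mathrm{op}}\,\Vert\mathcal{H}^{2}(\U\V^{T})\Vert_{F}$ yields only an $n^{4/3}$ threshold, so the $\log n$-th moment computation is genuinely needed to reach --- together with $r\lesssim n^{3/4}$ and the crude bound on the tail terms $\mathcal{H}^{k}$, $k\ge 3$ --- the $n^{5/4}$ sample complexity of Theorem~\ref{theorem_main4}.

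The hard part is the graph-sum bookkeeping. Because the two interior $\mathcal{P}_{T}$'s entangle the three $\Delta$-layers, the expansion of $\mathbb{E}\operatorname{tr}[(\mathbf{M}\mathbf{M}^{*})^{q}]$ splits into many topologically distinct contraction patterns, and the real work is to show that only the pairing patterns survive, to isolate the dominant one, and to verify that its value is bounded by precisely $\bigl(\mu^{4/3}r^{4/3}n\log n/m\bigr)^{3/2}$ and nothing larger --- this is what forces the exponents $4/3$ on both $\mu$ and $r$. Everything else --- the reduction to $\mathbf{M}$, the vanishing of odd-multiplicity tuples, the choice $q\asymp\beta\log n$, and the final Markov step --- is routine, and is carried out in detail in \cite{candes2012exact}.
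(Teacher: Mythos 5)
Your outline is correct and coincides with the source of this result: the paper itself gives no proof of this lemma, importing it verbatim from Cand\`es--Recht, and your reduction to the degree-three chaos $\mathcal{P}_{T^{\bot}}\Delta\mathcal{P}_{T}\Delta\mathcal{P}_{T}\Delta(\U\V^{T})$ followed by the trace-moment bound with $q\asymp\beta\log n$ is exactly the argument carried out there. Your observations that the operator $\A$ plays no role in this lemma and that a crude operator-norm bound falls short of the stated threshold are both accurate, so nothing further is needed beyond the combinatorial bookkeeping you correctly defer to the reference.
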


\begin{lemma}[Lemma 4.7 in \cite{candes2012exact}] Fix $\beta \geq 2$ and $\lambda \geq 1$. There are numerical constants $C_3$ and $c_3$ such that if $m \geq \lambda \mu^2 n r^2 \beta \log n$, then 
\begin{equation}
p^{-1}\Vert (\mathcal{P}_{T^{\bot}}\mathcal{P}_{\Omega}\mathcal{P}_{T})\mathcal{H}^3(\U\V^T)\Vert \leq C_3 \lambda^{-1/2}
\end{equation}
with probability at least $1-c_3n^{-\beta}$.
\label{lem_mc_norm_4}
\end{lemma}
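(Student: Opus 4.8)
The estimate coincides with Lemma~4.7 of \cite{candes2012exact}, so the plan is to reproduce that argument; no genuinely new ingredient is needed, because the MNN--MC dual certificate (\ref{dual_mc}) is built only from $\mathcal{P}_{\Omega}$, $\mathcal{P}_{T}$ and the matrix $\U\V^{T}$, the transformation $\A$ entering solely through the fact that $\U,\V$ are the singular vectors of $\mathcal{D}(\X_0)$ rather than of $\X_0$; the inequality to be proved does not involve $\A$ at all.

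First I would convert the prefactor into a centred sampling operator. Since $\mathcal{P}_{T^{\bot}}\mathcal{P}_{T}=0$, for every matrix $\Z$ one has $p^{-1}\mathcal{P}_{T^{\bot}}\mathcal{P}_{\Omega}\mathcal{P}_{T}(\Z)=\mathcal{P}_{T^{\bot}}\bigl(p^{-1}\mathcal{P}_{\Omega}-\mathcal{I}\bigr)\mathcal{P}_{T}(\Z)$, and hence, using $\Vert\mathcal{P}_{T^{\bot}}\Vert\le 1$, it suffices to bound $\bigl\Vert\bigl(p^{-1}\mathcal{P}_{\Omega}-\mathcal{I}\bigr)\mathbf{Z}\bigr\Vert$ with $\mathbf{Z}:=\mathcal{P}_{T}\mathcal{H}^{3}(\U\V^{T})$.

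Next I would recursively control $\mathcal{H}^{j}(\U\V^{T})$, $j=1,2,3$, in the three norms that govern a sampling deviation bound: the Frobenius norm, the entrywise $\ell_{\infty}$ norm, and the largest row/column $\ell_{2}$ norm. The incoherence of $\U,\V$ (Assumption~\ref{assumption1}), together with $\Vert\U\V^{T}\Vert_{\infty}\le\sqrt{\mu r/(n_{1}n_{2})}$, the inequality (\ref{coherence_inequality}), Lemma~\ref{Z_condition}, Lemma~\ref{Z_infty}, Lemma~\ref{op_inject_t} (and their $\ell_{2}$-mixed-norm counterpart), shows that, on an event of probability at least $1-O(n^{-\beta})$, each application of $\mathcal{H}=\mathcal{P}_{\Omega}-p^{-1}\mathcal{P}_{T}\mathcal{P}_{\Omega}\mathcal{P}_{T}$ contracts each of these three quantities by a factor of order $\sqrt{\mu n r\beta\log n/m}$. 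Combining three such contractions with the standard matrix-Bernstein deviation inequality for the random sampling of a fixed matrix --- which bounds $\Vert(p^{-1}\mathcal{P}_{\Omega}-\mathcal{I})(\cdot)\Vert$ by a combination of $\sqrt{n\beta\log n/p}$ times its largest row/column $\ell_{2}$ norm and $(\beta\log n/p)$ times its $\ell_{\infty}$ norm --- and carefully balancing the contraction gains against the $n$- and $m$-dependent prefactors under the hypothesis $m\ge\lambda\mu^{2}nr^{2}\beta\log n$, produces the stated right-hand side of order $\lambda^{-1/2}$ together with the claimed failure probability.

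The step I expect to be the main obstacle is that $\mathbf{Z}=\mathcal{P}_{T}\mathcal{H}^{3}(\U\V^{T})$ is built from the same sampling set $\Omega$ that also drives $p^{-1}\mathcal{P}_{\Omega}-\mathcal{I}$, so the deviation inequality cannot be applied to $\mathbf{Z}$ as though it were deterministic; this coupling is the whole technical content. I would dispose of it exactly as in \cite{candes2012exact}: either condition on the good event of the contraction step, on which all the norms of $\mathcal{H}^{j}(\U\V^{T})$, $j\le 3$, are controlled, and then run an $\varepsilon$-net/union-bound argument over the resulting norm-bounded family of matrices, or, equivalently, expand $\mathcal{H}^{3}(\U\V^{T})$ as a polynomial in the Bernoulli selectors $\{\delta_{ab}\}$ and estimate high moments $\mathbb{E}\Vert\cdot\Vert^{2\ell}$ of the operator norm combinatorially. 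Either route absorbs the dependence into the numerical constants $C_{3},c_{3}$, and a final union bound over the $O(1)$ good events completes the proof.
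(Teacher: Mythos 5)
The paper does not actually prove this lemma: it imports it verbatim as Lemma~4.7 of \cite{candes2012exact}, and your observation that the bound involves only $\mathcal{P}_{\Omega}$, $\mathcal{P}_{T}$ and $\U\V^{T}$ (so the transformation $\A$ plays no role beyond determining which SVD the incoherence assumption is imposed on) is precisely the justification for doing so. Your sketch of the underlying argument --- centering the sampling operator via $p^{-1}\mathcal{P}_{T^{\bot}}\mathcal{P}_{\Omega}\mathcal{P}_{T}=\mathcal{P}_{T^{\bot}}(p^{-1}\mathcal{P}_{\Omega}-\mathcal{I})\mathcal{P}_{T}$, recursively controlling $\mathcal{H}^{j}(\U\V^{T})$ in the Frobenius, entrywise $\ell_{\infty}$ and row/column $\ell_{2}$ norms, and disposing of the dependence on $\Omega$ by the moment/decoupling device of \cite{candes2012exact} --- is a faithful outline of the original proof, so the proposal is correct and consistent with the paper's (implicit) approach.
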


\begin{lemma}[Lemma 4.8 in \cite{candes2012exact}] Under the assumptions of Lemma \ref{op_inject_t}, there is a numerical constant $C_{k_0}$ such that if $m \geq (2C_R)^2\mu n r \beta \log n$, then
\begin{equation}
\begin{split}
p^{-1} & \left\Vert (\mathcal{P}_{T^{\bot}}\mathcal{P}_{\Omega})\mathcal{P}_{T}\sum_{k\geq k_0}\mathcal{H}^k(\U\V^T)\right\Vert \leq C_{k_0} (\frac{n^2r}{m})^{1/2} \\
& \times (\frac{mu n r \beta \log n}{m})^{k_0/2}
\end{split}
\end{equation}
with probability at least $1-3n^{-\beta}$.
\label{lem_mc_norm_5}
\end{lemma}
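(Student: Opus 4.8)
The plan is to read Lemma~\ref{lem_mc_norm_5} as a ``tail of a Neumann series'' estimate of precisely the kind that appears in ordinary matrix completion, and to establish it from three ingredients already in hand: the contraction estimate of Lemma~\ref{op_inject_t}, the elementary bound of the spectral norm by the Frobenius norm, and the corollary proved in the injectivity step, namely $\Vert\mathcal{P}_\Omega\mathcal{P}_T(\X)\Vert_F\le\sqrt{3p/2}\,\Vert\mathcal{P}_T(\X)\Vert_F$. Throughout I treat $\mathcal{H}$ as the self-map of $T$ appearing in Lemmas~\ref{lem_mc_norm_2}--\ref{lem_mc_norm_4}, so that every $\mathcal{H}^k(\U\V^T)$ lies in $T$, and I write $\epsilon:=C_R\sqrt{\mu n r\beta\log n/m}$. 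The hypothesis $m\ge(2C_R)^2\mu n r\beta\log n$ is exactly what forces $\epsilon\le 1/2$, and Lemma~\ref{op_inject_t} then says that, on a single event of probability at least $1-3n^{-\beta}$, one has $\Vert\mathcal{H}\Vert_{T\to T}\le\epsilon$; all of the estimates below are made on that same event, so no further union bound is needed.

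The first step is to bound the Frobenius norm of the tail. Since $\mathcal{H}$ is a contraction on $T$, the operator $\mathcal{I}-\mathcal{H}$ is invertible there and $\sum_{k\ge k_0}\mathcal{H}^k=\mathcal{H}^{k_0}(\mathcal{I}-\mathcal{H})^{-1}$ on $T$; a geometric-series bound with ratio $\epsilon\le 1/2$ gives
\[
\Big\Vert\sum_{k\ge k_0}\mathcal{H}^k(\U\V^T)\Big\Vert_F\le\frac{\epsilon^{k_0}}{1-\epsilon}\,\Vert\U\V^T\Vert_F\le 2\sqrt{r}\,\epsilon^{k_0},
\]
using $\Vert\U\V^T\Vert_F=\sqrt{r}$. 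Write $\mathbf{G}:=\sum_{k\ge k_0}\mathcal{H}^k(\U\V^T)\in T$. The second step peels off the projections: because $\mathbf{G}\in T$ we have $\mathcal{P}_T\mathbf{G}=\mathbf{G}$; then, since $\mathcal{P}_{T^\perp}$ is a spectral-norm contraction, since every spectral norm is dominated by the corresponding Frobenius norm, and invoking the injectivity corollary with $\X=\mathbf{G}$,
\[
\big\Vert(\mathcal{P}_{T^\perp}\mathcal{P}_\Omega)\mathcal{P}_T\mathbf{G}\big\Vert\le\Vert\mathcal{P}_\Omega\mathbf{G}\Vert\le\Vert\mathcal{P}_\Omega\mathbf{G}\Vert_F\le\sqrt{3p/2}\,\Vert\mathbf{G}\Vert_F.
\]

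Combining the two displays, $p^{-1}\Vert(\mathcal{P}_{T^\perp}\mathcal{P}_\Omega)\mathcal{P}_T\mathbf{G}\Vert\le 2\sqrt{3/(2p)}\,\sqrt{r}\,\epsilon^{k_0}$. Substituting $p=m/(n_1n_2)$ and bounding $n_1n_2\le n^2$ turns $\sqrt{r/p}$ into at most $(n^2r/m)^{1/2}$, while $\epsilon^{k_0}=C_R^{k_0}(\mu n r\beta\log n/m)^{k_0/2}$; absorbing $2\sqrt{3/2}\,C_R^{k_0}$ into a constant $C_{k_0}$ yields the claimed inequality. The one genuinely delicate point is the step $\Vert\mathcal{P}_{T^\perp}\mathcal{P}_\Omega\mathbf{G}\Vert\le\Vert\mathbf{G}\Vert_F$, which is lossy by a factor as large as $\sqrt{n}$; that loss is exactly what manufactures the prefactor $(n^2r/m)^{1/2}$, and it is harmless only because the geometric factor $\epsilon^{k_0}$ can be made to beat it once $k_0$ is taken of order $\log n$ --- which is why this lemma is deployed, together with Lemmas~\ref{lem_mc_norm_1}--\ref{lem_mc_norm_4} for the first few terms, when bounding $\Vert\mathcal{P}_{T^\perp}(\Y)\Vert$ for the $\Y$ of Eq.~(\ref{dual_mc}). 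The modified nuclear norm does not enter the argument at all, since that $\Y$ is constructed with no reference to the matrix $\A$; the proof is the verbatim analogue of Lemma~4.8 in \cite{candes2012exact}.
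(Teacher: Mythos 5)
Your proof is correct and is essentially the standard argument for Lemma 4.8 of \cite{candes2012exact}: a geometric-series bound $\Vert\sum_{k\ge k_0}\mathcal{H}^k(\U\V^T)\Vert_F\le\frac{\epsilon^{k_0}}{1-\epsilon}\sqrt{r}$ combined with $\Vert\mathcal{P}_{T^\perp}\mathcal{P}_\Omega\mathcal{P}_T(\cdot)\Vert\le\Vert\mathcal{P}_\Omega\mathcal{P}_T(\cdot)\Vert_F\le\sqrt{3p/2}\,\Vert\mathcal{P}_T(\cdot)\Vert_F$, all on the single event where Lemma \ref{op_inject_t} holds. The paper itself gives no proof of this lemma (it imports it verbatim from the cited reference), and your observation that the transformation $\A$ plays no role here is consistent with how the paper deploys it.
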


Next, we give the proof of $\Vert \mathcal{P}_{T^{\bot}}(\Y) \Vert < 1$.
\begin{proof}
Introducing
\begin{equation}
\mathcal{H} \equiv \mathcal{P}_{T} - p^{-1} \mathcal{P}_{T}\mathcal{P}_{\Omega} \mathcal{P}_{T},
\end{equation}
where $\Vert \mathcal{H}(\X)\Vert_F\leq C_R\sqrt{\mu_0(nr/m)\beta\log n} \Vert \mathcal{P}_{T}(\X)\Vert_F$ holds with large probability because of Lemma \ref{op_inject_t}. For any matrix $\X \in T$, $(\mathcal{P}_{T}\mathcal{P}_{\Omega}\mathcal{P}_{T})^{-1}(\X)$ can be expressed in terms of power series
\begin{equation}
(\mathcal{P}_{T}\mathcal{P}_{\Omega} \mathcal{P}_{T})^{-1}(\X) = p^{-1}(\X +\mathcal{H}(\X)+\mathcal{H}^2(\X)+\cdots)
\end{equation}
for $\mathcal{H}$ is a contraction when $m$ is sufficiently large. Since $\Y=\mathcal{P}_{\Omega}\mathcal{P}_{T}( \mathcal{P}_{T}\mathcal{P}_{\Omega} \mathcal{P}_{T})^{-1}\times (\U\V^T)$, $ \mathcal{P}_{T^{\bot}}(\Y)$ may be decomposed as
\begin{equation}
\begin{split}
\mathcal{P}_{T^{\bot}}(\Y) & = p^{-1} (\mathcal{P}_{T^{\bot}}\mathcal{P}_{\Omega}\mathcal{P}_{T})(\U\V^T+\mathcal{H}(\U\V^T) \\
& +\mathcal{H}^2(\U\V^T)+\cdots).
\end{split}
\end{equation}
Therefore, to bound the norm of the left-hand side, it is of course sufficient to bound the norm of the summands in the right-hand side.

Under all of the assumptions of Theorem 2 in the main text, with the guidance of Lemmas \ref{lem_mc_norm_1}, \ref{lem_mc_norm_2}, \ref{lem_mc_norm_3}, and \ref{lem_mc_norm_5}, the latter applied with $k_0=3$. Together they imply that there are numerical constants $c$ and $c_0$ such that $\Vert \mathcal{P}_{T^{\bot}}(\Y)\Vert < 1$ with probability at least $1-cn^{-\beta}$ provided that the number of samples obeys 
\begin{equation}
m\geq c_0\max \left( \mu^2, \mu^{4/3}r, \mu n^{1/4} \right) nr\beta \log(n).
\end{equation}
This completes the proof.
\end{proof}

\section{Proof of the Optimal Solution to the Objective Function}
\label{opt_proof}

Based on Theorems 1 and 2 in the main text, we deduce the following corollary: 

\begin{corollary}
\label{coro_mnn}
Suppose $\X_0$ and $\S_0$ satisfy Assumptions \ref{assumption1} and \ref{assumption2}, and transformation operator ${\color{red}\mathcal{D}}(\cdot)$ satisfy Assumption \ref{assumption3}. Denote the objective functions of RPCA and MC models as 
\begin{equation}
\begin{split}
\mathcal{J}_1^{\mathcal{D}}(\X) := & \Vert \mathcal{D}(\X) \Vert_* +\lambda \Vert \M-\X \Vert_1, \\
\mathcal{J}_2^{\mathcal{D}}(\X) := & \Vert \mathcal{D}(\X) \Vert_* +\mu \Vert \mathcal{P}_{\Omega}(\M-\X) \Vert_F^2,\\
\end{split}
\end{equation}
respectively, where $\lambda =1/\sqrt{n_1}$ and $\mu=(\sqrt{n_1}+\sqrt{n_2})\sqrt{p}\sigma$ according to \cite{candes2010matrix}, and $n_1, n_2, \sigma ,p$ are the sizes of matrix, noise standard variance, and missing ratio. Then, for any feasible solution $\X$, we have:
\begin{equation}
\label{obj}
\mathcal{J}_1^{\mathcal{D}}(\X) \geq \mathcal{J}_1^{\mathcal{D}}(\X_0), \mathcal{J}_2^{\mathcal{D}}(\X) \geq \mathcal{J}_2^{\mathcal{D}}(\X_0).
\end{equation}
\end{corollary}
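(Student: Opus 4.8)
The plan is to read both inequalities off Theorems \ref{theorem_main3} and \ref{theorem_main4} by matching the penalized objectives to the corresponding constrained models. \textbf{RPCA part:} for any $\X$ put $\S:=\M-\X$, so that $\M=\X+\S$ and $(\X,\S)$ is a feasible point of model (\ref{mnn_rpca}) with $\lambda=1/\sqrt{n_1}=1/\sqrt{\max\{n_1,n_2\}}$ (recall $n_1\ge n_2$). Theorem \ref{theorem_main3} states that, under Assumptions \ref{assumption1}--\ref{assumption3} and with the stated probability, $(\X_0,\S_0)$ is the unique minimizer of (\ref{mnn_rpca}); hence $\|\mathcal{D}(\X)\|_*+\lambda\|\S\|_1\ge\|\mathcal{D}(\X_0)\|_*+\lambda\|\S_0\|_1$. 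Since $\S_0=\M-\X_0$, this is exactly $\mathcal{J}_1^{\mathcal{D}}(\X)\ge\mathcal{J}_1^{\mathcal{D}}(\X_0)$, with equality only if $\X=\X_0$.

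\textbf{MC part:} since $\X_0$ is feasible for (\ref{mnn_mc}) in the noiseless regime, $\mathcal{P}_{\Omega}(\M)=\mathcal{P}_{\Omega}(\X_0)$, so $\mathcal{J}_2^{\mathcal{D}}(\X_0)=\|\mathcal{D}(\X_0)\|_*$. For any feasible $\X$, i.e. one with $\mathcal{P}_{\Omega}(\X)=\mathcal{P}_{\Omega}(\M)$, the penalty $\mu\|\mathcal{P}_{\Omega}(\M-\X)\|_F^2$ is zero, whence $\mathcal{J}_2^{\mathcal{D}}(\X)=\|\mathcal{D}(\X)\|_*\ge\|\mathcal{D}(\X_0)\|_*=\mathcal{J}_2^{\mathcal{D}}(\X_0)$ directly by Theorem \ref{theorem_main4}. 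To extend this to an \emph{arbitrary} $\X$ (the version implicitly used when running gradient descent on $\mathcal{J}_2^{\mathcal{D}}$), I would project to feasibility: set $\mathbf{E}:=\mathcal{P}_{\Omega}(\M-\X)$ and $\widetilde{\X}:=\X+\mathbf{E}$, so $\mathcal{P}_{\Omega}(\widetilde{\X})=\mathcal{P}_{\Omega}(\M)$ and Theorem \ref{theorem_main4} gives $\|\mathcal{D}(\widetilde{\X})\|_*\ge\|\mathcal{D}(\X_0)\|_*$; then the triangle inequality for $\|\cdot\|_*$ together with $\|\mathcal{D}(\mathbf{E})\|_*\le\sqrt{n_{(2)}}\,\|\mathcal{D}(\mathbf{E})\|_F\le\sqrt{n_{(2)}}\,\|\mathbf{E}\|_F$ (the last step by Assumption \ref{assumption3}) yields $\mathcal{J}_2^{\mathcal{D}}(\X)\ge\|\mathcal{D}(\X_0)\|_*-\sqrt{n_{(2)}}\,\|\mathbf{E}\|_F+\mu\|\mathbf{E}\|_F^2$, and one is left to show $\mu\|\mathbf{E}\|_F^2\ge\sqrt{n_{(2)}}\,\|\mathbf{E}\|_F$.

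\textbf{Expected obstacle:} this last absorption is the only delicate point, since $\mu\|\mathbf{E}\|_F^2$ beats $\sqrt{n_{(2)}}\,\|\mathbf{E}\|_F$ only once $\|\mathbf{E}\|_F$ is at least of order $\sqrt{n_{(2)}}/\mu$; this is precisely why the corollary is phrased for feasible $\X$, for which $\mathbf{E}=0$ and the inequality is immediate from Theorem \ref{theorem_main4}. For the genuinely noisy model one instead compares against $\mathcal{J}_2^{\mathcal{D}}(\X_0)=\|\mathcal{D}(\X_0)\|_*+\mu\|\mathcal{P}_{\Omega}(\M-\X_0)\|_F^2$ and runs the same project-and-triangle-inequality argument with the calibration $\mu=(\sqrt{n_1}+\sqrt{n_2})\sqrt{p}\,\sigma$ from \cite{candes2010matrix}; everything else (matching the penalized and constrained forms, invoking the already-proven theorems, and the elementary bounds $\|\cdot\|_*\le\sqrt{\mbox{rank}}\,\|\cdot\|_F$ and $\|\mathcal{D}(\cdot)\|_F\le\|\cdot\|_F$) is routine.
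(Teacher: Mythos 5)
Your proposal is correct and follows essentially the same route as the paper: both reduce the corollary to Theorems \ref{theorem_main3} and \ref{theorem_main4} by identifying the penalized objectives with the constrained models (substituting $\S=\M-\X$ for RPCA, and noting the quadratic penalty vanishes on the feasible set for MC). You are in fact more careful than the paper's one-line argument, since you correctly flag that the MC inequality for \emph{infeasible} $\X$ does not follow from Theorem \ref{theorem_main4} alone and that the absorption step $\mu\|\mathbf{E}\|_F^2\ge\sqrt{n_{(2)}}\|\mathbf{E}\|_F$ fails for small $\|\mathbf{E}\|_F$ — a gap the paper silently sidesteps by restricting to feasible $\X$ in the appendix statement while writing ``for any $\X$'' in the main text.
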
 

Next, we give the proof of this corollary.

\begin{proof}
Since Theorem \ref{theorem_main3} holds under Assumptions \ref{assumption1}-\ref{assumption3}, then we can conclude that the point $(\X_0,\S_0)$ minimizes the objective function value of the MNN-RPCA model (\ref{mnn_rpca}), i.e., $\Vert \mathcal{D}(\X_0)\Vert_*+\lambda \Vert \S_0 \Vert_* = \Vert \mathcal{D}(\X_0)\Vert_*+\lambda \Vert \M - \X_0 \Vert_1$ is the minimum value among all feasible solutions. Therefore, we can obtain:
\begin{equation}
\mathcal{J}_1^{\mathcal{D}}(\X) \geq \mathcal{J}_1^{\mathcal{D}}(\X_0).
\end{equation}
Similarly, for the MNN-MC model (\ref{mnn_mc}), we have:
\begin{equation}
\mathcal{J}_2^{\mathcal{D}}(\X) \geq \mathcal{J}_2^{\mathcal{D}}(\X_0).
\end{equation} 
This completes the proof.
\end{proof}

\section{More Results}

In the section, we only provide the mean values of all methods on four categories of data. From the table in the main text, we have seen that the MNN-based models achieve the best restoration performance in the vast majority of cases. Here, taking PSNR as an example, we further present the performance of all methods on each dataset.

\begin{figure*}[!h]
\renewcommand{\arraystretch}{0.5}
\setlength\tabcolsep{0.5pt}
\centering
\begin{tabular}{c c c c}
\centering
\includegraphics[width=38mm, height = 30mm]{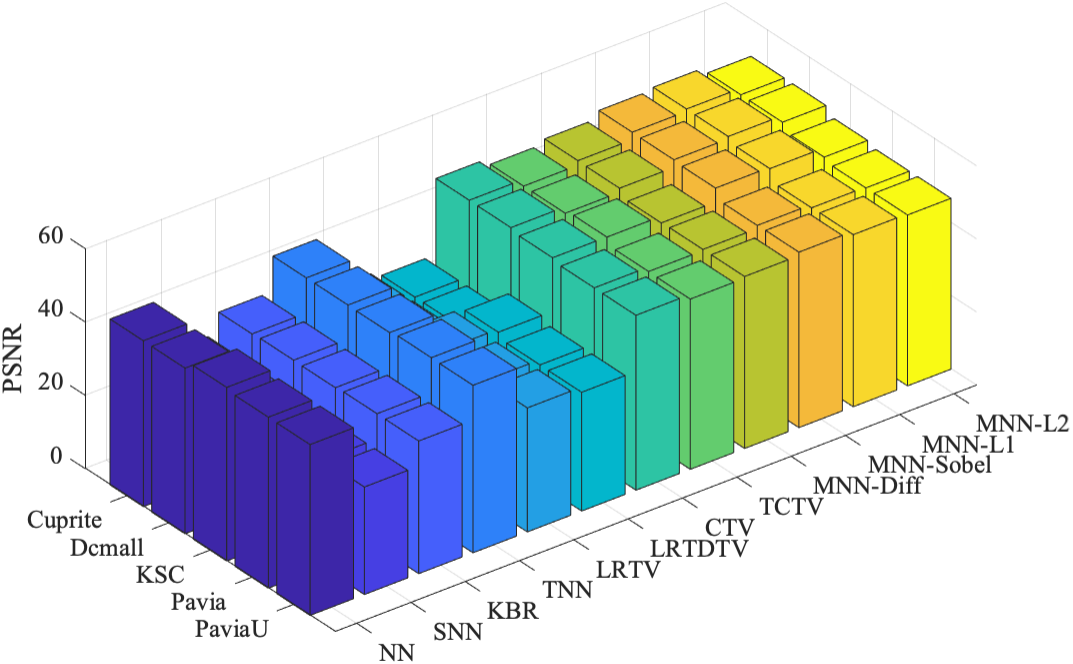}&
\includegraphics[width=38mm, height = 30mm]{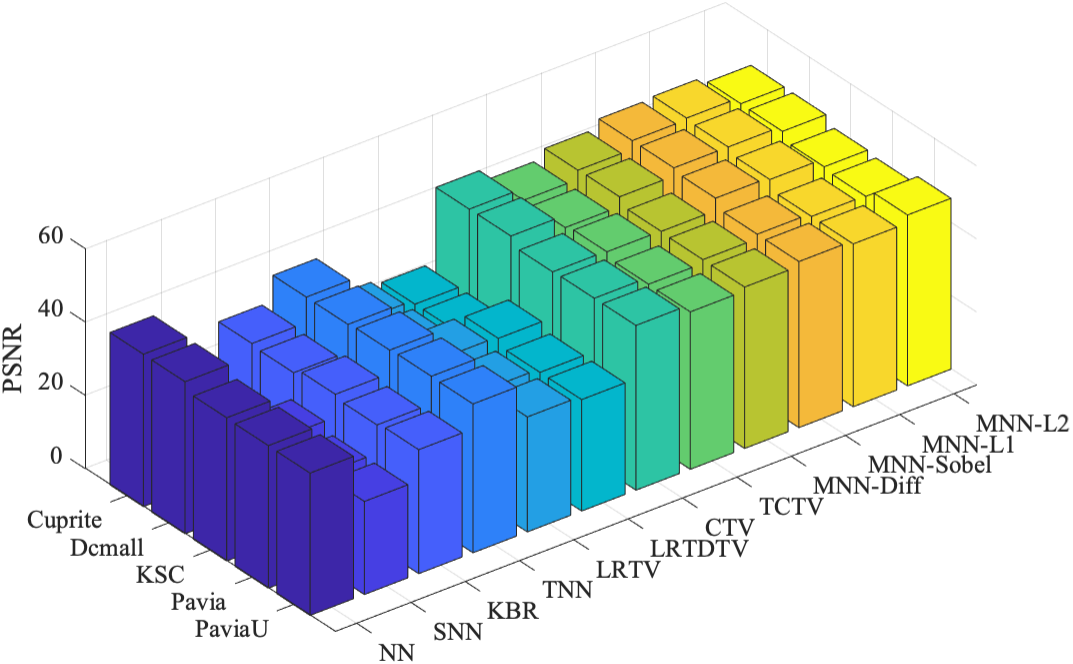}&
\includegraphics[width=38mm, height = 30mm]{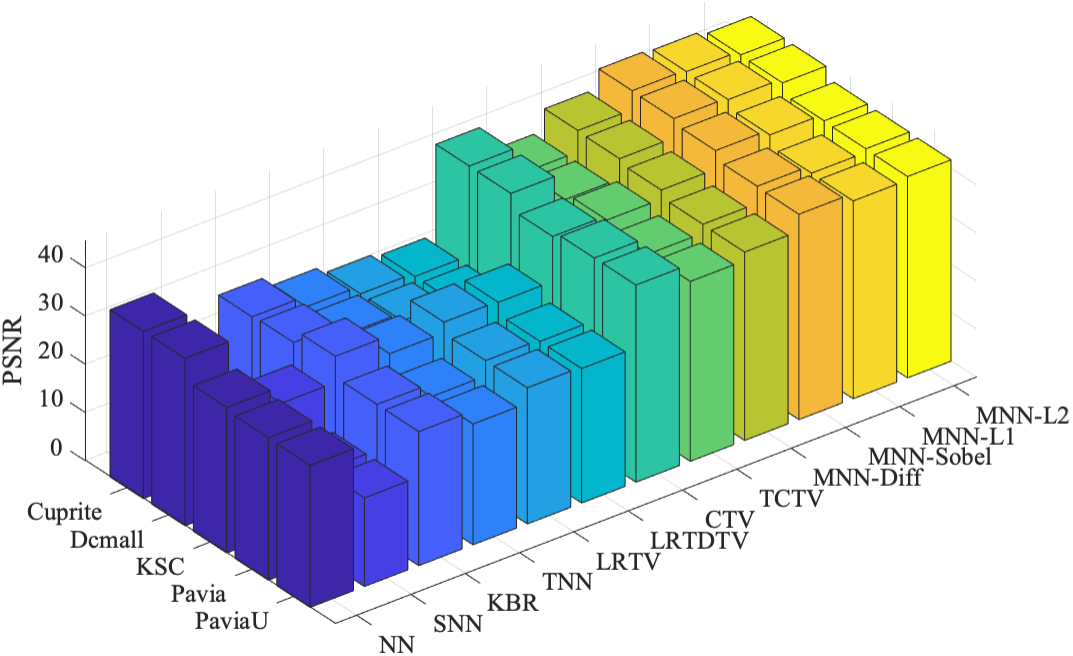}&
\includegraphics[width=38mm, height = 30mm]{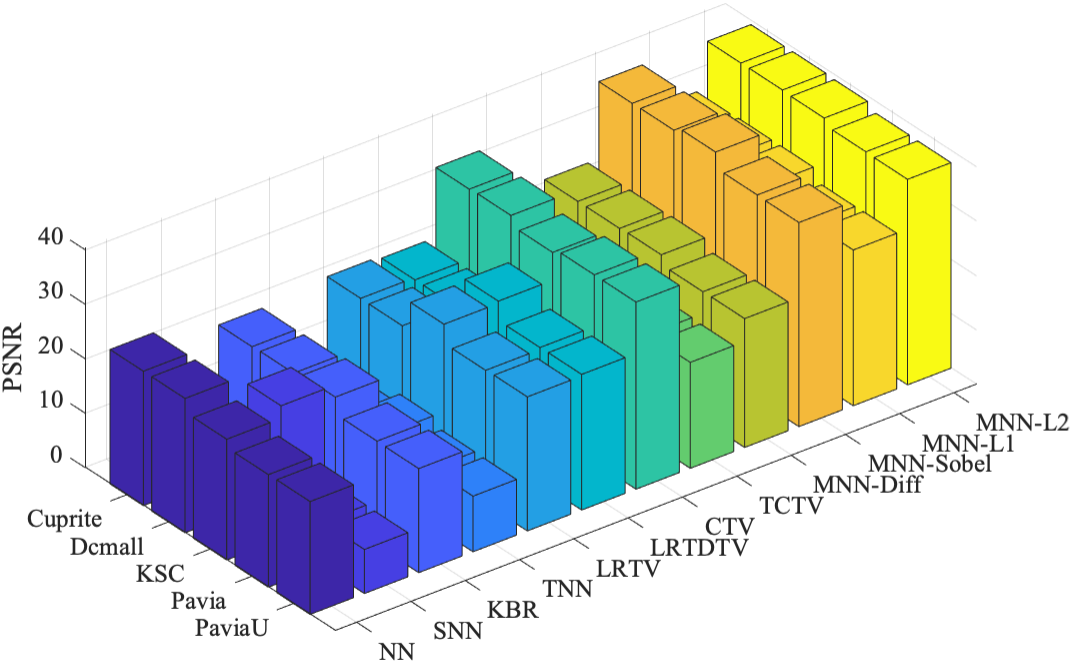}\\
\scriptsize{RPCA: s=0.1} & \scriptsize{RPCA: s=0.3} & \scriptsize{RPCA: s=0.5} & \scriptsize{RPCA: s=0.7} \\
\end{tabular}
\vspace{-2mm}
\caption{Performance comparison in terms of PSNR of recovered hyperspectral images obtained by all competing methods under RPCA tasks.}\label{rpca_hsi}
\vspace{-0.2cm}
\end{figure*}

\begin{figure*}[!h]
\renewcommand{\arraystretch}{0.5}
\setlength\tabcolsep{0.5pt}
\centering
\begin{tabular}{c c c c}
\centering
\includegraphics[width=38mm, height = 30mm]{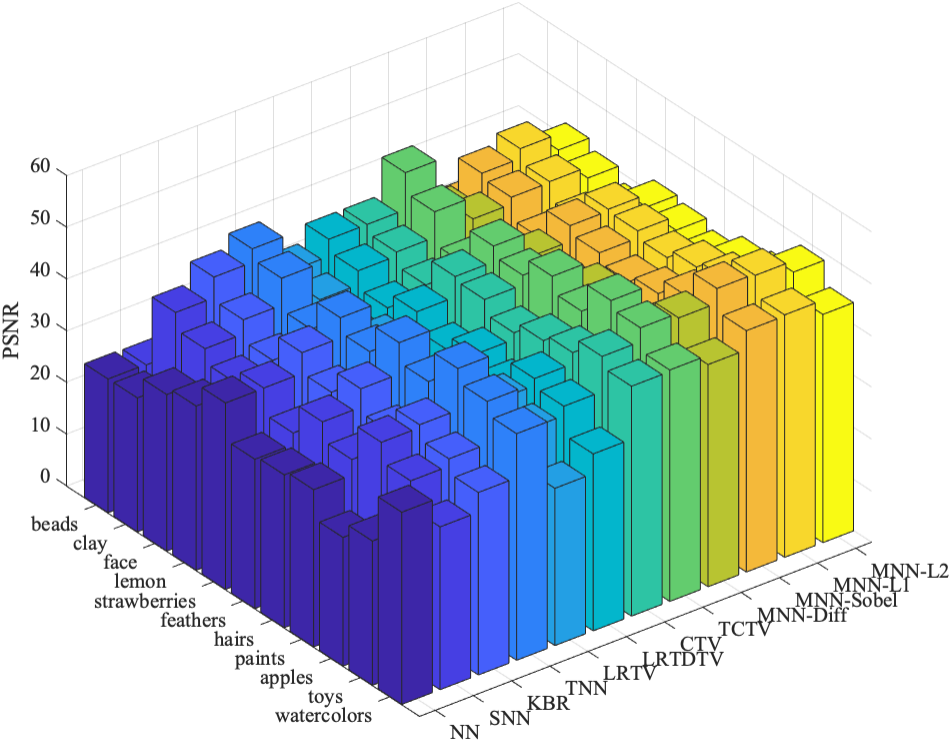}&
\includegraphics[width=38mm, height = 30mm]{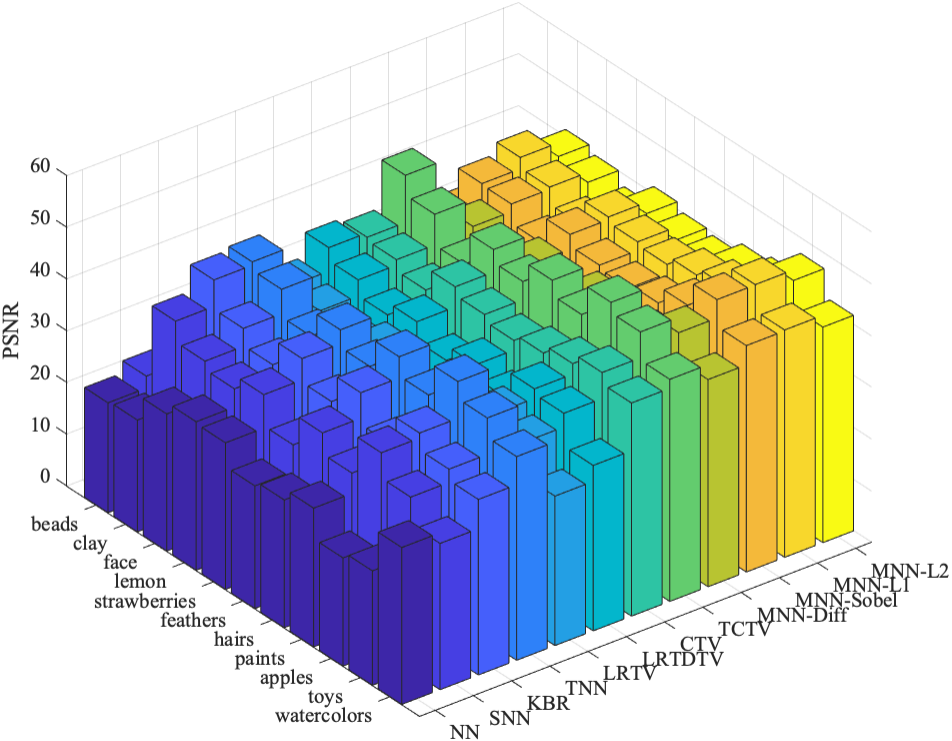}&
\includegraphics[width=38mm, height = 30mm]{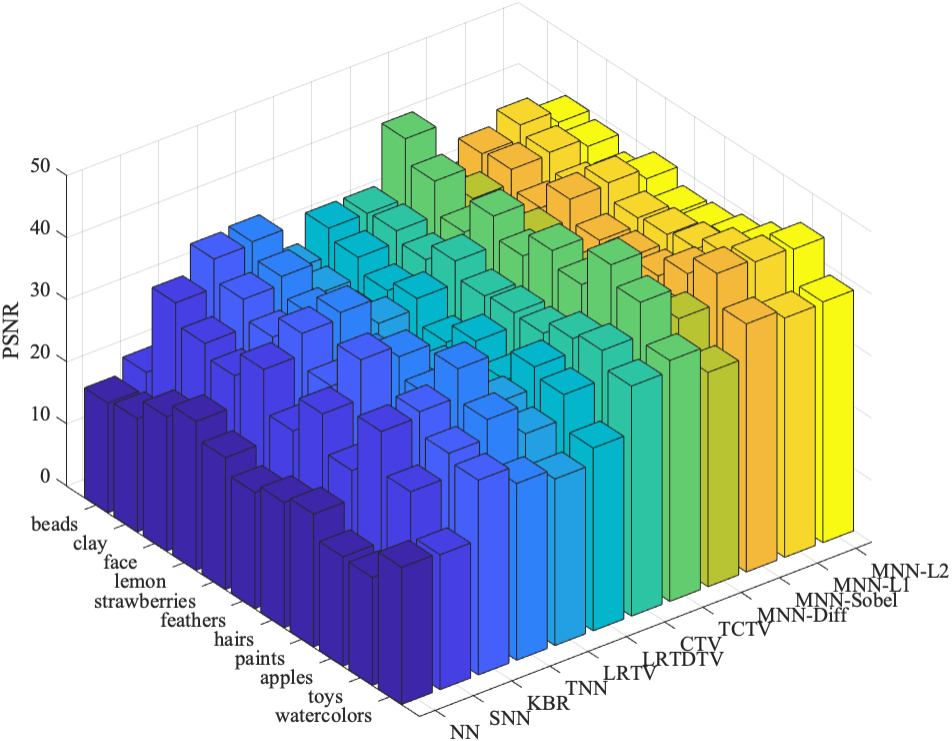}&
\includegraphics[width=38mm, height = 30mm]{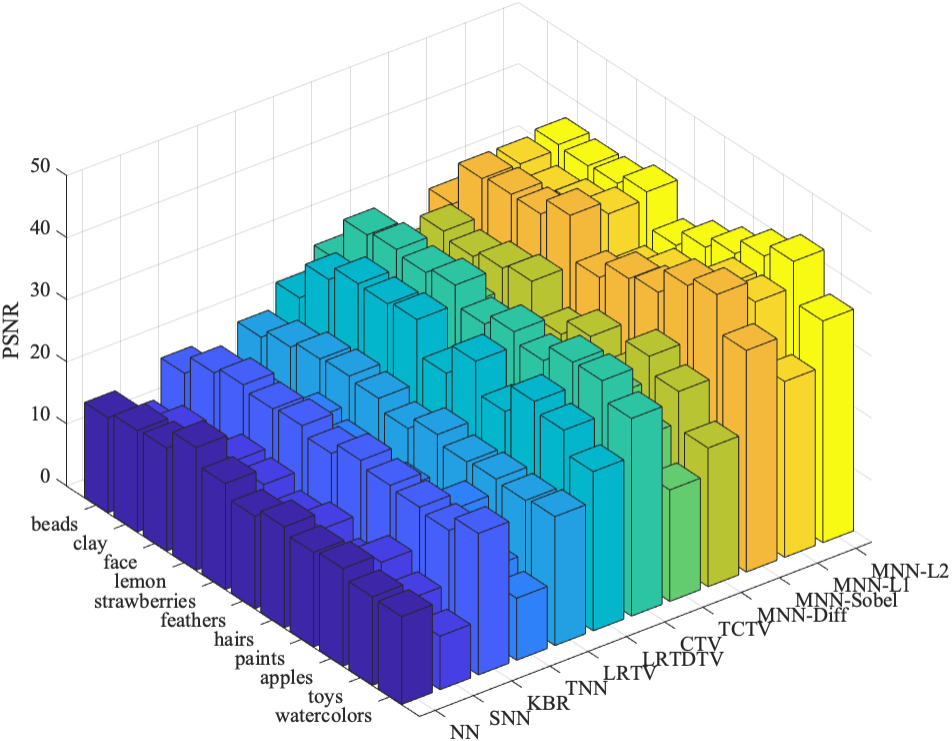}\\
\scriptsize{RPCA: s=0.1} & \scriptsize{RPCA: s=0.3} & \scriptsize{RPCA: s=0.5} & \scriptsize{RPCA: s=0.7} \\
\end{tabular}
\vspace{-2mm}
\caption{Performance comparison in terms of PSNR of recovered multispectral images obtained by all competing method under RPCA tasks.}\label{rpca_msi}
\vspace{-0.2cm}
\end{figure*}

\begin{figure*}[!h]
\renewcommand{\arraystretch}{0.5}
\setlength\tabcolsep{0.5pt}
\centering
\begin{tabular}{c c c c}
\centering
\includegraphics[width=38mm, height = 30mm]{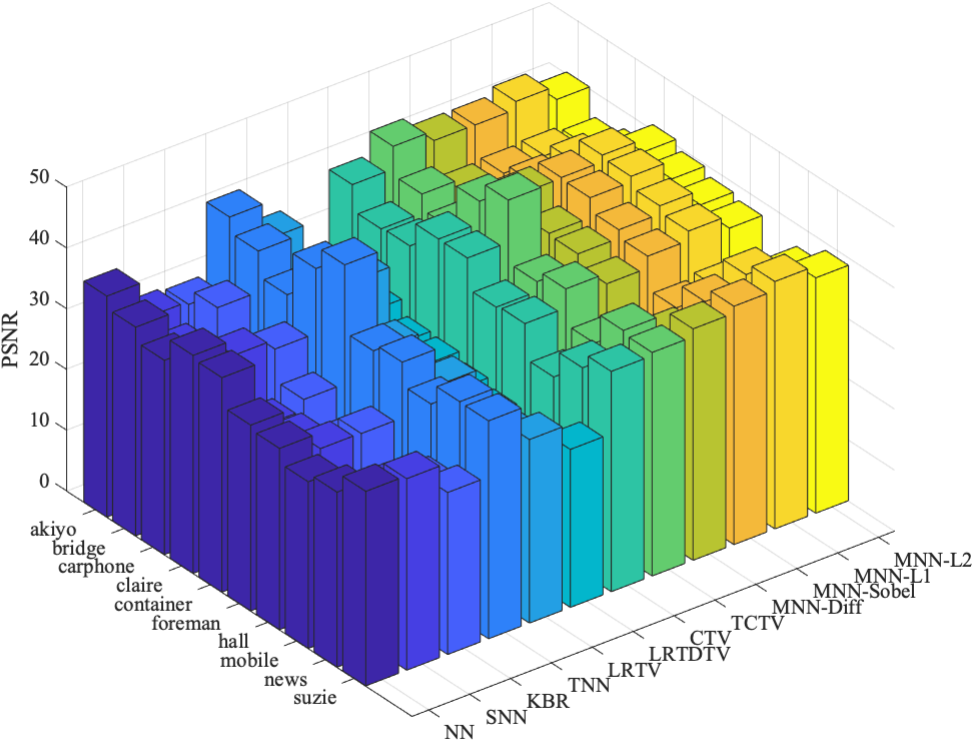}&
\includegraphics[width=38mm, height = 30mm]{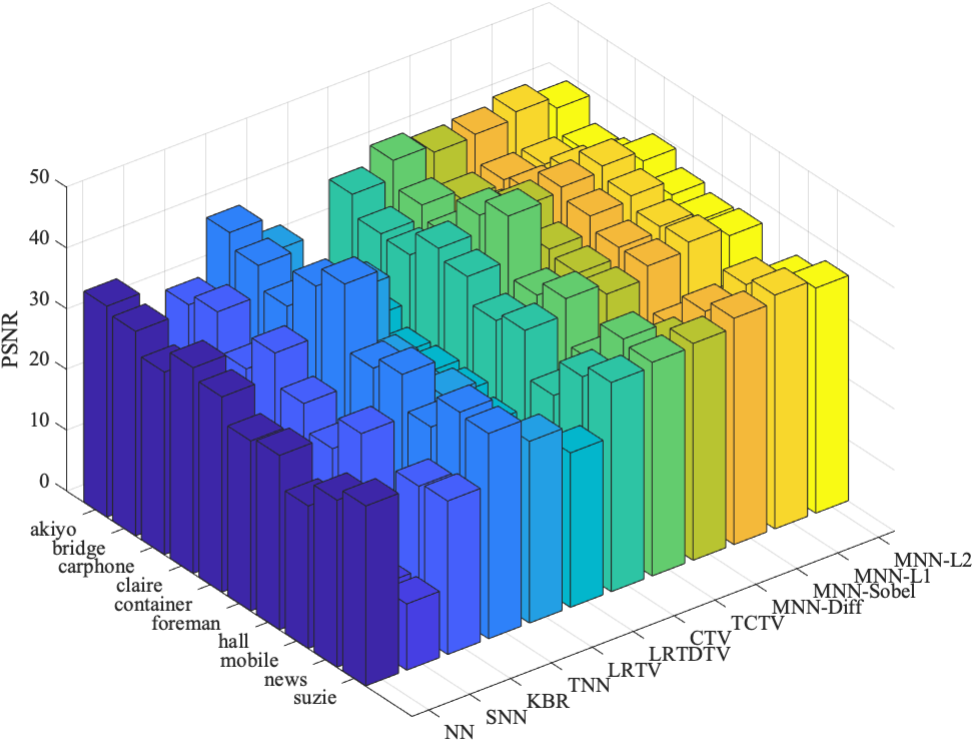}&
\includegraphics[width=38mm, height = 30mm]{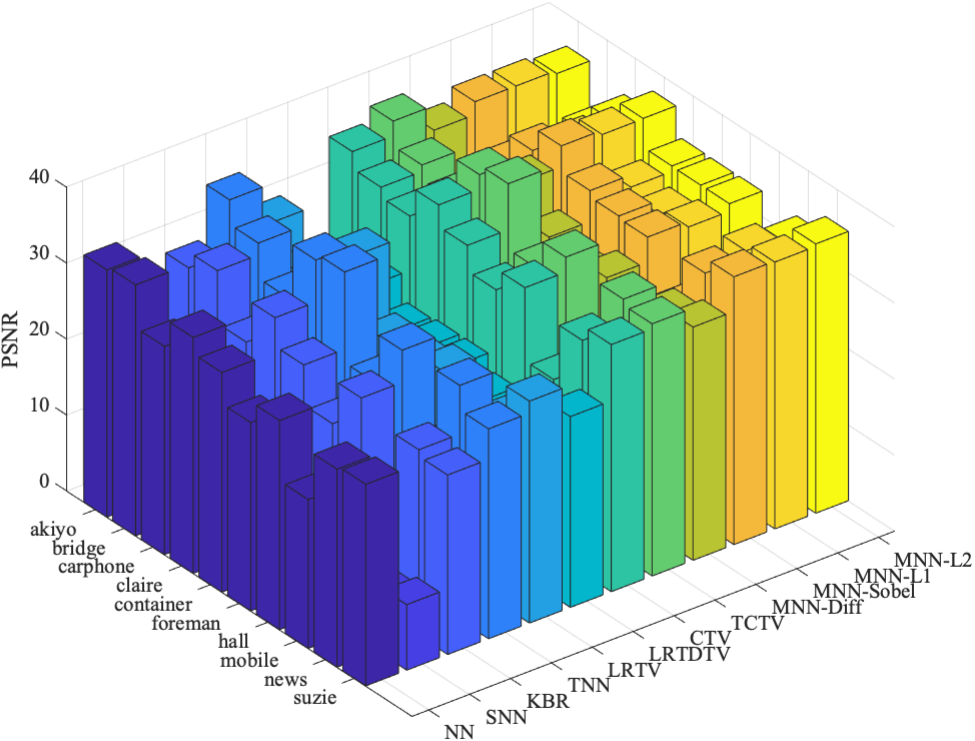}&
\includegraphics[width=38mm, height = 30mm]{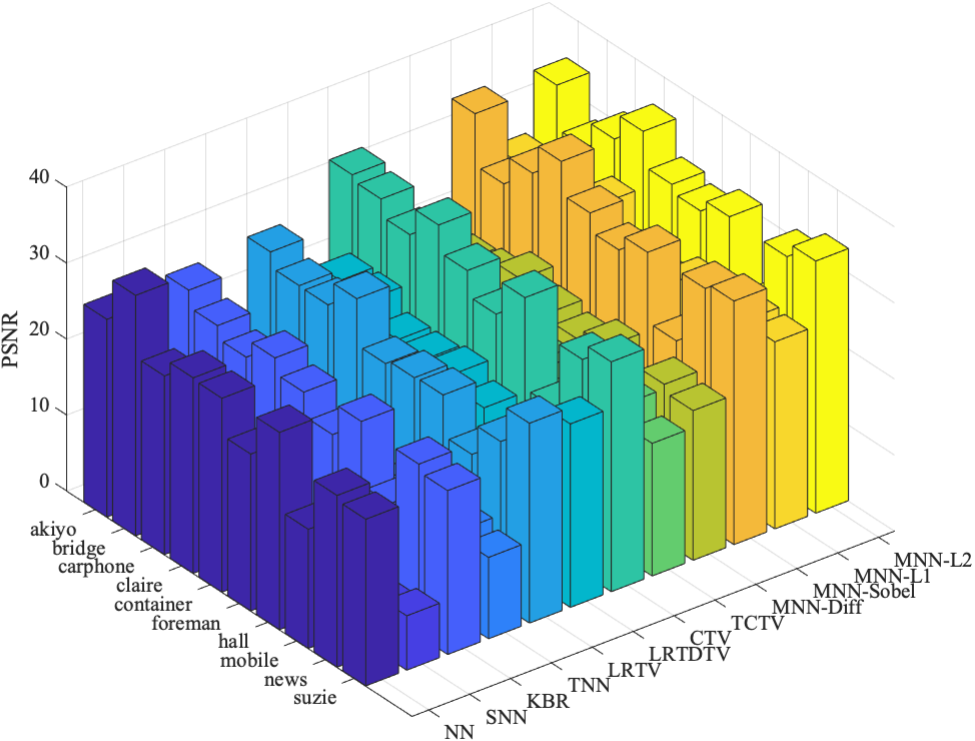}\\
\scriptsize{RPCA: s=0.1} & \scriptsize{RPCA: s=0.3} & \scriptsize{RPCA: s=0.5} & \scriptsize{RPCA: s=0.7} \\
\end{tabular}
\vspace{-2mm}
\caption{Performance comparison in terms of PSNR of recovered color videos obtained by all competing method under RPCA tasks.}\label{rpca_rgb}
\vspace{-0.2cm}
\end{figure*} 

\begin{figure*}[!h]
\renewcommand{\arraystretch}{0.5}
\setlength\tabcolsep{0.5pt}
\centering
\begin{tabular}{c c c c}
\centering
\includegraphics[width=38mm, height = 30mm]{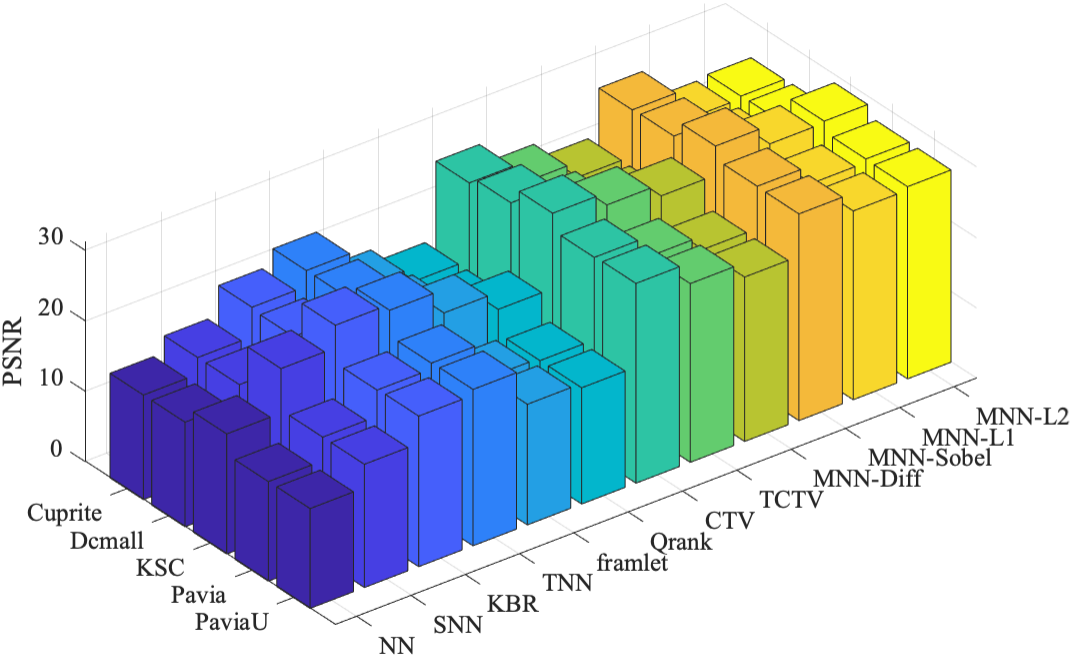}&
\includegraphics[width=38mm, height = 30mm]{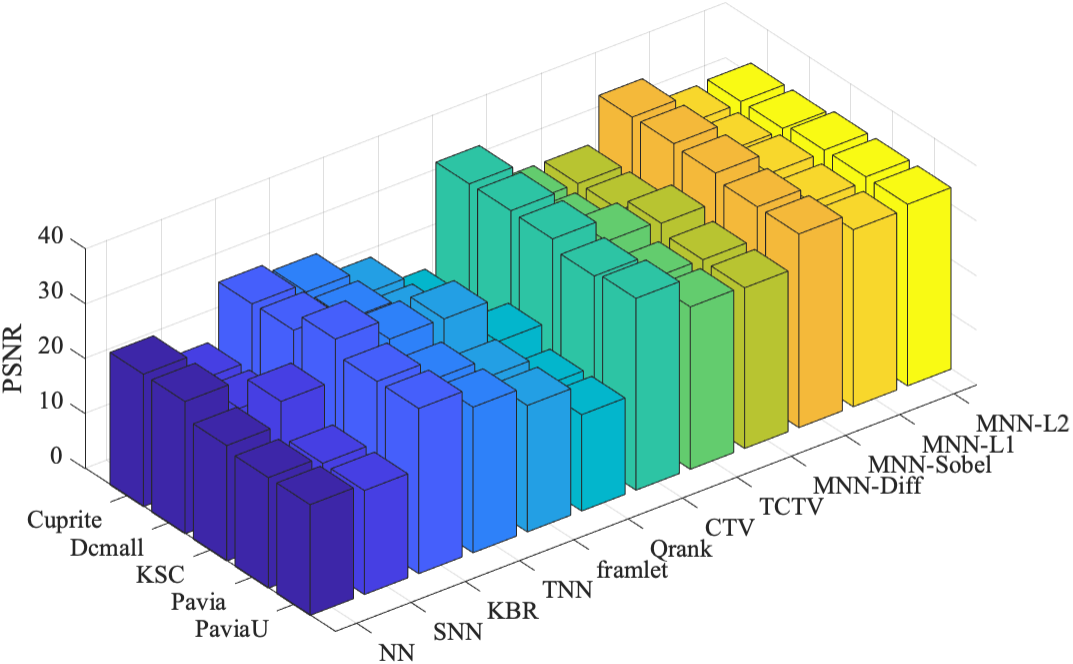}&
\includegraphics[width=38mm, height = 30mm]{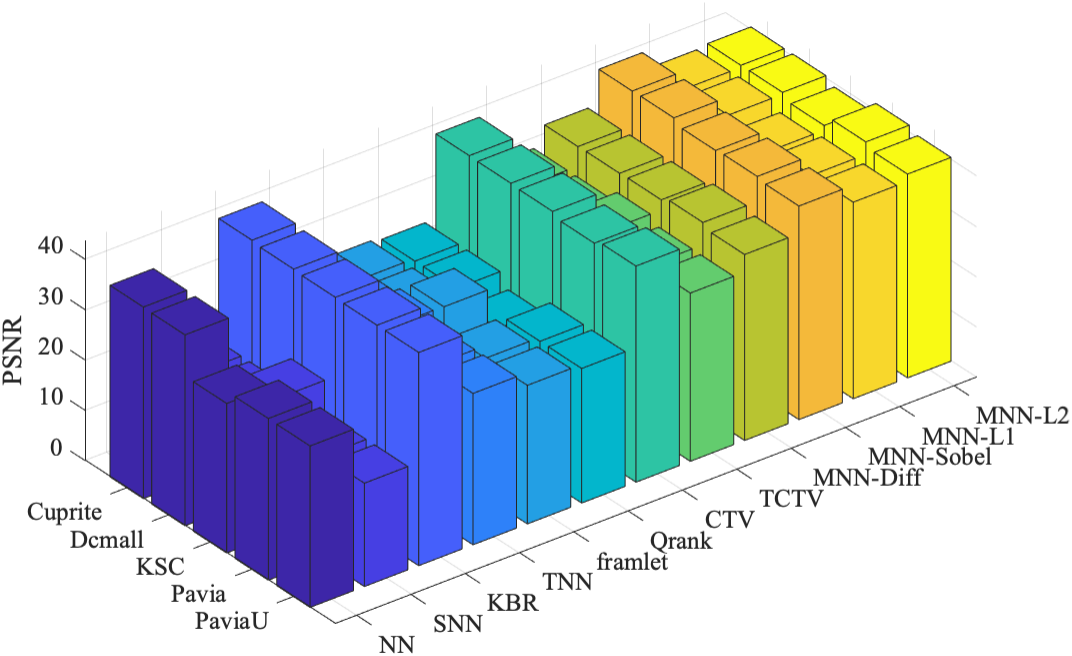}&
\includegraphics[width=38mm, height = 30mm]{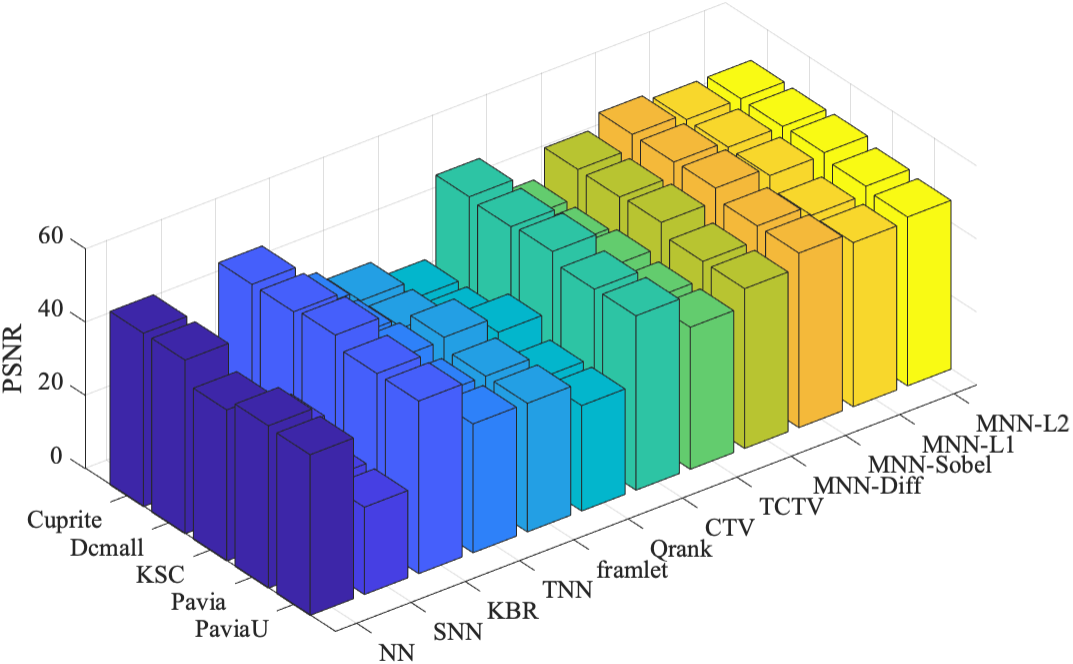}\\
\scriptsize{MC: sr=0.02} & \scriptsize{MC: sr=0.05} & \scriptsize{MC: sr=0.1} & \scriptsize{MC: sr=0.2} \\
\end{tabular}
\vspace{-2mm}
\caption{Performance comparison in terms of PSNR of recovered hyperspectral images obtained by all competing methods under MC tasks.}\label{mc_hsi}
\vspace{-0.2cm}
\end{figure*}

\begin{figure*}[!h]
\renewcommand{\arraystretch}{0.5}
\setlength\tabcolsep{0.5pt}
\centering
\begin{tabular}{c c c c}
\centering
\includegraphics[width=38mm, height = 30mm]{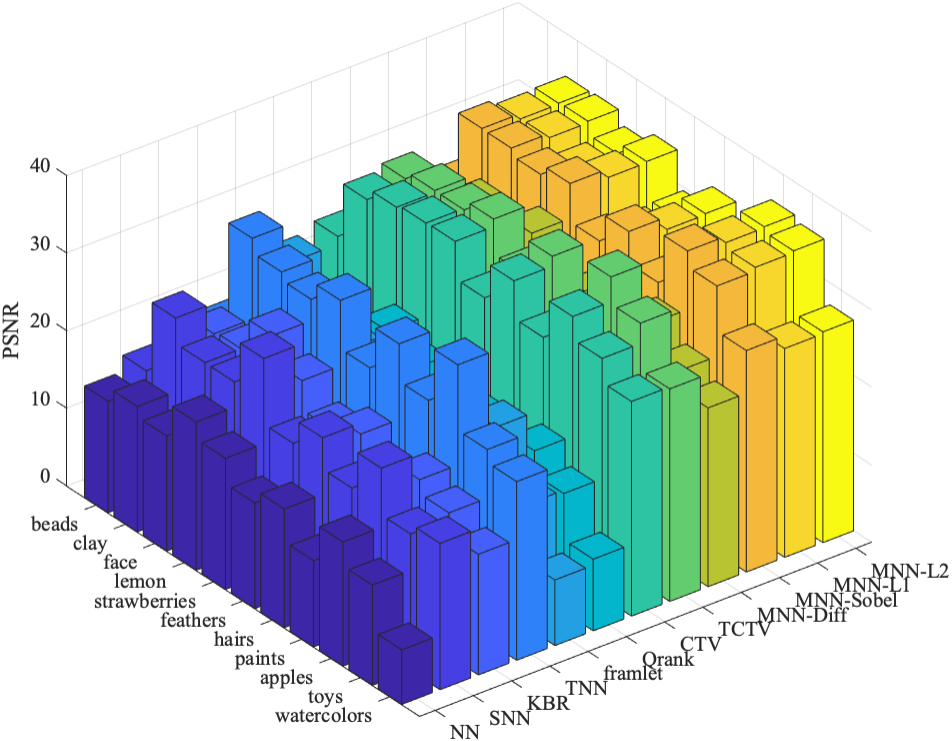}&
\includegraphics[width=38mm, height = 30mm]{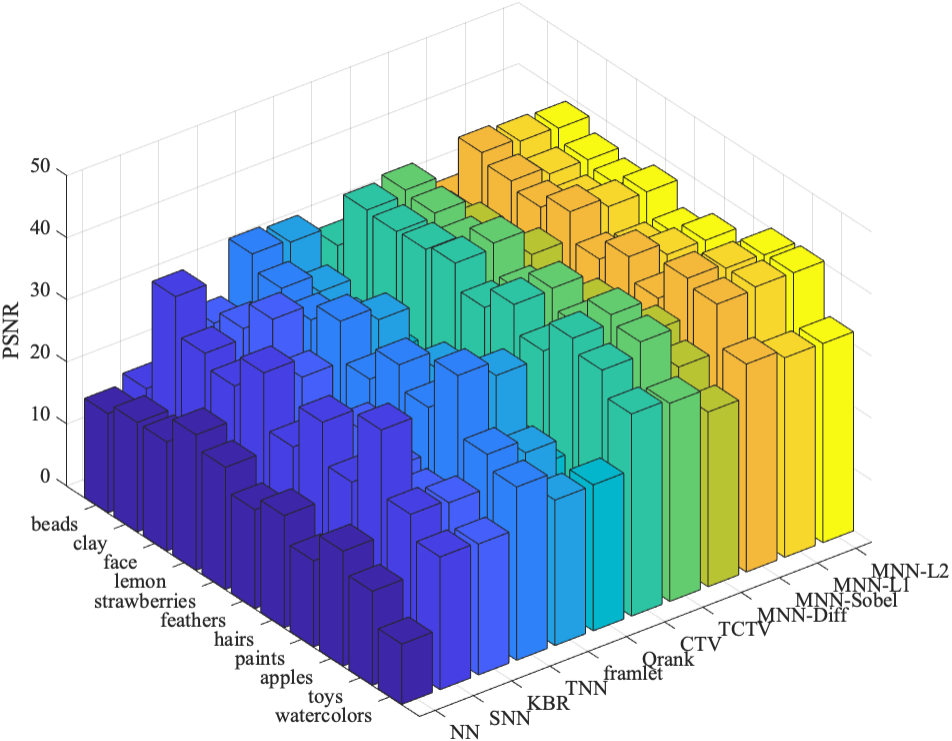}&
\includegraphics[width=38mm, height = 30mm]{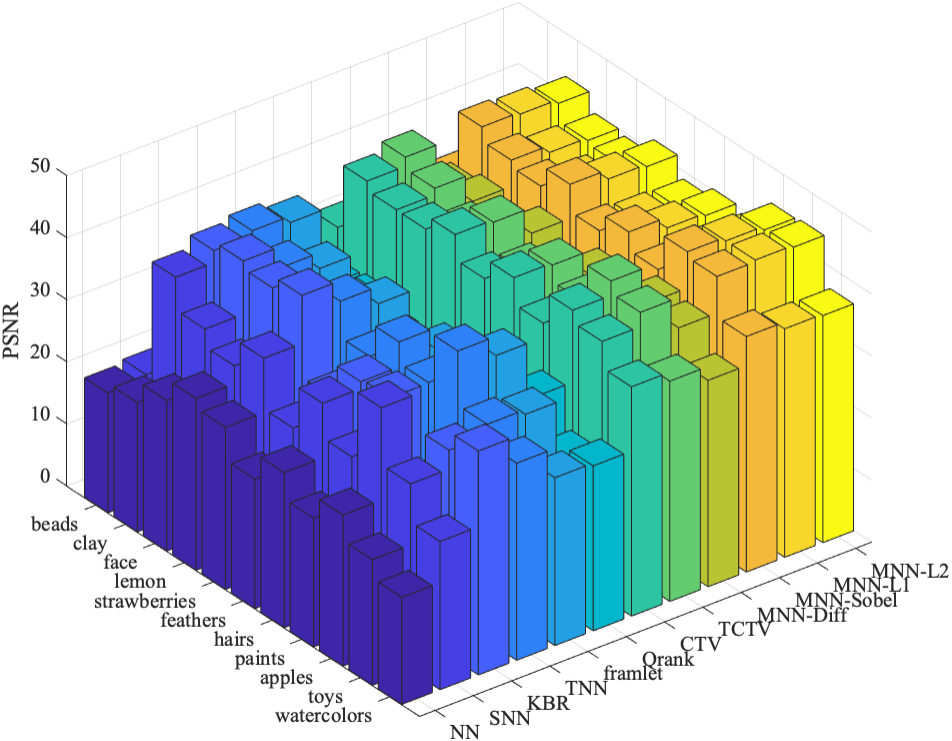}&
\includegraphics[width=38mm, height = 30mm]{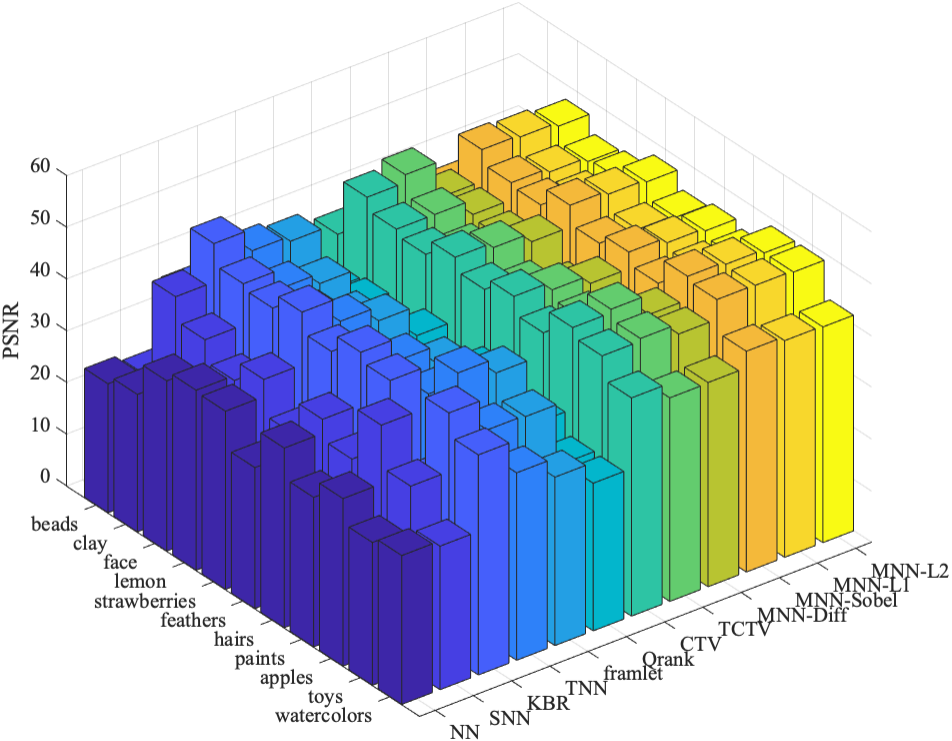}\\
\scriptsize{MC: sr=0.02} & \scriptsize{MC: sr=0.05} & \scriptsize{MC: sr=0.1} & \scriptsize{MC: sr=0.2} \\
\end{tabular}
\vspace{-2mm}
\caption{Performance comparison in terms of PSNR of recovered multispectral images obtained by all competing method under MC tasks.}\label{mc_msi}
\vspace{-0.2cm}
\end{figure*}

\begin{figure*}[!h]
\renewcommand{\arraystretch}{0.5}
\setlength\tabcolsep{0.5pt}
\centering
\begin{tabular}{c c c c}
\centering
\includegraphics[width=38mm, height = 30mm]{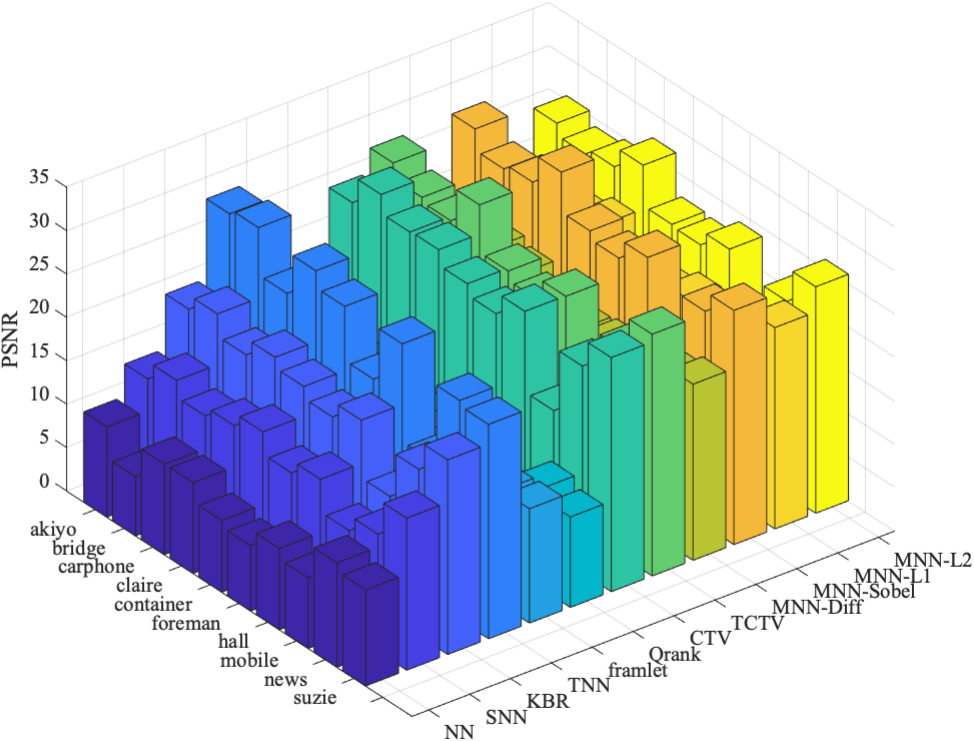}&
\includegraphics[width=38mm, height = 30mm]{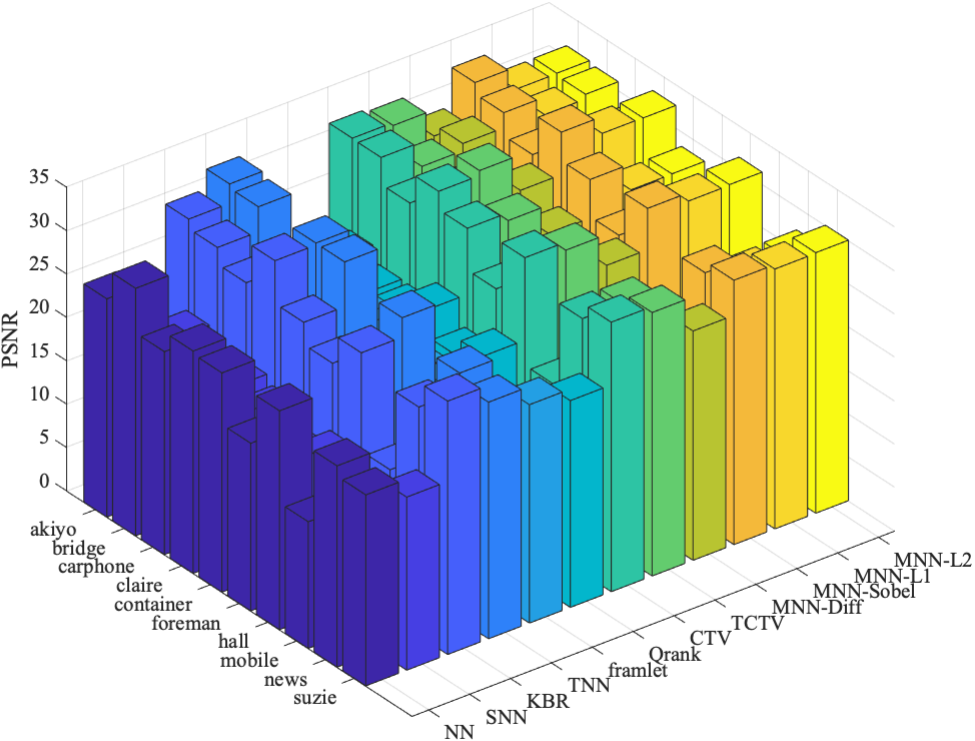}&
\includegraphics[width=38mm, height = 30mm]{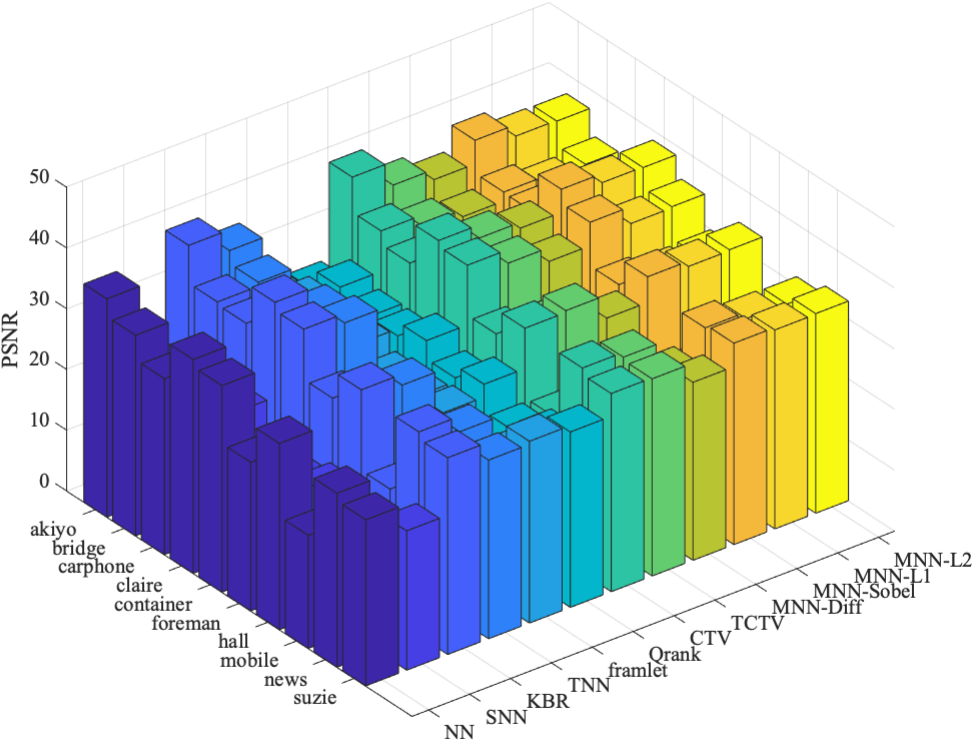}&
\includegraphics[width=38mm, height = 30mm]{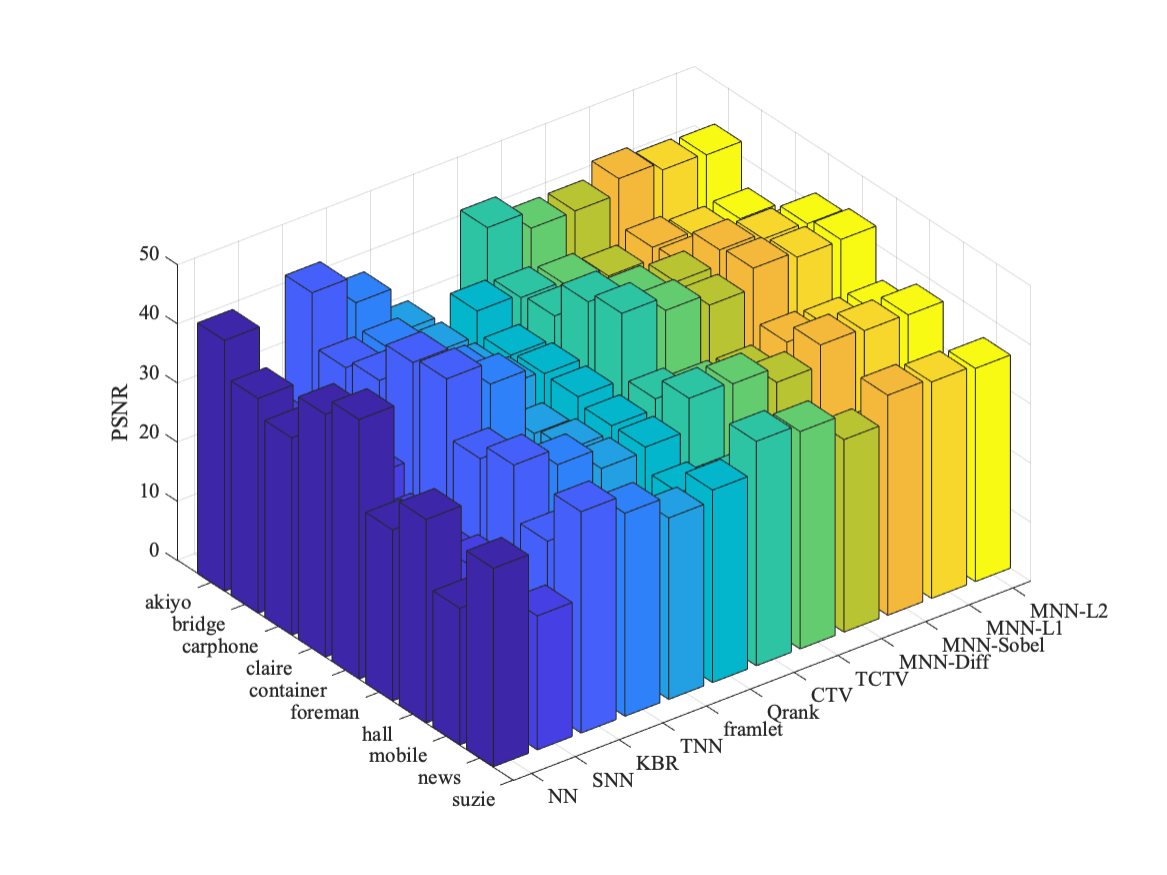}\\
\scriptsize{MC: sr=0.02} & \scriptsize{MC: sr=0.05} & \scriptsize{MC: sr=0.1} & \scriptsize{MC: sr=0.2} \\
\end{tabular}
\vspace{-2mm}
\caption{Performance comparison in terms of PSNR of recovered color videos obtained by all competing method under MC tasks.}\label{mc_rgb}
\vspace{-0.2cm}
\end{figure*}

\begin{figure*}[!h]
\renewcommand{\arraystretch}{0.5}
\setlength\tabcolsep{0.5pt}
\centering
\begin{tabular}{c c c c}
\centering
\includegraphics[width=38mm, height = 30mm]{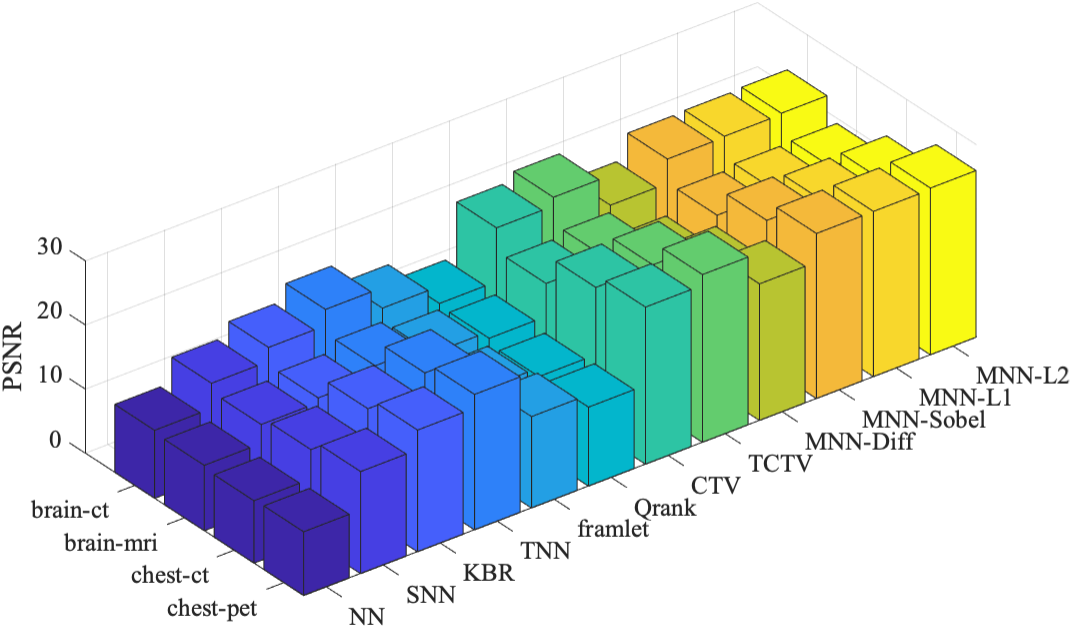}&
\includegraphics[width=38mm, height = 30mm]{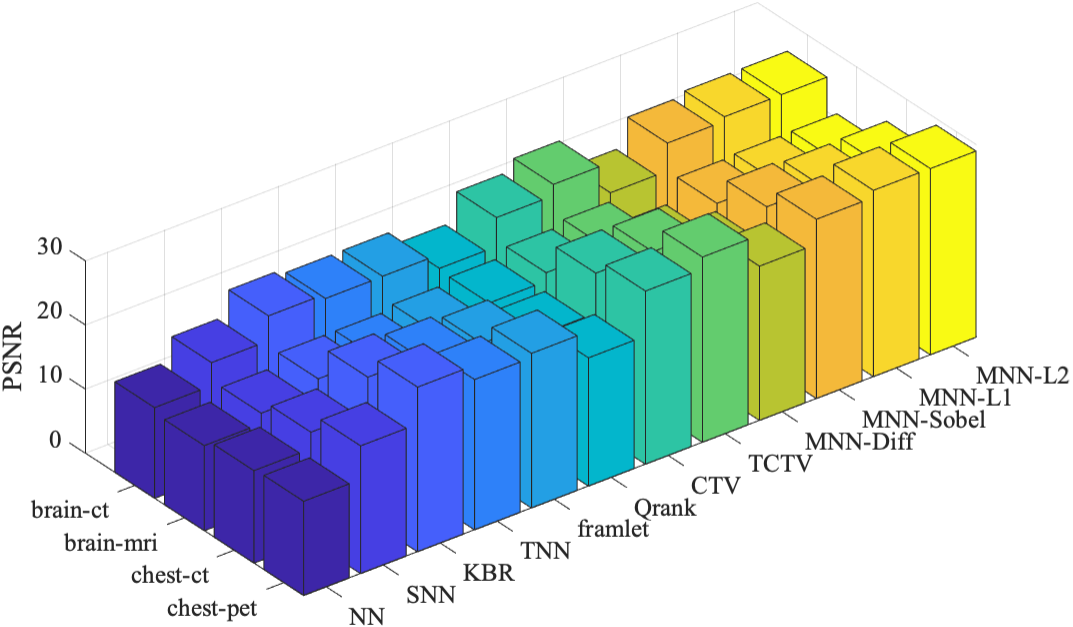}&
\includegraphics[width=38mm, height = 30mm]{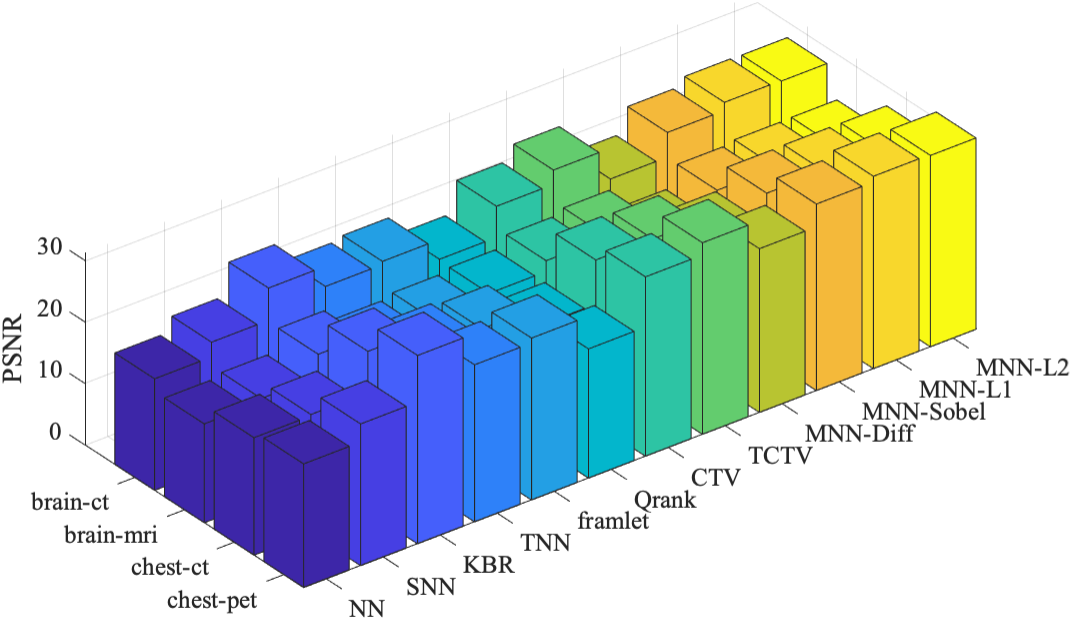}&
\includegraphics[width=38mm, height = 30mm]{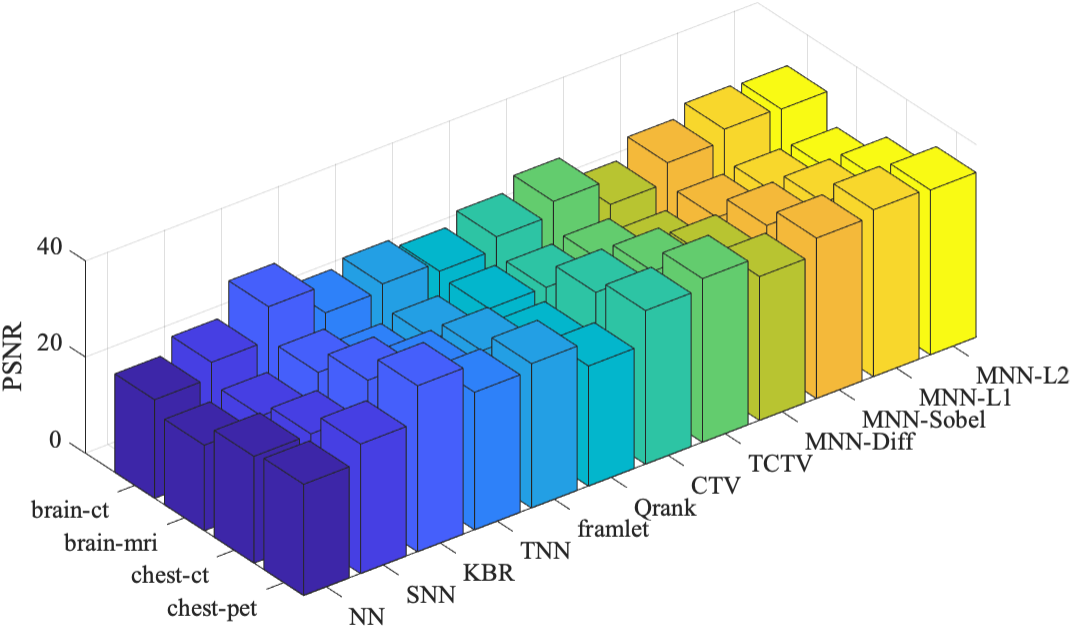}\\
\scriptsize{MC: sr=0.02} & \scriptsize{MC: sr=0.05} & \scriptsize{MC: sr=0.1} & \scriptsize{MC: sr=0.2} \\
\end{tabular}
\vspace{-2mm}
\caption{Performance comparison in terms of PSNR of recovered MRI and CT data obtained by all competing method under MC tasks.}\label{mc_ct}
\vspace{-0.2cm}
\end{figure*}

Figure \ref{rpca_hsi} to Figure \ref{rpca_rgb} sequentially display histograms of restoration metric distributions for all methods on multiple datasets for denoising tasks under three categories: HSI, MSI, and Color Video. From these histograms, it can be observed that the PSNR values of our four MNN-based models consistently remain high, especially MNN-Sobel and MNN-L2. Additionally, we also notice that for a few datasets and noise levels, the PSNR values of TCTV are higher than those of the four MNN-based models and significantly higher than CTV. This indicates the necessity of employing tensor tools for modeling tensor data. Therefore, in the future, we will consider further extending the MNN framework to tensor structures.

Figures \ref{mc_hsi} to \ref{mc_ct} sequentially display histograms of restoration metric distributions for all methods on multiple datasets for completing tasks under four categories: HSI, MSI, Color Video, and CT. Similar to the denoising results, the four MNN-based models perform excellently on the vast majority of datasets and missing rates, consistent with the mean PSNR results presented in Table 2 of the main text. Additionally, we also observe that when the sampling rate is relatively high, such as when the sampling rate (sr) is set to 0.2, KBR based on tensor decomposition exhibits outstanding PSNR performance, even surpassing MNN-Sobel, MNN-L1, and MNN-L2 on a few datasets. This indicates that tensor modeling still holds certain advantages.

}

\bibliographystyle{IEEEtran}
\bibliography{mybibfile}

\end{document}